\definecolor{light-gray}{gray}{0.85}
\definecolor{light-gray}{gray}{0.85}
\colorlet{linkequation}{blue}
\newcommand{\vect}[1]{\ensuremath{\mathbf{#1}}}
\newcommand{\mat}[1]{\ensuremath{\mathbf{#1}}}
\newcommand{\argmin}{\mathop{\mathrm{argmin}}}
\newcommand{\argmax}{\mathop{\mathrm{argmax}}}
\renewcommand{\det}{\mathrm{det}}
\newcommand{\trans}{^{\top}}
\newcommand{\E}{\mathbb{E}}
\renewcommand{\Pr}{\mathbb{P}}
\newcommand{\Var}{\text{Var}}
\newcommand{\paren}[1]{{\left( #1 \right)}}
\newcommand{\one}{{\mathbf1}}
\newcommand{\Tcal}{\mathcal{T}}
\newcommand{\Xcal}{\mathcal{X}}
\newcommand{\Vcal}{\mathcal{V}}
\newcommand{\fhat}{\hat{f}}
\newcommand{\Qstar}{Q^{\star}}
\newcommand{\cC}{\mathcal{C}}
\newcommand{\tlO}{\mathcal{\tilde{O}}}
\newcommand{\N}{\mathbb{N}}
\newcommand{\R}{\mathbb{R}}
\newcommand{\I}{\mat{I}}
\newcommand{\V}{\mat{V}}
\newcommand{\x}{\vect{x}}
\newcommand{\z}{\vect{z}}
\newcommand{\cS}{\mathcal{S}}
\newcommand{\cA}{\mathcal{A}}
\newcommand{\cF}{\mathcal{F}}
\newcommand{\cN}{\mathcal{N}}
\newcommand{\cH}{\mathcal{H}}
\newcommand{\cT}{\mathcal{T}}
\newtheorem{theorem}{Theorem}
\newtheorem{lemma}[theorem]{Lemma}
\newtheorem{corollary}[theorem]{Corollary}
\newtheorem{assumption}[theorem]{Assumption}
\newtheorem{proposition}[theorem]{Proposition}
\newtheorem{remark}[theorem]{Remark}
\newtheorem{definition}[theorem]{Definition}
\newcommand{\Fcal}{\mathcal{F}}
\newcommand{\Dcal}{\mathcal{D}}
\newcommand{\Acal}{\mathcal{A}}
\newcommand{\Scal}{\mathcal{S}}
\newcommand{\Lcal}{\mathcal{L}}
\newcommand{\Gcal}{\mathcal{G}}
\newcommand{\Zcal}{\mathcal{Z}}
\newcommand{\Ocal}{\mathcal{O}}
\newcommand{\Ecal}{\mathcal{E}}
\newcommand{\EcalII}{\mathcal{E}_{\textrm{V}}}
\newcommand{\Bcal}{\mathcal{B}}
\newcommand{\Ncal}{\mathcal{N}}
\newcommand{\reg}{ {\mathrm{Reg}}}
\newcommand{\T}{{\mathbb{T}}}
\newcommand{\Mcal}{\mathcal{M}}
\newcommand{\pistar}{\pi^\star}
\newcommand{\olive}{\textsc{Olive}}
\newcommand{\golf}{\textsc{Golf}}
\newcommand{\algx}{\golf}
\newcommand{\eleanor}{\textsc{Eleanor}}
\newcommand{\eludim}{\dim_\mathrm{E}}
\newcommand{\dedim}{\dim_\mathrm{DE}}
\newcommand{\BEdim}{\dim_\mathrm{BE}}
\newcommand{\deber}{\textsf{DE}}
\newcommand{\be}{\textsf{BE}}
\newcommand{\bedim}{$\dim_{\rm{BE}}$}
\newcommand{\bedimII}{$\dim_{\rm{VBE}}$}
\newcommand{\dirac}{\Delta}
\newcommand{\Ffrak}{\mathfrak{F}}
\newcommand{\hatberrII}[1]{\hat{\Ecal}_{\textrm V}(#1)} 
\newcommand{\tildeberrII}[1]{\tilde{\Ecal}_{\textrm V}(#1)}
\newcommand{\nact}{n_\text{act}}
\newcommand{\nelim}{n_\text{elim}}
\newcommand{\zetaact}{\zeta_\text{act}}
\newcommand{\zetaelim}{\zeta_\text{elim}}
\newcommand{\berr}[1]{\Ecal(#1)}
\newcommand{\pit}[1]{\pi_{#1}}
\newcommand{\hatberr}[1]{\hat \Ecal(#1)} 
\newcommand{\dlin}{d_{\text{lin}}}
\newcommand{\dE}{d_{\text{E}}}
\begin{document}

\title{Bellman Eluder Dimension: New Rich Classes of RL Problems, and Sample-Efficient Algorithms}

\author{%
 Chi Jin  \thanks{Princeton University. Email: \texttt{chij@princeton.edu}}
 \and
 Qinghua Liu
 \thanks{Princeton University. Email: \texttt{qinghual@princeton.edu}}
 \and
 Sobhan Miryoosefi  \thanks{Princeton University. Email: \texttt{miryoosefi@cs.princeton.edu}}
}

\date{February 1, 2021;\quad Revised: June 12, 2021 }

\maketitle

 \begin{abstract}
	% !TEX root = ../colt2021.tex

% Function approximation has played a key role in the empirical success of reinforcement learning with enormous state space. 
% This paper is devoted to further advancing our theoretical understanding of reinforcement learning with general function approximation.
% Specifically, we introduce a new complexity notion named Bellman Eluder (BE) dimension for measuring the learnability of function approximation settings. 
% To demonstrate the generality of BE dimension, we prove that it strictly subsumes two existing well-known complexity notions---Bellman rank and Eluder dimension. 
% Moreover, we propose a new sample-efficient algorithm \golf\ that can provably learn reinforcement learning problems of low BE dimension using a number of samples linear in the BE dimension but independent of the size of the state space. 
% Notably, its regret (sample complexity) guarantees either improve or match the best known results in several well-studied function approximation settings such as  the linear function approximation setting, the low Eluder dimension setting, and the low Bellman rank setting.
% Finally, we provide a new and simpler proof for algorithm  \olive\ \citep{jiang2017contextual} from the perspective of BE dimension. Interestingly, although \cite{jiang2017contextual} design \olive\ specially for problems with low Bellman rank, our analysis reveals that it can actually apply to a broader family of problems whose Bellman rank can be arbitrarily large.

Finding the minimal structural assumptions that empower sample-efficient learning is one of the most important research directions in  Reinforcement Learning (RL). This paper advances our understanding of this fundamental question by introducing a new complexity measure---Bellman Eluder (BE) dimension. We show that the family of RL problems of low BE dimension is remarkably rich, which subsumes a vast majority of existing tractable RL problems including but not limited to tabular MDPs, linear MDPs, reactive POMDPs, low Bellman rank problems as well as low Eluder dimension problems. This paper further designs a new optimization-based algorithm---\golf, and reanalyzes a hypothesis elimination-based algorithm---{\olive} \citep[proposed in][]{jiang2017contextual}. We prove that both algorithms learn the near-optimal policies of low BE dimension problems in a number of samples that is polynomial in all relevant parameters, but independent of the size of state-action space. Our regret and sample complexity results match or improve the best existing results for several well-known subclasses of low BE dimension problems.
 \end{abstract}

\tableofcontents{}

\section{Introduction}
\label{sec:intro}
% !TEX root = arxiv_v4.tex

Modern Reinforcement Learning (RL) commonly engages practical problems with an enormous number of states, where \emph{function approximation}
must be deployed to approximate the true value function using functions from a prespecified function class. Function approximation, especially based on deep neural networks, lies at the heart of the recent practical successes of RL in domains such as Atari \citep{mnih2013playing}, Go \citep{silver2016mastering}, robotics \citep{kober2013reinforcement}, and dialogue systems \citep{li2016deep}.

Despite its empirical success, RL with function approximation raises a new series of theoretical challenges when comparing to the classic tabular RL: (1) \emph{generalization}, to generalize knowledge from the visited states to the unvisited states due to the enormous state space. (2) \emph{limited expressiveness}, to handle the complicated issues where true value functions or intermediate steps computed in the algorithm can be functions outside the prespecified function class. (3) \emph{exploration}, to address the tradeoff between exploration and exploitation when above challenges are present. 

Consequently, most existing theoretical results on efficient RL with function approximation rely on relatively strong structural assumptions. For instance, many require that the MDP admits a linear approximation \citep{wang2019optimism,jin2020provably,zanette2020learning}, or that the model is precisely Linear Quadratic Regulator (LQR) \citep{anderson2007optimal,fazel2018global,dean2019sample}. Most of these structural assumptions rarely hold in practical applications. This naturally leads to one of the most fundamental questions in RL.
\begin{center}
\textbf{What are the minimal structural assumptions that empower sample-efficient RL?}
\end{center}

We advance our understanding of this grand question via the following two steps: (1) identify a rich class of RL problems (with weak structural assumptions) that cover many practical applications of interests; (2) design sample-efficient algorithms that provably learn any RL problem in this class.

% To advance our understanding of this grand question, we can further divide it into two more concrete questions: (1) Can we identify a rich class of RL problems (thus with weak structral assumption) that cover many practical applications of interests? (2) Can we design sample-efficient algorithms that provably learn any RL problem in this rich class? This paper precisely address the latter two questions.

The attempts to find weak or minimal structural assumptions that allow statistical learning can be traced in supervised learning where VC dimension \citep{vapnik2013nature} or Rademacher complexity \citep{bartlett2002rademacher} is proposed, or in online learning where Littlestone dimension \citep{littlestone1988learning} or sequential Rademacher complexity \citep{rakhlin2010online} is developed. 
% \chijin{add citations}

In the area of reinforcement learning, there are two intriguing lines of recent works that have made significant progress in this direction. 
To begin with,  \citet{jiang2017contextual} introduces a generic complexity notion---Bellman rank, which can be proved small for many RL problems including linear MDPs \citep{jin2020provably}, reactive POMDPs \citep{krishnamurthy2016pac}, etc. \cite{jiang2017contextual} further propose an hypothesis elimination-based algorithm---\olive\ for sample-efficient learning of problems with low Bellman rank. On the other hand,  recent work by \citet{wang2020provably} considers general function approximation with low Eluder dimension \citep{russo2013eluder}, and designs a UCB-style algorithm with regret guarantee. Noticeably, generalized linear MDPs \citep{wang2019optimism} and kernel MDPs (see Appendix \ref{app:examples}) are subclasses of low Eluder dimension problems, but not low Bellman rank.
 % (see relation in Figure \ref{fig:relations}).
% To our knowledge, the Bellman rank and the Eluder dimension are arguably the two most generic complexity measures in RL, which directly lead to tractability results. However, little is known about the underlying connections between them, and what intrinsically determines sample-efficient RL.

 \begin{figure}
 \centering
  \includegraphics[width=0.6\textwidth]{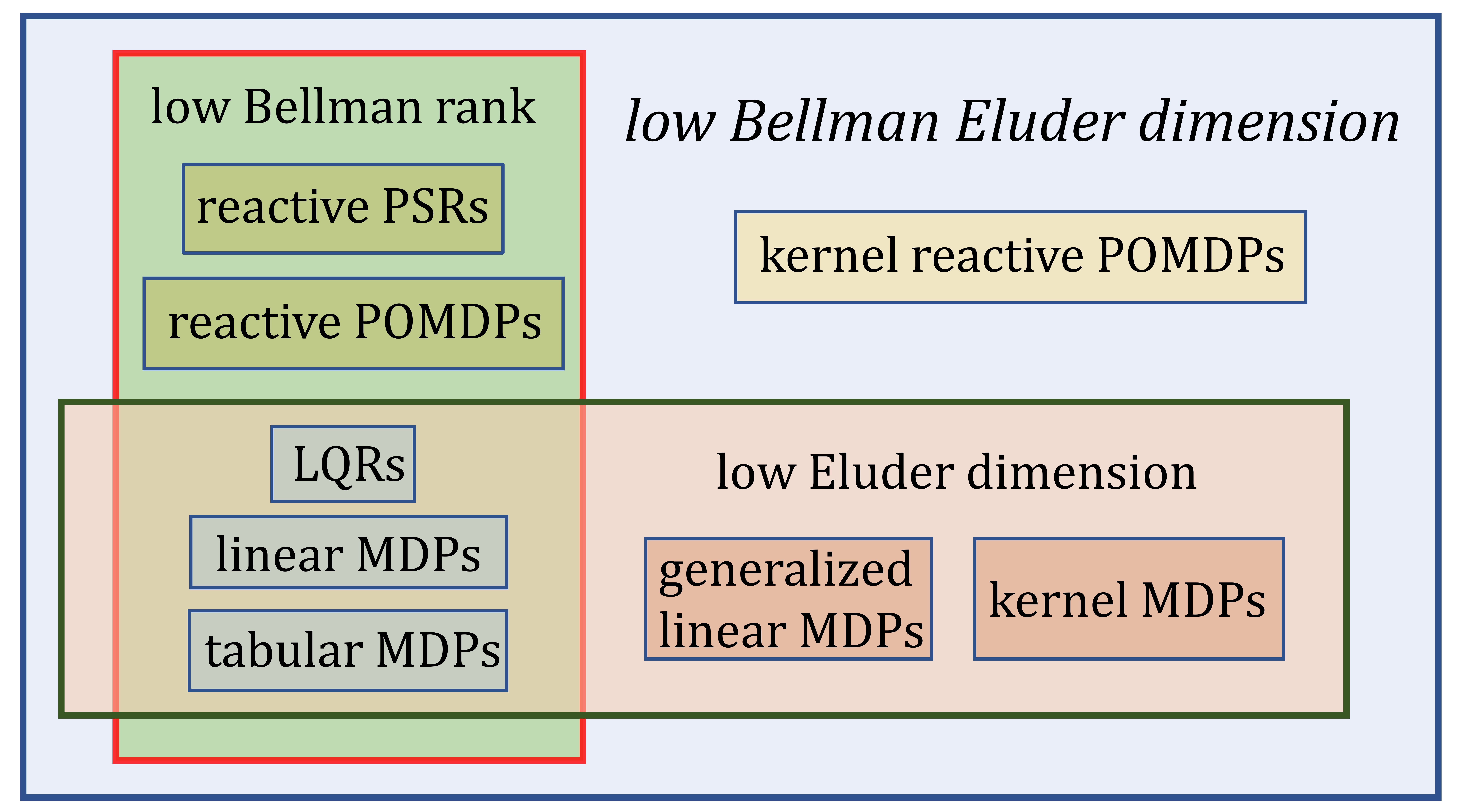}
% \caption{Relations between tabular MDPs, low rank (linear) MDPs, linear function approximation with completeness, low Bellman rank, low Eluder dimension with completeness and low BE dimension.}
\caption{A schematic summarizing relations among families of RL problems\protect\footnotemark}
% \caption{A schematic summarizing all relations discussed in Section \ref{subsec:relations}. \chijin{are we sure LQR not in eluder dimension?}\chijin{add word "low" in front of Bellman rank, XX dimension in the figure, ``BE'' do not use shortage}\qinghua{fixed}}
\label{fig:relations}
 \end{figure}
 \footnotetext{The family of low Bellman rank problems and low Bellman Eluder dimension problems include both Q-type and V-type variants. Please refer to Section \ref{subsec:relations} and Appendix \ref{app:BE-typeII} for more details.}

In this paper, we make the following three contributions.
\begin{itemize}
	\item We introduce a new complexity measure for RL---Bellman Eluder (BE) dimension. We prove that the family of RL problems of low BE dimension is remarkably rich, which subsumes both low Bellman rank problems and low Eluder dimension problems---two arguably most generic tractable function classes so far in the literature  (see Figure \ref{fig:relations}). The family of low BE dimension further includes new problems such as kernel reactive POMDPs (see Appendix \ref{app:examples}) which were not known to be sample-efficiently learnable.

	\item We design a new optimization-based algorithm---\golf, which provably learns near-optimal policies of low BE dimension problems in a number of samples that is polynomial in all relevant parameters, but independent of the size of state-action space. Our regret or sample complexity guarantees match \cite{zanette2020learning} which is minimax optimal when specified to the linear setting. Our rates further improve upon \cite{jiang2017contextual,wang2020provably} in low Bellman rank  and low Eluder dimension settings,  respectively.
	% for learning problems with low \be\ dimension. When being specialized to the linear setting, it recovers the best known guarantees in \cite{zanette2020learning}. 
	% And for problems with low Bellman rank, it provides better regret and sample complexity guarantees than \cite{jiang2017contextual,dong2020root} at the cost of additionally assuming the function class satisfies   completeness assumption. 
	% Finally, for the low Eluder dimension setting, it achieves much better regret guarantees under strictly weaker assumptions in comparison to \cite{wang2020provably}.
	% \item For measuring learnability, we introduce a more generic complexity notion, the \be\ dimension, for RL with FA. And we prove that the \be\ dimension subsumes both the Bellman rank and the Eluder dimension as special cases. Moreover, we show that the \be\ dimension is strictly more general than either of them by constructing some simple examples.
	\item We reanalyze the hypothesis elimination based algorithm---{\olive} proposed in \cite{jiang2017contextual}. We show it can also learn RL problems with low BE dimension sample-efficiently, under slightly weaker assumptions but with worse sample complexity comparing to {\golf}.
	% provide a new and simpler proof for \olive\ from the perspective of \be\ dimension. Interestingly, our analysis reveals that \olive\ can actually apply to a broader family of problems that may not have low Bellman rank, and thus expands our knowledge of \olive.
	% \item Algorithmically, we design a new algorithm \algx\ for learning problems with low \be\ dimension. When being specialized to the linear setting, it recovers the best known guarantees in \cite{zanette2020learning}. 
	% And for problems with low Bellman rank, it provides better regret and sample complexity guarantees than \cite{jiang2017contextual,dong2020root} at the cost of additionally assuming the function class satisfies   completeness assumption. 
	% Finally, for the low Eluder dimension setting, it achieves much better regret guarantees under strictly weaker assumptions in comparison to \cite{wang2020provably}.
\end{itemize}

\subsection{Related works}
\label{sec:relat}
This section reviews prior theoretical works on RL, under Markov Decision Process (MDP) models.
% In this section, we review prior works on RL. For the scope of this paper, we focus on the works studying Markov Decision Process (MDP)

We remark that there has been a long line of research on function approximation in the \emph{batch RL} setting \citep[see, e.g.,][]{szepesvari2005finite,munos2008finite,chen2019information,xie2020batch}.
In this setting, agents are provided with exploratory data or simulator, so that they do not need to explicitly address the challenge of exploration. 
In this paper, we do not make such assumption, and attack the exploration problem directly. In the following we focus exclusively on the RL results in the general setting where exploration is required.

\paragraph{Tabular RL.}
Tabular RL concerns MDPs with a small number of states and actions, which has been thoroughly studied in recent years
\citep[see, e.g.,][]{brafman2002r,jaksch2010near,dann2015sample,agrawal2017optimistic,azar2017minimax,zanette2019tighter,jin2018q,zhang2020almost}. 
In the episodic setting with non-stationary dynamics, 
the best regret bound $\tilde{\Ocal}(\sqrt{H^2|\Scal||\Acal|T})$ is achieved by both model-based \citep{azar2017minimax} and model-free \citep{zhang2020almost} algorithms. 
Moreover, the bound is proved to be minimax-optimal \citep{jin2018q,domingues2021episodic}. 
This minimax bound suggests that when the state-action space is enormous, RL is information-theoretically hard without further structural assumptions.

% This minimax bound essentially suggests that 
% when the state-action space is exponentially large (which is common in practice \citep{kober2013reinforcement,li2016deep,silver2016mastering}), RL without further assumptions is information-theoretically hard in general.

\paragraph{RL with linear function approximation.}
A recent line of work 
 studies RL with linear function approximation \citep[see, e.g.,][]{jin2020provably,wang2019optimism,cai2019provably,zanette2020learning,zanette2020provably,agarwal2020flambe,neu2020unifying,sun2019model}
 These papers assume certain completeness conditions, as well as the optimal value function can be well approximated by linear functions. Under one formulation of linear approximation, the minimax regret bound $\tilde{\Ocal}(d\sqrt{T})$ is achieved by algorithm \eleanor\  \citep{zanette2020learning}, where $d$ is the ambient dimension of the feature space.
 % , and we note that the regret is independent of the size of state-action space.

 % assume a feature extractor from state-action space to $\R^d$ is given, and the value function or the model can be well approximated by some linear function of the feature. Under this assumption, they design algorithms with regret bounds that scale polynomially with the ambient $d$ instead of the size of the state-action space.
 % The minimax regret bound for this setting $\tilde{\Ocal}(d\sqrt{T})$ is achieved by algorithm \eleanor\  \citep{zanette2020learning}. 
 % In this paper, we consider the general function approximation setting that includes the linear setting as a special case, and we do not assume having access to any feature extractor.  
 % Moreover, we design an algorithm \algx\ that can be viewed as a general version of \eleanor\ while it avoids \eleanor's dependence on the feature extractor. When applied to the linear function approximation setting, \algx\ enjoys the optimal regret $\tilde{\Ocal}(d\sqrt{T})$ as \eleanor.
 
\vspace{-0.6ex}

\paragraph{RL with general function approximation.}
% Beyond the linear setting, there is a flurry line of research studying RL with general function approximation
% \citep[see, e.g.,][]{szepesvari2005finite,munos2008finite,osband2014model,jiang2017contextual,sun2019model,chen2019information,dong2020root,ayoub2020model,xie2020batch,wang2020provably,foster2020instance}. Among them, \cite{jiang2017contextual} and \cite{wang2020provably} are the closest to our work.
Beyond the linear setting, there is a flurry line of research studying RL with general function approximation
\citep[see, e.g.,][]{osband2014model,jiang2017contextual,sun2019model,dong2020root,wang2020provably,yang2020bridging,foster2020instance}. Among them, \cite{jiang2017contextual} and \cite{wang2020provably} are the closest to our work.

\citet{jiang2017contextual} propose a complexity measure named Bellman rank and design an algorithm \olive\ with  PAC guarantees for problems with low Bellman rank. We note that low Bellman rank is a special case of low BE dimension. When specialized to the low Bellman rank setting, our result  for {\olive} exactly matches the guarantee in \cite{jiang2017contextual}. Our result for {\golf} requires an additional completeness assumption, but provides sharper sample complexity guarantee.

\citet{wang2020provably} propose a UCB-type algorithm with a regret guarantee under the assumption that the function class has a low eluder dimension. Again, we will show that low Eluder dimension is a special case of low BE dimension. Comparing to \cite{wang2020provably}, our algorithm {\golf} works under a weaker completeness assumption, with a better regret guarantee.

\vspace{-0.6ex}

\paragraph{Relation to bilinear classes} Concurrent to this work, \citet{du2021bilinear} propose a new general tractable class of RL problems---bilinear class with low effective dimension (also known as low critical information gain in \citet{du2021bilinear}). We comment on the similarities and differences between two works as follows.

In terms of algorithms, both Algorithm \ref{alg:olive} in this paper and the algorithm proposed in \citet{du2021bilinear} are based on \olive~originally proposed in \citet{jiang2017contextual}. The two algorithms share similar guarantees in terms of assumptions and complexity results. More importantly, our work further develops a new type of algorithm for general function approximation---\golf, a natural and clean algorithm which can be viewed as an optimistic version of classical algorithm---Fitted Q-Iteration \citep{szepesvari2010algorithms}. \golf~gives much sharper sample complexity guarantees compared to \cite{du2021bilinear} ~for various settings, and is minimax-optimal when applied to the linear setting \citep{zanette2020learning}.

In terms of richness of new classes identified, it depends on (a) what structure of MDP the complexity measures are applied to, and (b) what complexity measures are used. For (a), BE dimension applies to the Bellman error, while the bilinear class allows  general surrogate losses of the Bellman error. For (b), this paper uses Eluder dimension while \citet{du2021bilinear} uses effective dimension. 
% It can be shown that in general, low effective dimension class is a  subclass of low Eluder dimension class (see Appendix \ref{subsec:eff-dim}). 
It can be shown that low effective dimension always implies low Eluder dimension (see Appendix \ref{subsec:eff-dim}). 
In short, \citet{du2021bilinear} is more general in (a), while our work is more general in (b). As a result, neither work fully captures the other.
% neither low BE dimension nor the class identified in \citet{du2021bilinear} fully capture each other.

In particular, our BE framework covers a majority of the examples identified in \citet{du2021bilinear} including
low occupancy complexity, linear $Q^\star/V^\star$, $Q^\star$ state aggregation, feature selection/FLAMBE. Nevertheless, our work can not address examples with model-based function approximation (e.g., low witness rank \cite{sun2019model}) while \cite{du2021bilinear} can. On the other hand, \citet{du2021bilinear} can not address the class of RL problems with low Eluder dimension \citep{wang2020provably} while our work can. Moreover, for several classes of RL problems that both works cover, our complexity measure is sharper. For example, in the setting of function approximation with generalized linear functions, the BE dimension is $\tilde{O}(d)$ where $d$ is the ambient dimension of the feature vectors, while the effective dimension under the generalized bilinear framework of \citet{du2021bilinear} is at least $\tilde{\Omega}(d^2)$.

\section{Preliminaries}
\label{sec:prelim}
% !TEX root = ../colt2021.tex

We consider episodic Markov Decision Process (MDP), denoted by 
	$\Mcal = (\Scal,\Acal,H,\Pr,r)$, where $\Scal$ is the state space, $\Acal$ is the action space, $H$ is the number of steps in each episode, $\Pr=\{\Pr_h\}_{h\in[H]}$ is the collection of transition measures with $\Pr_h(s'\mid s,a)$ equal to the probability of transiting to $s'$ after taking action $a$ at state $s$ at the $h^{\text{th}}$ step, and $r=\{r_h\}_{h\in[H]}$ is the collection of reward functions with $r_h(s,a)$ equal to the deterministic reward received  after taking action $a$ at state $s$ at the $h^{\text{th}}$ step.
	\footnote{We study deterministic reward  for notational simplicity. Our results readily generalize to random rewards.} 
%	 For notational convenience, throughout the paper we assume the state space admits \emph{disjoint layered} structure
%	 $\Scal = \Scal_1 \cup \Scal_2 \cup \dots \cup \Scal \cup \Scal_{H+1}$ where $\Scal$ denotes the state space for the $h^{\rm th}$ step and 
%	 $\Scal \cap \Scal_{h'}=\emptyset$ if $h \neq h'$.
	 %\footnote{This can be easily satisfied by defining $(s,h)$ as the state.} 
	 % We make  the following regularity assumption on the reward function.
% \begin{assumption}[Reward regularity]
% 	We assume that $\sum_{h=1}^H r_h \leq 1$ and $r_h \geq 0$ for all  $h \in [H]$ almost surely, no matter how $a_1,\dots,a_H$ are chosen.
% \end{assumption}	 
% \paragraph{Reward regularity}	
Throughout this paper, we assume reward is non-negative, and 
% the culmulative reward is bounded by $1$, i.e.
% $r_h \geq 0$ for all $h \in [H]$. We also assume
% the cumulative rewards is always bounded by $1$, i.e. 
$\sum_{h=1}^H r_h(s_h, a_h) \leq 1$  for all possible sequence $(s_1, a_1,\dots, s_H, a_H)$.

%	 System descriptor 
%	$P: \Scal \times \Acal \rightarrow \Delta(\mathbb{R} \times \Scal)$ 
%	determines the distribution over next reward and next state given the current state and action.

%$s_1$ is the initial state\footnote{A fixed and known initial state is without loss of generality.}.
In each episode, the agent starts at a \emph{fixed} initial state $s_1$. 
Then, at each step $h\in[H]$, the agent observes its current state $s_h$, takes action $a_h$, receives reward $r_h(s_h,a_h)$, and causes the environment to transit to $s_{h+1}\sim \Pr_h(\cdot\mid s_h,a_h)$. 
Without loss of generality, we assume there is a terminating state $s_{\text{end}}$ which the environment will \emph{always} transit to  at step $H+1$, and the episode terminates when $s_{\text{end}}$ is reached.
% both are drawn according to system descriptor $P$, i.e. $(r_{h},s_{h+1}) \sim P(s_h,a_h)$.	

%\begin{assumption}[Regularity]
%	We assume that $\sum_{h=1}^H r_h \leq 1$ and $r_h \geq 0$ for all  $h \in [H]$ almost surely, no matter how $a_1,\dots,a_H$ are chosen.
%\end{assumption}

\paragraph{Policy and value functions}
A (deterministic) policy $\pi$  is a collection of $H$ functions $\{\pi_h:\ \Scal\rightarrow \Acal\}_{h=1}^{H}$.
% A policy $\pi=\{\pi_h:\ \Scal\rightarrow \Delta(\Acal)\}_{h=1}^{H}$ is a collection of mappings where each $\pi_h$ maps states in $\Scal$ to distributions over $\Acal$. 
% With slight abuse of notation, we use $\pi_h(s) \in \Acal$ to denote the action that a deterministic policy $\pi$ takes at state $s\in\Scal$ at step $h$. 
We denote $V^\pi_h : \Scal \rightarrow \mathbb{R}$ as the value function at step $h$ for policy $\pi$, so that $V^\pi_h(s)$
gives the expected sum of the remaining rewards received under policy $\pi$, starting from $s_h = s$, till the end of the episode. In symbol,
% Similarly, we denote $V^\pi_h : \Scal \rightarrow \mathbb{R}$ as the $V$-value function at step $h$ for policy $\pi$, where
\begin{equation*}
	V_h^\pi(s):= \E_{\pi}[\sum_{h'=h}^H r_{h'}(s_{h'},a_{h'}) \mid s_h = s ].
\end{equation*}
Similarly, we denote $Q^\pi_h : \Scal \times \Acal \rightarrow \mathbb{R}$ as the $Q$-value function at step $h$ for policy $\pi$, where
\begin{equation*}
	Q_h^\pi(s,a):= \E_{\pi}[\sum_{h'=h}^H r_{h'}(s_{h'},a_{h'}) \mid s_h = s, a_h=a ].
\end{equation*}

% where $a_{i:j} \sim \pi$ is an abbreviation for $a_i \sim \pi_i(s_i),\dots,a_j \sim \pi_j(s_j)$. 
%The expectation is taken over the random process induced by policy $\pi$ and system descriptor $P$ as described above. Since our $\Scal$ is partitioned, we might suppress the subscript $h$ for brevity. 

There exists an optimal policy $\pistar$, which gives the optimal value function for all states \citep{puterman2014markov}, in the sense, $V^{\pistar}_h(s) = \sup_{\pi} V^\pi_h(s)$ for all $h\in[H]$ and $s\in\Scal$. 
For notational simplicity, we abbreviate $V^{\pistar}$ as $V^\star$. We similarly define the optimal $Q$-value function as 
$Q^{\star}$. Recall that $Q^{\star}$ satisfies the Bellman optimality equation:
\begin{equation} \label{eq:bellman_op}
Q^\star_h(s, a) = (\Tcal_h Q^\star_{h+1})(s,a) := r_h(s,a) + \E_{s' \sim \Pr_h(\cdot\mid s,a)}\max_{a' \in \Acal} Q^\star_{h+1}(s',a').
\end{equation}
for all $(s, a, h) \in \cS \times \cA \times [H]$. We also call $\Tcal_h$ the \emph{Bellman operator} at step $h$.

% \paragraph{Bellman equation}

% We also denote $\Qstar(s,a)=Q^{\pistar}(s,a)$ and $\Vstar(s) = V^{\pistar}(s)$. Note that there always exists a deterministic optimal policy.

\paragraph{$\epsilon$-optimality and regret} 
We say a policy $\pi$ is $\epsilon$-optimal if $V^\pi_1(s_1)\ge V^\star_1(s_1)-\epsilon$. 
Suppose an agent interacts with the environment for $K$ episodes. Denote by $\pi^k$ the policy  the agent follows in episode $k \in [K]$. The (accumulative) regret is defined as 
% Learner's goal is to minimize the \emph{regret} defined as
%\footnote{It's also common in literature for regret to be defined as $\sum_{k=1}^K V^\star(s^k_1) - \sum_{k=1}^K \sum_{h=1}^H r^k_h$. However, it can be shown that these two definition will only differ in $\tilde{\Ocal}(\sqrt{KH})$ with high probability using a martingale argument.}:
\begin{equation*}
\reg(K): = \sum_{k=1}^K [V_1^\star(s_1) - V_1^{\pi^k}(s_1)].
\end{equation*}
The objective of reinforcement learning is to find an $\epsilon$-optimal policy within a small number of interactions or to achieve sublinear regret.
% One of the most common goals in reinforcement learning is to design algorithms that can either find an $\epsilon$-optimal policy within polynomial number of interactions or achieve sublinear regret.

\subsection{Function approximation}

In this paper, we consider reinforcement learning with value function approximation. 
Formally, the learner is given a  function class $\Fcal=\Fcal_1\times\cdots\times\Fcal_H$, where $\Fcal_h \subseteq (\Scal \times \Acal \rightarrow [0,1])$ offers a set of candidate functions to approximate $Q^\star_h$---the optimal $Q$-value function at step $h$. Since no reward is collected in the $(H+1)^{\text{th}}$ steps, we always set $f_{H+1} =0$.
% \blue{Since each episode terminates immediately after the agent reaches $s_{\text{end}}$ at step $H+1$, we can assume $\cF_{H+1} = \{0\}$ without loss of generality.}
%\footnote{We assume without loss of generality that for all $s \in \Scal_{H+1}$ and $a \in \Acal$, $f(s,a)\equiv 0$.}
%Since we have no assumption on the cardinality of the state space $\Scal$ (it can be very large or even infinite), we need \emph{function approximation} in order to generalize across states. More precisely, we assume the learner is given a set of functions $\Fcal \subseteq \Scal \times \Acal \rightarrow [0,1]$ that are used to estimate the action-value function (Q-value value function)\footnote{ We assume without loss of generality that for all $s \in \Scal_{H+1}$ and $a \in \Acal$ we have $f(s,a)\equiv 0$.}. 
%We define set of greedy polices induced by $\Fcal$ as $\Pi_{\Fcal} =\{ \pi_f \mid f \in \Fcal\}$ where $\pi_f$ is defined as $\pi_f(s)=\argmax_{a \in \Acal} f(s,a)$. 
% Below, we introduce two common assumptions on $\Fcal$.\footnote{We comment that not all results in this paper require both assumptions; some require none or only one of them; please see the concrete statement of each theorem/proposition/lemma for the assumption(s) they need.} 
% The first assumption---realizability says that the optimal $Q$-value function lies in the given function class.

Reinforcement learning with function approximation in general is extremely challenging without further assumptions (see, e.g., hardness results in \cite{krishnamurthy2016pac,weisz2020exponential}).
Below, we present two assumptions about function approximation that are commonly adopted in the literature.
% \footnote{We remark that many results of this paper only relies on one of those two assumptions or none.} 

\begin{assumption}[Realizability]
\label{asp:realizability}
$\Qstar_h \in \Fcal_h$ for all $h\in[H]$.
\end{assumption}

Realizability requires the function class is well-specified, i.e., function class $\cF$ in fact contains the optimal $Q$-value function $\Qstar$ with no approximation error.
% Define the \emph{Bellman operator} $\Tcal_h$ applied on function $f_{h+1}\in\Fcal_{h+1}$ as
% \begin{equation*}
% 	(\Tcal_h f_{h+1})(s,a) = \E\left[r_h(s,a) + \max_{a' \in \Acal} f_{h+1}(s',a')	\mid (s,a),s' \sim \Pr_h(\cdot\mid s,a)\right].
% \end{equation*}
% The following  assumption---completeness requires the function class $\Fcal$ to be closed under the Bellman operator. 
\begin{assumption}[Completeness]
\label{asp:completeness}
	$\Tcal_h \Fcal_{h+1} \subseteq \Fcal_h$ for all $h\in[H]$. 
	% \chijin{Do we need to assume $f_{H+1} = 0$ in other places?}
\end{assumption}

Note $\Tcal_h\Fcal_{h+1}$ is defined as $\{\Tcal_h f_{h+1}: f_{h+1}\in\Fcal_{h+1}\}$. Completeness requires the function class $\Fcal$ to be closed under the Bellman operator. 

When function class $\cF$ has finite elements, we can use its cardinality $|\cF|$ to measure the ``size'' of function class $\cF$. When  addressing function classes with infinite elements, we need a notion similar to cardinality. We use the standard $\epsilon$-covering number.

\begin{definition}[$\epsilon$-covering number]
	The $\epsilon$-covering number of a set $\mathcal{V}$ under metric $\rho$, denoted as $\Ncal(\mathcal{V}, \epsilon, \rho)$, is the minimum integer $n$ such that there exists a subset $\mathcal{V}_{o} \subset \mathcal{V}$ with $|\mathcal{V}_{o}| = n$, and for any $x \in \mathcal{V}$, there exists $y \in \mathcal{V}_{o}$ such that $\rho(x, y) \le \epsilon$.
	% \qinghua{I replaced $\Acal$ with some other character because it's already used to denote the action set.}
\end{definition}

We refer readers to standard textbooks \citep[see, e.g.,][]{wainwright2019high} for further properties of covering number.
In this paper, we will always apply the covering number on function class $\cF = \Fcal_1\times\cdots\times\Fcal_{H}$, and use metric $\rho(f, g) = \max_h\|f_h-g_h\|_{\infty}$. For notational simplicity, we omit the metric dependence and denote the covering number as $\cN_\cF(\epsilon)$.

\subsection{Eluder dimension} 
\label{sec:eluder_dim}
One class of functions highly related to this paper is the function class of low Eluder dimension \citep{russo2013eluder}.
% \chijin{Insert the definition of eluder dimension here}

\begin{definition}[$\epsilon$-independence between points]
\label{def:ind_points}
	Let $\Gcal$ be a function class defined on $\Xcal$, and	$z$,$x_1,x_2$\\,$\ldots$,$x_n$$\in\Xcal$.
	We say 	$z$ is $\epsilon$-independent of $\{x_1,x_2,\ldots,x_n\}$ with respect to $\Gcal$ if there exist $g_1,g_2\in\Gcal$ such that $\sqrt{\sum_{i=1}^{n} ( g_1(x_i)-g_2(x_i))^2}\le \epsilon$, but $g_1(z)-g_2(z) > \epsilon$. 
	% Further, $z$ is {\bfseries $\epsilon$-independent} of $\{x_1,x_2,\ldots,x_n\}$ with respect to $\Gcal$ if 
	 % $z$ is not $\epsilon$-dependent on $\{x_1,x_2,\ldots,x_n\}$.
\end{definition}

% \begin{definition}[$\epsilon$-independence between points]
% \label{defn:indep}
% 	Let $\Gcal$ be a function class defined on $\Xcal$, and	$z$,$x_1,x_2$\\,$\ldots$,$x_n$$\in\Xcal$.
% 	We say 	$z$ is {\bfseries $\epsilon$-dependent} on $\{x_1,x_2,\ldots,x_n\}$ with respect to $\Gcal$ if any $g_1,g_2\in\Gcal$ satisfying 
% 	$\sqrt{\sum_{i=1}^{n} ( g_1(x_i)-g_2(x_i))^2}\le \epsilon$ also satisfy $g_1(z)-g_2(z)\le \epsilon$. 
% 	Further, $z$ is {\bfseries $\epsilon$-independent} of $\{x_1,x_2,\ldots,x_n\}$ with respect to $\Gcal$ if 
% 	 $z$ is not $\epsilon$-dependent on $\{x_1,x_2,\ldots,x_n\}$.
% \end{definition}

Intuitively, $z$ is independent of $\{x_1,x_2,\ldots,x_n\}$ means if that there exist two ``certifying'' functions $g_1$ and $g_2$, so that their function values are similar at all points $\{x_i\}_{i=1}^n$, but the values are rather different at $z$. 
This independence relation naturally induces the following complexity measure. 

\begin{definition}[Eluder dimension]
\label{def:eluder}
Let $\Gcal$ be a function class defined on $\Xcal$.
	The  Eluder dimension $\eludim(\Gcal,\epsilon)$ is the length of the longest sequence $\{x_1, \ldots, x_n\} \subset \Xcal$ such that there exists $\epsilon'\ge\epsilon$ where $x_i$ is $\epsilon'$-independent of $\{x_1, \ldots, x_{i-1}\}$ for all $i \in [n]$.
\end{definition}

Recall that a vector space has dimension $d$ if and only if $d$ is the length of the longest sequence of elements $\{x_1, \ldots, x_d\}$ such that $x_i$ is linearly independent of $\{x_1, \ldots, x_{i-1}\}$ for all $i \in [n]$. Eluder dimension generalizes the linear independence relation in standard vector space to capture both nonlinear independence and approximate independence, and thus is more general.

\section{Bellman Eluder Dimension}
\label{sec:DEdim}

In this section, we introduce our new complexity measure---Bellman Eluder (BE) dimension. As one of its most important properties, we will show that the family of problems with low BE dimension contains the two existing most general tractable problem classes in RL---problems with low Bellman rank, and problems with low Eluder dimension (see Figure \ref{fig:relations}).
% \chijin{Sobhan, could you fix the grammer of this sentence?}

%  which is defined based on a generalized version of Eluder dimension. 
% We show that for many well-studied reinforcement learning settings (e.g., CDPs with low Bellman rank \citep{jiang2017contextual} or low Eluder dimension \citep{wang2020provably}), the \be\ dimension is small and can thus serve as a more general complexity notion for measuring learnability. 

%Moreover, we provide theoretical guarantees that although algorithm \olive\  is originally designed for CDPs with low Bellman rank, it can actually solve all the low \be\ dimension
%
%and prove that for RL problems with finite \be\ dimension, \olive and  PAC/regret guarantees for problems with finite \be\ dimension.

% \subsection{Definition}
We start by developing a new distributional version of the original Eluder dimension proposed by \citet{russo2013eluder}
(see Section \ref{sec:eluder_dim} for more details).
% Again, we first introduce the notion of \emph{independence} between distributions.
\begin{definition}[$\epsilon$-independence between distributions]
\label{def:ind_dist}
	Let $\Gcal$ be a function class defined on $\Xcal$, and	$\nu,\mu_1,\ldots,\mu_n$ be probability measures over $\Xcal$.
	We say 	$\nu$ is $\epsilon$-independent of $\{\mu_1,\mu_2,\ldots,\mu_n\}$ with respect to $\Gcal$ if there exists $g\in\Gcal$ such that  
	$\sqrt{\sum_{i=1}^{n} ( \E_{\mu_i} [g])^2}\le \epsilon$, but $|\E_{\nu}[g]| > \epsilon$. 
	% Further, $\nu$ is {\bfseries $\epsilon$-independent} of $\{\mu_1,\mu_2,\ldots,\mu_n\}$ with respect to $\Gcal$ if 
	 % $\nu$ is not $\epsilon$-dependent on $\{\mu_1,\mu_2,\ldots,\mu_n\}$.
\end{definition}

% Intuitively, a distribution $\nu$ is dependent on $\{\mu_1,\mu_2,\ldots,\mu_n\}$ means if a function $g\in\Gcal$ has near-zero expectation under all $\mu_i$'s, its expectation under $\nu$ should also be close to zero. 
% The dependence relation above naturally induces the following complexity measure defined for a function class and a distribution family. 

\begin{definition}[Distributional Eluder (DE) dimension]
\label{def:DE}
Let $\Gcal$ be a function class defined on $\Xcal$, and $\Pi$ be a family of probability measures over $\Xcal$. 
	The  distributional Eluder dimension $\dedim(\Gcal,\Pi,\epsilon)$ is the length of the longest sequence $\{\rho_1, \ldots, \rho_n\} \subset \Pi$ such that there exists $\epsilon'\ge\epsilon$ where $\rho_i$ is $\epsilon'$-independent of $\{\rho_1, \ldots, \rho_{i-1}\}$ for all $i\in[n]$.
\end{definition}

% Comparing to the definitions of standard Eluder dimension, Definition \ref{def:ind_dist} generalizes Definition \ref{def:ind_points} to incorporate independence among distributions, by inspecting the expected values of function instead of the function values at single points.
% Similarly, Definition \ref{def:DE} generalizes Definition \ref{def:eluder} using the new defined distributional independence relation, where the distributions are restricted to a certain family $\Pi$. 

Definition \ref{def:ind_dist} and Definition \ref{def:DE} generalize Definition \ref{def:ind_points} and Definition \ref{def:eluder} to their distributional versions, by inspecting the expected values of functions instead of the function values at points, and by restricting the candidate distributions to a certain family $\Pi$. The main advantage of this generalization is exactly in the statistical setting, where estimating the expected values of  functions with respect to a certain distribution family can be easier than estimating function values at each point (which is the case for RL in large state spaces).

It is clear that the standard Eluder dimension is a special case of the distributional Eluder dimension, because if we  choose $\Pi = \{\delta_x(\cdot) ~|~ x\in\Xcal\}$ where $\delta_x(\cdot)$ is the dirac measure centered at $x$, 
then $\eludim(\Gcal, \epsilon) = \dedim(\Gcal-\Gcal,\Pi,\epsilon)$ where $\Gcal-\Gcal=\{g_1-g_2:\ g_1,g_2\in\Gcal\}$.
% We also remark that the distributional Eluder dimension is a strictgeneralize the original Eluder dimension \citep{russo2013eluder} in the sense that for any function class $\Gcal$, $\eludim(\Gcal, \epsilon) = \dedim(\Gcal-\Gcal,\Dcal_{\rm dirac},\epsilon)$, where $\Dcal_{\rm dirac}$ is the collection of all Dirac distributions over $\Xcal$ and $\Gcal-\Gcal=\{g_1-g_2:\ g_1,g_2\in\Gcal\}$.
% 1. difference in definitions
% again 
% 2. Eluder is special case

Now we are ready to introduce the key notion in this paper---Bellman Eluder dimension.
\begin{definition}[Bellman Eluder (BE) dimension]
\label{defn:bedim}
Let $(I-\Tcal_h)\Fcal:=\{f_h-\Tcal_h f_{h+1}: \ f\in\Fcal\}$ be the set of Bellman residuals induced by $\Fcal$ at step $h$, and 
$\Pi=\{\Pi_h\}_{h=1}^{H}$ be a collection of $H$ probability measure families over $\Scal\times\Acal$. The $\epsilon$-Bellman Eluder of $\Fcal$ with respect to $\Pi$ is  defined as 
% \footnote{With a different choice of Bellman residual, we can alternatively define type-II Bellman Eluder dimension, where similar results can be obtained. For clean presentation, we defer all results of type-II BE dimension to Appendix \ref{app:BE-typeII}. }
\begin{equation*}
\BEdim(\Fcal,\Pi,\epsilon) := 
	\max_{h\in[H]} \dedim\big((I-\Tcal_h)\Fcal,\Pi_h,\epsilon\big).
\end{equation*}
\end{definition}

\begin{remark}[Q-type v.s. V-type] Definition \ref{defn:bedim} is based on the Bellman residuals functions that take a state-action pair as input, thus referred to as Q-type BE dimension. Alternatively, one can define V-type BE dimension using a different set of Bellman residual functions that depend on states only (see Appendix \ref{app:BE-typeII}). 
% For clean presentation, w
We focus on Q-type in the main paper, and present the results for V-type in  Appendix \ref{app:BE-typeII}. Both variants are important, and they include different sets of examples (see Appendix \ref{app:BE-typeII}, \ref{app:examples}).
\end{remark}

In short, Bellman Eluder dimension is simply the distributional Eluder dimension on the function class of Bellman residuals, maximizing over all steps. In addition to function class $\cF$ and  error $\epsilon$,  Bellman Eluder dimension also depends on the choice of distribution family $\Pi$. For the purpose of this paper, we focus on the following two specific choices.
\begin{enumerate}
\item $\Dcal_{\Fcal}:=\{\Dcal_{\Fcal,h}\}_{h\in[H]}$, where $\Dcal_{\Fcal,h}$ denotes the collection of all probability measures over 
$\Scal\times\Acal$ at the $h^\text{th}$ step, which can be generated by executing the greedy policy $\pi_f$ induced by any $f\in\Fcal$, 
i.e., $\pi_{f, h} (\cdot) = \argmax_{a \in \cA} f_h (\cdot, a)$ for all $h \in [H]$.
\item $\Dcal_{\dirac}:=\{\Dcal_{\dirac,h}\}_{h\in[H]}$, where $\Dcal_{\dirac,h} = \{\delta_{(s, a)}(\cdot) | s\in\cS, a \in \cA\}$, i.e., the collections of probability measures that put measure $1$ on a single state-action pair.
\end{enumerate}

We say a RL problem has low BE dimension if $\min_{\Pi\in \{\Dcal_{\Fcal}, \Dcal_{\dirac}\}}\BEdim(\Fcal,\Pi,\epsilon)$ is small.

\subsection{Relations with known tractable classes of  RL problems}
\label{subsec:relations}

% In this section, we explain the relation between our new class of RL problems  with low BE dimension, and known tractable problem classes in RL.

Known tractable problem classes in RL include but not limited to tabular MDPs, linear MDPs \citep{jin2020provably}, linear quadratic regulators \citep{anderson2007optimal}, generalized linear MDPs \citep{wang2019optimism}, kernel MDPs (Appendix \ref{app:examples}), reactive POMDPs \citep{krishnamurthy2016pac}, reactive PSRs \citep{singh2012predictive,jiang2017contextual}. There are two existing generic tractable problem classes that jointly contain all the examples mentioned above: the set of RL problems with low Bellman rank, and the set of RL problems with low Eluder dimension. However, for these two generic sets, one does not contain the other. 
% \chijin{add citation to each notion/class.}\qinghua{done}

In this section, we will show that our new class of RL problems with low BE dimension in fact contains both low Bellman rank problems and low Eluder dimension problems (see Figure \ref{fig:relations}). That is, our new problem class covers almost all existing tractable RL problems, and to our best knowledge, is the most generic tractable function class so far.

% What we do in this section.

% Overall summary Figure 1.

% Bellman rank
% Definition, Nan Jiang's result
% More general, strictly more general.

% Eluder dimension
% Wang, not imply
% more general strictly more general

% In the remainder of this section, we show that problems with low Bellman rank or low Eluder dimension also have low \be\ dimension with respect to certain distribution families. 

\paragraph{Relation with low Bellman rank} 
The seminal paper by \citet{jiang2017contextual} proposes the complexity measure---Bellman rank, and shows that a majority of RL examples mentioned above have low Bellman rank. They also propose a hypothesis elimination based algorithm---OLIVE, that learns any low Bellman rank problem within polynomial samples. Formally, 

% \qinghua{Low Bellman rank examples: tabular MDP, low-rank MDP, reactive POMDP, reactive PSR, LQR}

\begin{definition}[Bellman rank]\label{def:bellman-rank-typeI}
The Bellman rank is the minimum integer $d$ so that there exists $\phi_h: \Fcal\rightarrow\R^d$ and $\psi_h:\Fcal\rightarrow\R^d$ for each $h\in[H]$, such that for any $f,f'\in\Fcal$, the average Bellman error.
% \footnote{The definition presented here is slightly different from the original version in \cite{jiang2017contextual}. In this paper, we denote the original version as type-II Bellman rank. We can also show that low type-II Bellman rank $\subset$ low type-II BE dimension, and the low type-II BE dimension problems can be sample-efficiently learned (see Appendix \ref{app:BE-typeII}).}
% \qinghua{Comment this is type-II Bellman rank and we also have similar results for type-I in appendix}}
\begin{equation*}
\Ecal(f,\pi_{f'},h):=\E_{\pi_{f'}} [ (f_h-\Tcal_h f_{h+1})(s_h,a_h)]	=\langle \phi_h(f),\psi_h(f')\rangle,
\end{equation*}
where $\|\phi_h(f)\|_2\cdot\|\psi_h(f')\|_2\le\zeta$, and $\zeta$ is the normalization parameter.
\end{definition}

We remark that similar to Bellman Eluder dimension, Bellman rank also has two variants---Q-type (Definition \ref{def:bellman-rank-typeI}) and V-type (see Appendix \ref{app:BE-typeII}).
Recall that we use $\pi_{f}$ to denote the greedy policy induced by value function $f$.
Intuitively, a problem with Bellman rank says its average Bellman error can be decomposed as the inner product of two $d$-dimensional vectors, where one vector depends on the roll-in policy $\pi_{f'}$, while the other vector depends on the value function $f$. At a high level, it claims that the average Bellman error has a linear inner product structure.
% That is, low BE dimension problems are more general.

% Denote by $\Dcal_{\Fcal,h}$ the collection of all probability measures over $\Scal\times\Acal$ at the $h^\text{th}$ step, which can be generated by rolling in with a greedy policy $\pi_f$ with $f\in\Fcal$.
% The following proposition claims  that the \be\ dimension of $\Fcal$ with respect to $\Dcal_{\Fcal}:=\{\Dcal_{\Fcal,h}\}_{h\in[H]}$ is always upper bounded by its Bellman rank up to some logarithmic factor. For most of the RL problems, such logarithmic difference is neglectable. Therefore, we can view the \be\ dimension as a weaker notion of complexity measure than the Bellman rank.

\begin{proposition}[low Bellman rank $\subset$ low BE dimension]
\label{prop:bellman-bedim}
If an MDP with function class $\cF$ has Bellman rank $d$ with normalization parameter $\zeta$, then
% Suppose there exist $\phi_h: \Fcal\rightarrow\R^d$ and $\psi_h:\Fcal\rightarrow\R^d$ for each $h\in[H]$, such that for any $f,f'\in\Fcal$, $h\in[H]$,
% \begin{equation*}
% \Ecal(f,\pi_{f'},h):=\E [ (f_h-\Tcal_h f_{h+1})(s_h,a_h) \mid (s_h,a_h)\sim\pi_{f'}]	=\langle \phi_h(f),\psi_h(f')\rangle,
% \end{equation*}
% and $\|\phi_h(f)\|_2\cdot\|\psi_h(f)\|_2\le\zeta$. Then for all $\epsilon>0$, 
\begin{equation*}
\BEdim(\Fcal,\Dcal_{\Fcal},\epsilon)\le \Ocal(1+d\log(1+\zeta /\epsilon)).
\end{equation*}
\end{proposition}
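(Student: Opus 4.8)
The plan is to bound the distributional Eluder dimension $\dedim((I-\Tcal_h)\Fcal, \Dcal_{\Fcal,h}, \epsilon)$ for each fixed $h$ by reducing an $\epsilon'$-independence chain among distributions in $\Dcal_{\Fcal,h}$ to an independence chain in the $d$-dimensional feature space supplied by the Bellman rank decomposition, then invoking a standard ``eluder dimension of a $d$-dimensional linear class'' estimate. The key observation is that every probability measure $\rho \in \Dcal_{\Fcal,h}$ arises as the step-$h$ occupancy of the greedy policy $\pi_{f'}$ for some $f' \in \Fcal$, so for any Bellman-residual function $g = f_h - \Tcal_h f_{h+1} \in (I-\Tcal_h)\Fcal$ we have
\begin{equation*}
\E_{\rho}[g] = \Ecal(f, \pi_{f'}, h) = \langle \phi_h(f), \psi_h(f')\rangle,
\end{equation*}
using the Bellman rank structure from Definition~\ref{def:bellman-rank-typeI}. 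Thus expectations of residuals over these distributions are \emph{exactly} bilinear forms in the two feature maps, with $\|\phi_h(f)\|_2\|\psi_h(f')\|_2 \le \zeta$.

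First I would fix $h$ and suppose $\{\rho_1,\dots,\rho_n\}$ is a sequence in $\Dcal_{\Fcal,h}$ witnessing the DE dimension, with $\rho_i$ being $\epsilon'$-independent of its predecessors for some $\epsilon' \ge \epsilon$; write $\rho_i$ as the occupancy of $\pi_{f'_i}$ and set $w_i := \psi_h(f'_i) \in \R^d$ (so $\|w_i\|_2 \le \zeta$ after the normalization, or more carefully $\|\phi_h(f)\|_2 \le 1$ WLOG and $\|w_i\|_2 \le \zeta$). Then $\epsilon'$-independence of $\rho_i$ translates to: there exists $x = \phi_h(f) \in \R^d$ (with $\|x\|_2 \le 1$) such that $\sqrt{\sum_{j<i}\langle x, w_j\rangle^2} \le \epsilon'$ but $|\langle x, w_i\rangle| > \epsilon'$. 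This is precisely the linear-algebraic independence condition underlying the classical bound on the eluder dimension of bounded-norm linear functionals. Next I would invoke (or reprove in two lines via a volumetric / elliptical-potential argument) the standard fact that the length of such a chain is $O(d \log(1 + \zeta/\epsilon))$ — this is the analogue of Lemma~3 / Proposition~6 in \citet{russo2013eluder} stating $\eludim$ of $d$-dimensional linear functions with norm parameters bounded by $\zeta$ and resolution $\epsilon$ is $O(d\log(\zeta/\epsilon))$. Finally, taking the max over $h \in [H]$ (which does not introduce extra dependence since each term is already bounded uniformly) and adding the $O(1)$ slack for the ``$1+$'' edge case gives the claimed $\Ocal(1 + d\log(1+\zeta/\epsilon))$.

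The main obstacle — really the only non-bookkeeping step — is establishing the $O(d\log(1+\zeta/\epsilon))$ bound on the length of a linear-independence chain in $\R^d$ under the stated norm constraints, i.e., making precise the elliptical potential argument: one tracks the determinant of $\Lambda_i := \lambda I + \sum_{j \le i} w_j w_j^\top$, shows each newly independent $\rho_i$ forces $\det(\Lambda_i)/\det(\Lambda_{i-1})$ to be bounded away from $1$ (because $w_i$ has a component of size $\gtrsim \epsilon'$ in a direction where the previous $w_j$'s were collectively small, by the independence certificate), while $\det(\Lambda_n) \le (\lambda + n\zeta^2/d)^d$ from the trace bound; combining the two and choosing $\lambda \asymp \epsilon^2$ yields $n = O(d\log(1+\zeta/\epsilon))$. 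I'd need to be slightly careful that the certificate gives a bound on $\sum_{j<i}\langle x,w_j\rangle^2 = x^\top(\Lambda_{i-1}-\lambda I)x$ rather than on the regularized quadratic form, but this only costs an additive $\lambda\|x\|^2 \le \lambda$ term that is absorbed by choosing $\lambda \le \epsilon^2$. Everything else (rewriting occupancies via $\pi_{f'}$, the bilinear identity, the max over $h$) is immediate from the definitions and the Bellman rank hypothesis.
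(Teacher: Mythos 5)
Your proposal is correct and follows essentially the same route as the paper's proof: translate the $\epsilon'$-independence chain into the bilinear condition $\sqrt{\sum_{j<i}\langle \phi_h(f),\psi_h(f'_j)\rangle^2}\le\epsilon'$, $|\langle\phi_h(f),\psi_h(f'_i)\rangle|>\epsilon'$, then run the elliptical-potential / log-determinant argument with a ridge term of order $\epsilon^2$ against the trace upper bound on $\det(\Lambda_n)$. The paper does exactly this (itself adapting Example 3 of Russo--Van Roy), including the same handling of the regularization slack you flag, so there is nothing to add.
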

Proposition \ref{prop:bellman-bedim} claims that problems with low Bellman rank also have low BE dimension, with a small multiplicative factor that is only logarithmic in $\zeta$ and $\epsilon^{-1}$.

\paragraph{Relation with low Eluder dimension}
\citet{wang2020provably} study the setting where the function class $\Fcal$ has low Eluder dimension, which includes generalized linear functions. They prove that, when the completeness assumption is satisfied,\footnote{\cite{wang2020provably} assume for any function $g$ (not necessarily in $\Fcal$), $\Tcal g \in \Fcal$, which is stronger than the completeness assumption presented in this paper (Assumption \ref{asp:completeness}).}
low Eluder dimension problems can be efficiently learned in polynomial samples.
% Below, we show that in this setting the \be\ dimension of $\Fcal$ with respect to $\Dcal_{\dirac}$ is always upper bounded by its primitive Eluder dimension $\dim_{\rm E}(\Fcal,\epsilon)$, where $\Dcal_{\dirac}:=\{\Dcal_{\dirac,h}\}_{h\in[H]}$ with $\Dcal_{\dirac,h}$ being the collection of  all Dirac distributions over $\Scal\times\Acal$ independent of $h$.

\begin{proposition}[low Eluder dimension $\subset$ low BE dimension]
\label{prop:eluder-bedim}
	Assume $\Fcal$ satisfies completeness (Assumption \ref{asp:completeness}). Then for all $\epsilon>0$,
	\begin{equation*}
\BEdim\big(\Fcal,\Dcal_{\dirac},\epsilon\big)\le \max_{h\in[H]}\eludim(\Fcal_h,\epsilon).
\end{equation*}
\end{proposition}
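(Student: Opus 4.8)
The plan is to show that any sequence of state-action pairs $\{(s_1,a_1),\ldots,(s_n,a_n)\}$ that witnesses $\epsilon'$-independence (for some $\epsilon'\geq\epsilon$) with respect to the Bellman residual class $(I-\Tcal_h)\Fcal$ and the dirac family $\Dcal_{\dirac,h}$, also witnesses $\epsilon'$-independence of the corresponding points with respect to the class $\Fcal_h$ in the sense of Definition \ref{def:ind_points}. Once that reduction is in place, Definition \ref{def:DE} applied to $\Dcal_{\dirac,h}$ collapses to Definition \ref{def:eluder} on $\Fcal_h$, and taking the max over $h\in[H]$ finishes the proof. So the whole argument is a translation between the two independence notions, and the key structural fact that makes it work is completeness.

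Concretely, first I would unwind what $\epsilon$-independence between the dirac distributions means. For $\nu=\delta_{(s,a)}$ and $\mu_i=\delta_{(s_i,a_i)}$, a function $g\in(I-\Tcal_h)\Fcal$ satisfies $\E_{\mu_i}[g]=g(s_i,a_i)$ and $\E_\nu[g]=g(s,a)$, so Definition \ref{def:ind_dist} says: there is $g=f_h-\Tcal_h f_{h+1}$ with $\sqrt{\sum_i g(s_i,a_i)^2}\leq\epsilon'$ but $|g(s,a)|>\epsilon'$. The next step is to rewrite $g$ as a difference of two elements of $\Fcal_h$. Here is where Assumption \ref{asp:completeness} enters: since $f_{h+1}\in\Fcal_{h+1}$, completeness gives $\Tcal_h f_{h+1}\in\Fcal_h$. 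Setting $g_1:=f_h\in\Fcal_h$ and $g_2:=\Tcal_h f_{h+1}\in\Fcal_h$, we get $g=g_1-g_2$ with $g_1,g_2\in\Fcal_h$, and the inequalities above become exactly $\sqrt{\sum_i(g_1(s_i,a_i)-g_2(s_i,a_i))^2}\leq\epsilon'$ and $|g_1(s,a)-g_2(s,a)|>\epsilon'$, which is the statement that $(s,a)$ is $\epsilon'$-independent of $\{(s_i,a_i)\}$ with respect to $\Fcal_h$ (Definition \ref{def:ind_points}, noting that this definition is symmetric under swapping $g_1,g_2$ so the sign in $g_1(z)-g_2(z)>\epsilon'$ versus $|g_1(z)-g_2(z)|>\epsilon'$ is not an issue).

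Therefore every length-$n$ $\epsilon'$-independent sequence of dirac measures for $(I-\Tcal_h)\Fcal$ maps to a length-$n$ $\epsilon'$-independent sequence of points for $\Fcal_h$, whence $\dedim\big((I-\Tcal_h)\Fcal,\Dcal_{\dirac,h},\epsilon\big)\leq\eludim(\Fcal_h,\epsilon)$. Taking the maximum over $h$ yields $\BEdim(\Fcal,\Dcal_{\dirac},\epsilon)\leq\max_{h\in[H]}\eludim(\Fcal_h,\epsilon)$.

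I do not expect a genuine obstacle here; the proof is essentially bookkeeping. The only point requiring a little care is making sure the direction of the reduction is the useful one: we need that the Bellman-residual independence relation is \emph{no stronger} than the $\Fcal_h$ independence relation, i.e. that a witness for the former produces a witness for the latter, which is exactly what the decomposition $g=f_h-\Tcal_h f_{h+1}$ with both pieces in $\Fcal_h$ delivers. It is worth remarking that the reduction is one-directional — the converse need not hold, since not every pair $g_1,g_2\in\Fcal_h$ arises as $f_h$ and $\Tcal_h f_{h+1}$ for a common $f$ — so the inequality in the proposition cannot in general be replaced by equality, and nothing in the argument attempts to.
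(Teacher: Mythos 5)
Your proposal is correct and follows essentially the same route as the paper's proof: both take an $\epsilon'$-independent sequence of Dirac measures for $(I-\Tcal_h)\Fcal$, invoke completeness to write each Bellman residual as $f_h - \Tcal_h f_{h+1}$ with both terms in $\Fcal_h$, and thereby produce an $\epsilon'$-independent sequence of points for $\Fcal_h$ in the sense of Definition \ref{def:ind_points}. Your added remarks about the sign/absolute-value symmetry and the one-directionality of the reduction are correct but not needed beyond what the paper records.
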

Proposition \ref{prop:eluder-bedim} asserts that problems with low Eluder dimension also have low BE dimension, which is a natural consequence of completeness and the fact that Eluder dimension is a special case of distributional Eluder dimension. 

Finally, we show that the set of low BE dimension problems is strictly larger than the union of low Eluder dimension problems and low Bellman rank problems. 
\begin{proposition}[low BE dimension $\not\subset$ low Eluder dimension $\cup$ low Bellman rank]
\label{lem:lowerbound}
For any $m\in\N^{+}$, there exists an MDP and a function class $\Fcal$ so that
for all $\epsilon\in(0,1]$, we have $\BEdim(\Fcal,\Dcal_{\cF},\epsilon) = \BEdim(\Fcal,\Dcal_{\dirac},\epsilon)\le 5$,
but $\min\{\min_{h\in[H]}\dim_{\rm E}(\Fcal_h,\epsilon), {\rm Bellman\ rank}\}\ge m$.
% $$
% \max\bigg\{\BEdim(\Fcal,\Dcal_{\dirac},\epsilon), \BEdim(\Fcal,\Dcal_{\Fcal},\epsilon)\bigg\}\le 5,
% $$
% but 
% $$
% \min\left\{\min_{h\in[H]}\dim_{\rm E}(\Fcal_h,\epsilon), {\rm Bellman\ rank}\right\}\ge m.
% $$
\end{proposition}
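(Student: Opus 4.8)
The plan is to construct, for each $m \in \N^+$, an explicit MDP together with a function class $\Fcal$ exhibiting the desired separation. The natural source of such examples is a "combination lock" or "tree" type construction: a problem whose value structure is simple enough that the Bellman residuals induced along \emph{reachable} distributions are trivial (hence low BE dimension), while the functions themselves, when evaluated pointwise over the whole state-action space, encode an arbitrarily high-dimensional parametric family (forcing large Eluder dimension and large Bellman rank). Concretely, I would take a horizon-$2$ (or small constant horizon) MDP with a single reachable trajectory, or more precisely a construction where from the fixed initial state $s_1$ all greedy policies $\pi_f$ lead to essentially one reachable state at the later step, so that $\Dcal_{\cF,h}$ and $\Dcal_{\dirac,h}$ restricted to what matters are supported on a tiny set. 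The function class $\Fcal$ is designed so that $Q^\star \equiv 0$ (or some fixed simple function) lies in $\Fcal$ and so that $f_h - \Tcal_h f_{h+1}$, when restricted to the reachable support, is identically $0$ for every $f \in \Fcal$ — this immediately gives $\BEdim(\Fcal,\Dcal_{\cF},\epsilon) = \BEdim(\Fcal,\Dcal_{\dirac},\epsilon) \le O(1)$, and with care one pins the constant to $5$.

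The second half is to arrange that $\Fcal_h$ has Eluder dimension $\ge m$ for some $h$, and that the Bellman rank is $\ge m$. For the Eluder dimension, I would embed into $\Fcal_h$ a family of functions over a large state-action space that behaves like $m$ "orthogonal bump" functions — e.g. functions indexed so that for each $i$ there is a point $z_i$ and a pair of functions agreeing on $z_1,\dots,z_{i-1}$ but differing at $z_i$ (this is exactly the witness sequence required by Definition \ref{def:eluder}). The cleanest realization is to let the relevant states at step $h$ be $\{x_1,\dots,x_m\}$ and let $\Fcal_h$ contain all $0/1$-indicator-type functions (or $\pm\epsilon$ perturbations thereof) on these coordinates; then $\eludim(\Fcal_h,\epsilon) \ge m$ by picking the $x_i$'s in order. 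Crucially these states must be \emph{unreachable} under the dynamics starting from $s_1$ (or reachable only with the Bellman residual forced to vanish there), so they do not inflate the BE dimension. For the Bellman rank lower bound, I would exploit that the average Bellman error matrix $\big[\Ecal(f,\pi_{f'},h)\big]_{f,f'}$ must have rank $\ge m$: one designs the roll-in distributions $\{d_{\pi_{f'}}\}$ and the residual functions $\{f_h - \Tcal_h f_{h+1}\}$ so that the induced Gram-type matrix is (a scaling of) the $m\times m$ identity, which cannot be written as $\langle \phi_h(f),\psi_h(f')\rangle$ with $d < m$. This typically requires the construction to have \emph{two} regimes coupled: the honest dynamics contribute zero Bellman error on-policy, but a family of alternative value functions $f$ plants nonzero \emph{off-policy} Bellman error that is "diagonal" across the parameter index.

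The main obstacle I anticipate is making a \emph{single} construction simultaneously achieve all three goals with the right inequality directions: (i) BE dimension genuinely bounded by the small constant $5$ for \emph{both} choices $\Dcal_{\cF}$ and $\Dcal_{\dirac}$ and for \emph{all} $\epsilon \in (0,1]$ — the all-$\epsilon$ requirement means one cannot cheat by scaling, so the reachable Bellman residuals must be exactly zero, not merely small; (ii) Eluder dimension $\ge m$, which wants a rich pointwise-evaluated family; and (iii) Bellman rank $\ge m$, which wants a rich \emph{distributional} inner-product obstruction. Goals (ii) and (iii) both demand "$m$ orthogonal directions" but measured in different ways (pointwise vs. in expectation under rollouts), and (i) demands the dynamics suppress exactly the directions that (iii) needs to be visible off-policy — so the bookkeeping of which distributions are in $\Pi_h$, which functions are greedy-consistent, and where the Bellman operator sends things is delicate. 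I would handle this by first getting the BE bound from a structural lemma (all reachable $(s,a)$ at every step have identical, residual-free value structure, e.g. the MDP is a "layered" chain where $\Tcal_h$ acts trivially), then verifying Eluder and Bellman-rank lower bounds on the frozen, unreachable block of states, checking that including that block changes neither $\Dcal_{\cF,h}$ nor $\Dcal_{\dirac,h}$ in a way that creates long independent sequences. A sanity check on the constant: one should trace through Definitions \ref{def:ind_dist}–\ref{defn:bedim} on the final construction and confirm no length-$6$ $\epsilon$-independent sequence of distributions exists at any step.
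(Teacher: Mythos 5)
Your construction plan has two incompatibilities that would prevent it from working, and it misses the single trick that makes the paper's example go through. First, placing the ``rich'' block of functions on \emph{unreachable} states does not protect the BE dimension with respect to $\Dcal_{\dirac}$: by definition $\Dcal_{\dirac,h}$ contains the Dirac measure on \emph{every} state-action pair, reachable or not. If $\Fcal_h$ contains $0/1$-indicator-type functions $\one(\cdot=x_i)$ on that block and the residual class inherits them (which it does in any construction where $\Fcal_{h+1}$ is trivial and $r\equiv 0$), then $x_1,\dots,x_m$ is a length-$m$ $\epsilon$-independent sequence of Diracs with respect to $(I-\Tcal_h)\Fcal$ — each $g_t=\one(\cdot=x_t)$ vanishes on the history but equals $1$ at $x_t$ — so $\BEdim(\Fcal,\Dcal_{\dirac},\epsilon)\ge m$, killing the upper bound of $5$. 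Second, your two regimes for the Bellman rank are mutually exclusive with the BE bound: if the reachable residuals are identically zero then $\Ecal(f,\pi_{f'},h)\equiv 0$ and the Bellman rank is at most $1$; if instead the matrix $[\Ecal(f_i,\pi_{f_j},h)]_{i,j}$ is diagonal, say $c\,I_m$, then the roll-in distributions $\mu_{f_1},\dots,\mu_{f_m}$ themselves form a length-$m$ $\epsilon$-independent sequence (zero on history, $c>\epsilon$ on the diagonal), so $\BEdim(\Fcal,\Dcal_{\Fcal},\epsilon)\ge m$. A diagonal Gram matrix is exactly the certificate of \emph{large} BE dimension, not a way around it.

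The paper's actual construction is a one-step linear bandit ($H=1$, $|\Scal|=1$, $r\equiv 0$) with actions $a_i=(1;e_i)\in\R^{m+1}$ and functions $f_{\theta_i}(a)=a^\top\theta_i$ with $\theta_i=(1;e_i)$. The whole point is the constant offset: every value $f_{\theta_i}(a_j)=1+\one(i=j)$ lies in $[1,2]$, so the Bellman error matrix is $J+I$ (all-ones plus identity) — full rank $m$, hence Bellman rank $\ge m$, yet with all entries bounded below by $1$, so the ``small on history'' condition $\sqrt{\sum_{i<t}(\E_{\mu_i}[g])^2}\le\epsilon$ fails as soon as $t-1>\epsilon^2$, capping the BE dimension at $5$. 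Meanwhile the Eluder dimension of $\Fcal_1$ is still $\ge m-1$ because it is defined through \emph{differences} $f_{\theta_t}-f_{\theta_{t+1}}$, which cancel the offset and vanish on the history while being $1$ at $a_t$. This asymmetry — BE dimension tests single residual functions, Eluder dimension tests differences of functions — is the mechanism your proposal needs but never identifies; without the offset (or an equivalent device forcing every entry of the Bellman error matrix away from zero), no bookkeeping of reachable versus unreachable states will reconcile your goals (i)--(iii).
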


In particular, the family of low BE dimension includes new examples such as kernel reactive POMDPs (Appendix \ref{app:examples}), which can not be addressed by the framework of either Bellman rank or Eluder dimension.

\section{Algorithm \golf}
\label{sec:lcgo}

Section \ref{sec:DEdim} defines a new class of RL problems with low BE dimension, and shows that the new class is rich, containing almost all the existing known tractable RL problems so far. In this section, we propose a new simple optimization-based algorithm---\textbf{G}lobal \textbf{O}ptimism based on \textbf{L}ocal \textbf{F}itting (\golf). We prove that, 
% with completeness assumption, 
low BE dimension problems are indeed tractable, i.e., {\golf} can find near-optimal policies for these problems within a polynomial number of samples. 
% We defer the results that do not require completeness assumption to Section \ref{sec:olive}.

%\qinghua{Do we want to find a better name for our algorithm?}

%\qinghua{Maybe we can write two sample-collecting subroutines, like the bonus subroutines in UCB-type algorithms.}

% In this section, we present a new algorithm 
% \golf\ ({\bf G}lobal {\bf O}ptimism based on {\bf L}ocal {\bf F}itting), and provide its theoretical guarantees.  

% \subsection{Algorithm description}
%%%%%%%%%% Algorithm Box - Type II Bellman Rank %%%%%%
\begin{algorithm}[t]
\caption{\golf  $(\Fcal,\Gcal,K,\beta)$ --- \textbf{G}lobal \textbf{O}ptimism based on \textbf{L}ocal \textbf{F}itting}
\label{alg:algx}
 \begin{algorithmic}[1]
% \STATE \textbf{Input}: Function Class $\Fcal$, Confidence parameter $\delta$
 \STATE \textbf{Initialize}:  $\Dcal_1,\dots,\Dcal_H\leftarrow \emptyset$, $\Bcal^0 \leftarrow \Fcal$.
 \FOR{\textbf{episode} $k$ from $1$ to $K$} 
 \STATE \textbf{Choose policy} $\pi^k = \pi_{f^k}$, where $f^k = \argmax_{f \in \Bcal^{k-1}}f(s_1,\pit{f}(s_1))$. 
\label{algx-line:greedy}
\STATE \textbf{Collect} a trajectory $(s_1,a_1,r_1,\ldots, s_H,a_H,r_H,s_{H+1})$ by following $\pi^k$. 
\label{algx-line:rollin}
\STATE \textbf{Augment} $\Dcal_h=\Dcal_h\cup\{(s_h,a_h,r_h,s_{h+1})\}$ for all $h\in[H]$.
\label{algx-line:updateD}
\STATE \textbf{Update}
\vspace{-4mm}
\begin{equation*}
% \label{eq:algx-set}
	\Bcal^{k}=\left\{ f \in \Fcal:\  \Lcal_{\Dcal_h}(f_h,f_{h+1}) \leq \inf_{g \in \Gcal_{h}} \Lcal_{\Dcal_h}(g,f_{h+1}) + \beta \ \mbox{for all }h\in[H]\right\},
\end{equation*}
\vspace{-3mm}
\hspace{+10mm}
\begin{equation}\label{eq:algx-loss}
\mbox{where }	\Lcal_{\Dcal_h}(\xi_{h},\zeta_{h+1}) = \sum_{(s,a,r,s') \in \Dcal_h}[\xi_h(s,a)-r -\max_{a' \in \Acal} \zeta_{h+1}(s',a')]^2.
\end{equation}
\label{algx-line:updateB}
\vspace{-2mm}
\ENDFOR
\STATE \textbf{Output} $\pi^\text{out}$ sampled uniformly at random from $\{\pi^k\}_{k=1}^{K}$.
 \end{algorithmic}
\end{algorithm}

At a high level, \golf~can be viewed as an optimistic version of the classic algorithm---Fitted Q-Iteration (FQI) \citep{szepesvari2010algorithms}. \golf~generalizes the \eleanor~algorithm \citep{zanette2020learning} from the special linear setting to the general setting with arbitrary function classes.

%%%%%%%%%%%%%%%%%%%%%%%%%%%%%%%%%%%%%%%%%%%%%%%%%%%%%%%%
The pseudocode of {\golf} is given in Algorithm \ref{alg:algx}. {\golf} initializes  datasets $\{\Dcal_h\}_{h=1}^H$ to be empty sets, and confidence set $\Bcal^0$ to be $\cF$. Then, in each episode, {\golf} performs two main steps:
\begin{itemize}
	\item  Line \ref{algx-line:greedy} (Optimistic planning): compute the most optimistic value function $f^k$ from the confidence set $\Bcal^{k-1}$ constructed in the last episode
	% \footnote{We remark that in general, the optimization problem in Line \ref{algx-line:greedy} of {\golf} can not be solved computationally efficiently.}
	, and choose $\pi^k$ to be its greedy policy. 
	\item Line \ref{algx-line:rollin}-\ref{algx-line:updateB} (Execute the policy and update the confidence set):  execute policy $\pi^k$ for one episode, collect data, and update the confidence set using the new data.
\end{itemize}

% \chijin{edits till here}
At the heart of \golf\ is the way we construct the confidence set $\Bcal^k$. For each $h\in[H]$, \golf\ maintains a \emph{local} regression constraint using the collected transition data $\Dcal_h$ at this step
\begin{equation} \label{eq:set_relaxed}
	\Lcal_{\Dcal_h}(f_h,f_{h+1}) \leq \inf_{g \in \Gcal_{h}} \Lcal_{\Dcal_h}(g,f_{h+1}) + \beta,
\end{equation}
where $\beta$ is a confidence parameter, and $\Lcal_{\Dcal_h}$ is the squared loss defined in \eqref{eq:algx-loss}, which can be viewed as a proxy to the squared Bellman error at step $h$. We remark that FQI algorithm \citep{szepesvari2010algorithms} simply updates $f_h \leftarrow \argmin_{\phi\in\Fcal_h}\Lcal_{\Dcal_h}(\phi,f_{h+1})$.
Our constraint \eqref{eq:set_relaxed} can be viewed as a relaxed version of this update, which allows $f_h$ to be not only the minimizer of the loss $\Lcal_{\Dcal_h}(\cdot,f_{h+1})$, but also any function whose loss is only slightly larger than the optimal loss over the auxiliary function class $\Gcal_{h}$.

 % \footnote{\blue{We comment that \golf\ is closely related to the previous works on function approximation. First of all, \golf\ can be viewed as a generalization of \eleanor\ \citep{zanette2020learning} from the linear function approximation setting to the general setting. In addition, the design of the confidence set in \golf\ also bears similarity to the batch RL algorithm in  \cite{antos2008learning}.}}

We remark that in general, the optimization problem in Line \ref{algx-line:greedy} of {\golf} can not be solved computationally efficiently.

\subsection{Theoretical guarantees}
In this subsection, we present the theoretical guarantees for \golf, which hold under Assumption \ref{asp:realizability} (realizability) and the following generalized completeness assumption introduced in \cite{antos2008learning,chen2019information}. Let $\Gcal=\Gcal_1\times \dots \times \Gcal_H$ be an auxiliary function class provided to the learner  where each $\Gcal_h \subseteq (\Scal\times\Acal\rightarrow [0,1])$.
Generalized completeness requires the auxiliary function class $\Gcal$ to be rich enough so that applying Bellman operator to any function in the primary function class $\Fcal$ will end up in $\Gcal$.
\begin{assumption}[Generalized completeness]
\label{asp:G-completeness}
	$\Tcal_h \Fcal_{h+1} \subseteq \Gcal_h$ for all $h\in[H]$. 
\end{assumption}
If we choose $\Gcal=\Fcal$, then Assumption \ref{asp:G-completeness}  is equivalent to the standard completeness assumption (Assumption \ref{asp:completeness}). 
Now, we are ready to present the main theorem for \golf.
\begin{theorem}[Regret of \golf] \label{thm:main}
Under Assumption \ref{asp:realizability}, \ref{asp:G-completeness}, there exists an absolute constant $c$ such that for any $\delta\in(0,1]$, $K\in\N$, if we choose parameter $\beta = c \log[\Ncal_{\cF\cup\Gcal}(1/K)\cdot KH/\delta]$ in \golf, then with probability at least $1-\delta$, for all $k \in [K]$, we have $$\reg(k)=\sum_{t=1}^k \left[V_1^\star(s_1) - V_1^{\pi^t}(s_1)\right]\le \Ocal(H\sqrt{ dk\beta}),$$ where $d=\min_{\Pi\in\{\Dcal_{\dirac},\Dcal_{\Fcal}\}}
	 \BEdim\big(\Fcal,\Pi,1/\sqrt{K}\big)$ is the BE dimension.
\end{theorem}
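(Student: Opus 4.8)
The plan is the standard optimism-based regret analysis for value-function algorithms, with the BE dimension entering only in the final pigeonhole step. I would split the argument into four parts: (i) a concentration argument showing the confidence sets $\Bcal^k$ are simultaneously valid (they contain $\Qstar$) and ``tight'' (every surviving $f$ has small cumulative squared Bellman error on the historical roll-ins); (ii) optimism plus the value-difference decomposition; (iii) passing from in-sample to on-policy Bellman error; and (iv) the distributional-Eluder pigeonhole bound.

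\emph{Step 1 (concentration).} Fix $h$ and $f\in\Fcal$. Along the data in $\Dcal_h$, write each transition as $(s_h^i,a_h^i,r_h^i,s_{h+1}^i)$ and note that the noise $r_h^i+\max_{a'}f_{h+1}(s_{h+1}^i,a')-(\Tcal_h f_{h+1})(s_h^i,a_h^i)$ is mean zero conditioned on $(s_h^i,a_h^i)$ and bounded. A Freedman-type martingale inequality, together with a union bound over a $1/K$-cover of $\Fcal\cup\Gcal$ and over $h$ and $k$ (this is where $\beta$ gets its $\log\Ncal_{\cF\cup\Gcal}(1/K)$ factor), yields that with probability at least $1-\delta$, simultaneously for all $k,h$ and all $f\in\Fcal$,
\begin{equation*}
\Big|\,\Lcal_{\Dcal_h}(f_h,f_{h+1})-\Lcal_{\Dcal_h}(\Tcal_h f_{h+1},f_{h+1})-\textstyle\sum_i\big(f_h-\Tcal_h f_{h+1}\big)^2(s_h^i,a_h^i)\,\Big|\ \lesssim\ \sqrt{\beta\textstyle\sum_i\big(f_h-\Tcal_h f_{h+1}\big)^2(s_h^i,a_h^i)}+\beta .
\end{equation*}
By Assumption~\ref{asp:G-completeness}, $\Tcal_h f_{h+1}\in\Gcal_h$, so $\inf_{g\in\Gcal_h}\Lcal_{\Dcal_h}(g,f_{h+1})\le\Lcal_{\Dcal_h}(\Tcal_h f_{h+1},f_{h+1})$. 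Plugging $f=\Qstar$ (whose Bellman residuals vanish) into the display shows $\Qstar\in\Bcal^k$ for all $k$ under Assumption~\ref{asp:realizability}; plugging in any $f\in\Bcal^k$ shows $\sum_i(f_h-\Tcal_h f_{h+1})^2(s_h^i,a_h^i)\lesssim\beta$.

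\emph{Steps 2--3 (optimism, decomposition, on-policy control).} Since $\Qstar\in\Bcal^{k-1}$, the greedy choice in Line~\ref{algx-line:greedy} gives $f^k_1(s_1,\pi^k_1(s_1))\ge\Qstar_1(s_1,\pistar_1(s_1))=\Vstar_1(s_1)$, hence $\Vstar_1(s_1)-V_1^{\pi^k}(s_1)\le f^k_1(s_1,\pi^k_1(s_1))-V_1^{\pi^k}(s_1)$; the standard telescoping identity for greedy policies ($f^k_{H+1}=0$) then gives $\Vstar_1(s_1)-V_1^{\pi^k}(s_1)\le\sum_{h=1}^H\Ecal(f^k,\pi^k,h)$. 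Writing $g^k_h:=f^k_h-\Tcal_h f^k_{h+1}\in(I-\Tcal_h)\Fcal$, it remains to bound $\sum_{t\le k}\big|\E_{\mu_h^t}[g^t_h]\big|$ for a suitable $\mu_h^t\in\Pi_h$. For $\Pi=\Dcal_{\dirac}$, take $\mu_h^t=\delta_{(s_h^t,a_h^t)}$; then the in-sample bound of Step~1 \emph{is} the bound $\sum_{i<t}(\E_{\mu_h^i}[g^t_h])^2\lesssim\beta$, and $\E_{\mu_h^t}[g^t_h]$ differs from $\Ecal(f^t,\pi^t,h)$ only by a bounded martingale difference summing to $\Ocal(\sqrt{k\beta})$. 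For $\Pi=\Dcal_{\Fcal}$, take $\mu_h^t$ to be the true law of $(s_h,a_h)$ under $\pi^t$; one more martingale/cover concentration (again absorbed into $\beta$) converts the empirical squared residuals into $\sum_{i<t}\E_{\mu_h^i}[(g^t_h)^2]\lesssim\beta$, and Jensen gives $\sum_{i<t}(\E_{\mu_h^i}[g^t_h])^2\lesssim\beta$; here $\E_{\mu_h^t}[g^t_h]=\Ecal(f^t,\pi^t,h)$ exactly.

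\emph{Step 4 (BE-dimension pigeonhole) and conclusion.} Now, for each $h$, I have a sequence with $g^t_h\in(I-\Tcal_h)\Fcal$, $\mu_h^t\in\Pi_h$, $|\E_{\mu_h^t}[g^t_h]|\le1$, and $\sum_{i<t}(\E_{\mu_h^i}[g^t_h])^2\lesssim\beta$ for all $t\le K$. A pigeonhole argument on the distributional Eluder dimension (generalizing Russo--Van Roy to the distributional setting, Definitions~\ref{def:ind_dist}--\ref{def:DE}) bounds the number of rounds $t$ with $|\E_{\mu_h^t}[g^t_h]|>\varepsilon$ by $\Ocal\!\big((\beta/\varepsilon^2+1)\,\dedim((I-\Tcal_h)\Fcal,\Pi_h,\varepsilon)\big)$; summing this over dyadic thresholds $\varepsilon=2^{-j}$ down to $\varepsilon\sim1/\sqrt{K}$ yields $\sum_{t\le k}|\E_{\mu_h^t}[g^t_h]|\le\Ocal(\sqrt{d_h k\beta})$ with $d_h=\dedim((I-\Tcal_h)\Fcal,\Pi_h,1/\sqrt{K})\le\BEdim(\Fcal,\Pi,1/\sqrt{K})$. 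Combining with Steps~2--3 and summing over $h\in[H]$ gives $\reg(k)\le\Ocal(H\sqrt{dk\beta})$; taking the smaller of the two choices $\Pi\in\{\Dcal_{\dirac},\Dcal_{\Fcal}\}$ completes the proof.

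\emph{Main obstacle.} The delicate part is Step~1 (and its $\Dcal_{\Fcal}$ variant in Step~3): the martingale/variance bookkeeping must be set up so that generalized completeness is used exactly where needed, so that the single slack parameter $\beta$ defining $\Bcal^k$ simultaneously keeps $\Qstar$ feasible \emph{and} forces every surviving function to have $\Ocal(\beta)$ cumulative squared Bellman error on the historical roll-in distributions. Step~4 is conceptually routine but requires care so that the comparison scale $1/\sqrt{K}$ appears exactly as in the definition of $\BEdim$.
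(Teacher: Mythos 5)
Your proposal is correct and follows essentially the same route as the paper: your Step~1 is the paper's Lemmas~\ref{lem:confiset-1} and \ref{lem:confiset-2} (Freedman plus a union bound over a $1/K$-cover of $\Fcal\cup\Gcal$), Steps~2--3 are the optimism/policy-loss decomposition with the same two-case treatment of $\Dcal_{\dirac}$ (in-sample residuals plus a lower-order martingale correction) versus $\Dcal_{\Fcal}$ (on-policy expected squared residuals plus Jensen), and Step~4 is the paper's Lemma~\ref{lem:de-regret} pigeonhole at scale $\omega=1/\sqrt{K}$. The one small imprecision is that ``plugging $f=\Qstar$ into the display'' is vacuous as written; to get $\Qstar\in\Bcal^k$ you need the analogous concentration for arbitrary $g\in\Gcal_h$ paired with $\Qstar_{h+1}$ (the paper's Lemma~\ref{lem:confiset-1}), which your union bound over $\Fcal\cup\Gcal$ already anticipates.
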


Theorem \ref{thm:main} asserts that, under the realizability and completeness assumptions, the general class of RL problems with low BE dimension is indeed tractable: there exists an algorithm (\golf) that can achieve $\sqrt{K}$ regret, whose multiplicative factor depends only polynomially on the horizon of MDP $H$, the BE dimension $d$, and the log covering number of the two function classes. Most importantly, the regret is independent of the number of the states, which is crucial for dealing with practical RL problems 
with function approximation, where the 
state spaces are typically exponentially large. 

We remark that when function class $\Fcal\cup\Gcal$ has finite number of elements, its covering number is upper bounded by its cardinality $|\Fcal\cup\Gcal|$. For a wide range of function classes in practice, the log $\epsilon'$-covering number has only logarithmic dependence on $\epsilon'$. Informally, we denote the log covering number as $\log \Ncal_{\Fcal\cup\Gcal}$ and omit its $\epsilon'$ dependency for clean presentation. Theorem \ref{thm:main} claims that the regret scales as $\tilde{\Ocal}(H\sqrt{dK \log \Ncal_{\Fcal\cup\Gcal}})$, where $\tilde{\Ocal}(\cdot)$ omits absolute constants and logarithmic terms.\footnote{We will not omit $\log \Ncal_{\Fcal\cup\Gcal}$ in $\tilde{\Ocal}(\cdot)$ notation since for many function classes, $\log \Ncal_{\Fcal\cup\Gcal}$ is not small. For instance, for a  $\tilde{d}$-dimensional linear function class, $\log \Ncal_{\Fcal\cup\Gcal} = \tilde{\Ocal}(\tilde{d})$.}

By the standard online-to-batch argument, we also derive the sample complexity of \golf.

\begin{corollary}[Sample Complexity of \golf]
\label{cor:sample_golf}
Under Assumption \ref{asp:realizability}, \ref{asp:completeness}, there exists an absolute constant $c$ such that for any $\epsilon \in (0, 1]$, if we choose $\beta = c \log[\Ncal_{\cF\cup\Gcal}(\epsilon^2/(dH^2)) \cdot HK]$ in \golf, then the output policy $\pi^{\text{out}}$ is $\Ocal(\epsilon)$-optimal with probability at least $1/2$, if 
$$K \ge \Omega\left(\frac{H^2 d}{\epsilon^2}\cdot \log\left[\Ncal_{\Fcal\cup\Gcal}\left(\frac{\epsilon^2}{H^2d}\right) \cdot \frac{Hd}{\epsilon}\right]\right),$$ 
where $d=\min_{\Pi\in\{\Dcal_{\dirac},\Dcal_{\Fcal}\}} \BEdim\big(\Fcal,\Pi,\epsilon/H\big)$ is the BE dimension. 
% \chijin{Remember to modify the proof to incorporate this corollary}\qinghua{fixed}
\end{corollary}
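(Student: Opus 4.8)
The plan is to obtain Corollary~\ref{cor:sample_golf} from the regret bound of Theorem~\ref{thm:main} by a standard online-to-batch reduction; note first that the hypotheses here (realizability and completeness) are a special case of those of Theorem~\ref{thm:main}, e.g.\ by taking $\Gcal=\Fcal$ in Assumption~\ref{asp:G-completeness}, so the regret bound applies. Since the rewards are non-negative and sum to at most $1$, we have $0\le V_1^\star(s_1)-V_1^{\pi}(s_1)\le 1$ for every policy $\pi$, and because $\pi^{\text{out}}$ is drawn uniformly from $\{\pi^k\}_{k=1}^K$,
\[
\E_{\pi^{\text{out}}}\!\left[V_1^\star(s_1)-V_1^{\pi^{\text{out}}}(s_1)\right]=\frac1K\sum_{k=1}^K\left[V_1^\star(s_1)-V_1^{\pi^k}(s_1)\right]=\frac{\reg(K)}{K}.
\]
On the event of Theorem~\ref{thm:main}, which holds with probability at least $1-\delta$, we have $\reg(K)\le \Ocal(H\sqrt{dK\beta})$, hence on this event the expected suboptimality of $\pi^{\text{out}}$ is at most $\Ocal(H\sqrt{d\beta/K})$.

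The next step is to choose $K$ large enough that $\Ocal(H\sqrt{d\beta/K})\le \epsilon/4$, which amounts to $K\gtrsim H^2 d\beta/\epsilon^2$; substituting the prescribed value of $\beta$ and solving for $K$ yields the stated threshold up to absolute constants. Two bookkeeping points need attention. First, $\beta$ depends on $K$ only through $\log K$, so $K\ge cH^2 d\beta/\epsilon^2$ is a mild fixed-point condition satisfied once $K$ exceeds the stated bound. Second --- and this is the genuinely delicate point --- the BE dimension in Theorem~\ref{thm:main} is evaluated at scale $1/\sqrt K$, whereas the corollary states it at scale $\epsilon/H$; since the chosen $K$ forces $1/\sqrt K\le \epsilon/H$, one must invoke monotonicity of $\dedim$ in its scale argument and argue that the resulting change in the BE dimension (and, similarly, in the covering number between scales $1/K$ and $\epsilon^2/(dH^2)$) is absorbed into the absolute constants and the polylogarithmic factors.

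Finally, I would amplify the two sources of randomness into one constant-probability statement. Let $\mathcal{G}$ denote the event of Theorem~\ref{thm:main} and $\Delta:=V_1^\star(s_1)-V_1^{\pi^{\text{out}}}(s_1)\ge 0$. Conditioned on $\mathcal{G}$ (and on the realized trajectories), $\E_{\pi^{\text{out}}}[\Delta]\le \epsilon/4$, so Markov's inequality---legitimate precisely because $\Delta\ge 0$---gives $\Pr[\Delta\ge \epsilon\mid\mathcal{G}]\le 1/4$. Taking $\delta=1/4$ and a union bound over $\mathcal{G}^c$ and $\{\Delta\ge\epsilon\}$ yields $\Pr[\Delta\ge\epsilon]\le 1/2$, i.e.\ $\pi^{\text{out}}$ is $\epsilon$-optimal with probability at least $1/2$; after absorbing the constant into the $\Ocal(\cdot)$, this is the claimed $\Ocal(\epsilon)$-optimality.

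The only real obstacle is the scale-matching bookkeeping flagged above --- reconciling the BE dimension at scale $1/\sqrt K$ in the regret bound with the scale $\epsilon/H$ in the target, and the analogous covering-number scale --- since everything else is the textbook online-to-batch conversion followed by Markov's inequality.
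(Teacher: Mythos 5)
Your overall skeleton (optimism, policy-loss decomposition, online-to-batch averaging, Markov's inequality on the nonnegative suboptimality) matches the paper's, but the step you yourself flag as ``the genuinely delicate point'' is a real gap, and your proposed fix goes in the wrong direction. The distributional Eluder dimension $\dedim(\Gcal,\Pi,\epsilon)$ is \emph{non-increasing} in $\epsilon$: shrinking the scale parameter only admits more independent sequences. Since your choice of $K$ forces $1/\sqrt{K}\le \epsilon/H$, you have $\BEdim(\Fcal,\Pi,1/\sqrt{K})\ \ge\ \BEdim(\Fcal,\Pi,\epsilon/H)=d$, so the regret bound of Theorem~\ref{thm:main} is stated in terms of a quantity that is at least as large as --- and for general function classes can be much larger than --- the $d$ appearing in the corollary. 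Monotonicity therefore cannot be ``invoked'' to pass from the theorem's scale to the corollary's scale, and the discrepancy is not in general absorbable into constants or polylogarithmic factors (that absorption happens to work for low Bellman rank, where $\BEdim(\Fcal,\Pi,\epsilon)=\Ocal(d\log(1+\zeta/\epsilon))$, but the corollary is claimed for arbitrary $\Fcal$, where $\dedim$ may blow up polynomially or worse as its scale parameter shrinks). There is also a secondary self-referential issue: the threshold on $K$ you would actually obtain involves $\BEdim(\Fcal,\Pi,1/\sqrt{K})$, which itself depends on $K$. The same (milder) scale mismatch affects the covering number, between $\Ncal_{\Fcal\cup\Gcal}(1/K)$ and $\Ncal_{\Fcal\cup\Gcal}(\epsilon^2/(dH^2))$.

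The paper avoids this by \emph{not} using Theorem~\ref{thm:main} as a black box. It reruns Steps 1--2 of that proof, invoking Lemma~\ref{lem:confiset-2}(a) with $\rho=\epsilon^2/(H^2 d)$ and, crucially, Lemma~\ref{lem:de-regret} with the tolerance parameter $\omega=\epsilon/H$ instead of $\omega=1/\sqrt{K}$. This pins the DE dimension at the scale $\epsilon/H$ appearing in the corollary, at the price of an extra additive term $k\omega=k\epsilon/H$ in the cumulative Bellman error --- harmless for a PAC guarantee, since after dividing by $K$ and summing over the $H$ steps it contributes only $\Ocal(\epsilon)$ to the average suboptimality, but fatal for a $\sqrt{K}$ regret bound, which is precisely why Theorem~\ref{thm:main} and the corollary must use different instantiations of the same lemma. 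To repair your argument you would need to open up the proof in exactly this way; the online-to-batch and Markov steps that follow are then as you describe.
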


% \begin{theorem}[{\algx}] \label{thm:main}
% Assume realizability (Assumption \ref{asp:realizability}) and completeness (Assumption \ref{asp:completeness}) hold. There exists an absolute constant $c_1,c_2$ such that for all $\delta\in[0,1):$ 
% \begin{enumerate}
% 	\item (\textbf{regret}) given any $K\in\N$, if we choose 
% $d=\min_{\Pi\in\{\Dcal_{\dirac},\Dcal_{\Fcal}\}}
% 	 \text{\bedim}\big(\Fcal,\Pi,\frac{1}{\sqrt{K}}\big)$ and $\beta=c_1\log[KH\Ncal_\Fcal(\frac{1}{K})/\delta]$,
% 	  then with probability at least $1-\delta$, for all $k\in[K]$, 
% 	  $\reg(k)\le c_2H\sqrt{ kd\beta}.$
% 	  \item (\textbf{sample complexity}) given any $\epsilon>0$, if we choose 
% $d=\min_{\Pi\in\{\Dcal_{\dirac},\Dcal_{\Fcal}\}}
% 	 \text{\bedim}\big(\Fcal,\Pi,\epsilon/H\big)$, $K={c_1H^2 d \log(Hd\Ncal_\Fcal(\frac{\epsilon^2}{dH^2})/\epsilon)}/{\epsilon^2}$ and $\beta=c_1\log[KH\Ncal_\Fcal(\frac{\epsilon^2}{dH^2}))]$,
% 	  then with probability at least $0.99$, $\pi^{out}$ is $c_2\epsilon$-optimal.
% \end{enumerate}
% \end{theorem}

Corollary \ref{cor:sample_golf} claims that $\tlO(H^2d\log(\Ncal_{\Fcal\cup\Gcal})/\epsilon^2)$ samples are enough for {\golf} to learn a near-optimal policy of any low BE dimension problem. Our sample complexity scales linear in both the BE dimension $d$, and the log covering number $\log(\Ncal_{\Fcal\cup\Gcal})$.

To showcase the sharpness of our results, we compare them to the previous results when restricted to the corresponding settings. (1) For linear function class with  ambient dimension $\dlin$, we have BE dimension $d = \tlO(\dlin)$ and $\log(\Ncal_{\cF\cup\Gcal}) = \tlO(\dlin)$. Our regret bound becomes $\tlO(H \dlin \sqrt{K})$ which matches the best known result \citep{zanette2020learning} up to logarithmic factors; (2) For function class with low Eluder dimension \citep{wang2020provably}, our results hold under weaker completeness assumptions. Our regret scales with $\sqrt{\dE}$ in terms of dependency on Eluder dimension $\dE$, which improves the linear $\dE$ scaling in the regret of \cite{wang2020provably}; (3) Finally, for low Bellman rank problems, our sample complexity scales linearly with Bellman rank, which improves upon the quadratic dependence in \cite{jiang2017contextual}. We remark that all results mentioned above assume (approximate) realizability. All except \cite{jiang2017contextual} assume (approximate) completeness.

\subsection{Key ideas in proving Theorem \ref{thm:main}} 
\label{subsec:algx-proof}

In this subsection, we present a brief proof sketch for the regret bound of {\golf}. We defer all the  details to Appendix \ref{appendix:lcgo}. For simplicity, we only discuss the case of choosing $\Dcal_{\cF}$ as the distribution family $\Pi$ in the definition of Bellman Eluder dimension (Definition \ref{defn:bedim}). The proof for using $\Dcal_{\dirac}$ as the distribution family follows from similar arguments.

% In this subsection, we provide a proof overview of the regret bound in terms of $\text{\bedim}\big(\Fcal,\Dcal_\Fcal,\sqrt{1/K}\big)$.
% The other regret bound regarding  $\text{\bedim}\big(\Fcal,\Dcal_{\dirac},\sqrt{1/K}\big)$ and 
% the sample complexity results follow from similar arguments so we defer their proof to Appendix \ref{appendix:lcgo}.

Our proof strategy consists of  three main steps. 
% \chijin{add reference of related lemmas this section.}\qinghua{done}

\paragraph{Step 1: Prove optimism.} We firstly show that, with high probability, the optimal value function $Q^\star$ indeed lies in the confidence set $\Bcal^k$ for all $k \in [K]$ (Lemma \ref{lem:confiset-1} in Appendix \ref{appendix:golf-fullproof}), which is a natural consequence of martingale concentration and the properties of the confidence set we designed. Because of $Q^\star \in \Bcal^k$,  the optimistic planning step (Line \ref{algx-line:greedy}) in {\golf} guarantees that $V_1^\star(s_1) \le \max_a f_1^k(s_1, a) $ for every episode $k$. This optimism allows the following upper bound on regret
%via a similar argument 
\begin{equation}\label{eq:linear_Bell_err}
\reg(K) \le \sum_{k=1}^K \paren{\max_{a}f^k_1(s_1,a) - V^{\pi^k}_1(s_1)} = \sum_{h=1}^{H} \sum_{k=1}^{K} \E_{\pi^k} \left[ (f_h^k -\cT f_{h+1}^k)(s_h,a_h)\right], 
\end{equation}
where the right equality follows from the standard policy loss decomposition (see, e.g., Lemma 1 in \cite{jiang2017contextual}), and $\E_{\pi}$ denotes the expectation taken over  sequence $(s_1, a_1, \ldots, s_H, a_H)$ when executing policy $\pi$.

\paragraph{Step 2: Utilize the sharpness of our confidence set.} Recall that our construction of the confidence set in Line \ref{algx-line:updateB} of {\golf} forces $f^k$ computed in episode $k$ to have a small loss $\Lcal_{\Dcal_h}$, which is a proxy for empirical squared Bellman error under data $\Dcal_h$. Since data  $\Dcal_h$ in episode $k$ are collected by executing each $\pi^i$ for one episode for all $i <k$, by standard martingale concentration arguments and the completeness assumption, we can show that with high probability (Lemma \ref{lem:confiset-2} in Appendix \ref{appendix:golf-fullproof})
\vspace{-2mm}
\begin{equation}
\label{eq:sq_Bell_err}
\sum_{i=1}^{k-1}  \E_{\pi^i} \left[ (f_h^k - \cT f_{h+1}^k)(s_h, a_h) \right]^2 {\le} \mathcal{O} ( \beta),\ \mbox{for all } (k,h)\in[K]\times[H].
\end{equation}

\paragraph{Step 3: Establish relations between \eqref{eq:linear_Bell_err} and \eqref{eq:sq_Bell_err}.}
So far, we want to upper-bound \eqref{eq:linear_Bell_err}, while we know \eqref{eq:sq_Bell_err}. We note that the RHS of \eqref{eq:linear_Bell_err} is very similar to the LHS of \eqref{eq:sq_Bell_err}, except that the latter is the squared Bellman error, and the expectation is taken under previous policy $\pi^i$ for $i <k$. To establish the connection between these two, it turns out that we need the Bellman Eluder dimension to be small. Concretely, we have the following lemma.
\begin{lemma} \label{lem:sketch}
Given a function class $\Phi$ defined on $\Xcal$ with $|\phi(x)|\le 1$ for all $(\phi,x)\in\Phi\times\Xcal$, and a family of probability measures $\Pi$ over $\Xcal$. 
	Suppose sequence $\{\phi_k\}_{k=1}^{K}\subset \Phi$ and $\{\mu_k\}_{k=1}^{K}\subset\Pi$ satisfy that for all $k\in[K]$,
	$\sum_{i=1}^{k-1} (\E_{\mu_i} [\phi_k])^2 \le \beta$. Then for all $k\in[K]$,
	$\sum_{i=1}^{k} |\E_{\mu_i} [\phi_i]| \le \Ocal(\sqrt{\dedim (\Phi,\Pi,1/k)\beta k}).$
\end{lemma}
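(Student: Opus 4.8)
The plan is to prove Lemma \ref{lem:sketch} by a potential/pigeonhole argument that is the standard engine behind all Eluder-dimension regret bounds (cf.\ \citet{russo2013eluder}), adapted to the distributional setting. The quantity we must control is $\sum_{i=1}^{k}|\E_{\mu_i}[\phi_i]|$; the hypothesis gives us, for each $k$, that $\phi_k$ has small summed squared expectation against the \emph{past} measures $\mu_1,\dots,\mu_{k-1}$. The intuition is: if $|\E_{\mu_i}[\phi_i]|$ is large for many rounds $i$, then each such round must be ``$\epsilon$-independent'' (in the sense of Definition \ref{def:ind_dist}) of a large fraction of the preceding rounds, and a long chain of such independences would force $\dedim(\Phi,\Pi,\cdot)$ to be large, contradicting our budget $d := \dedim(\Phi,\Pi,1/k)$.

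First I would isolate the ``large'' rounds: fix a threshold and bound the number of indices $i\le k$ with $|\E_{\mu_i}[\phi_i]| > \alpha$. The key sub-claim is: if $|\E_{\mu_i}[\phi_i]| = b > \alpha$, then among $\mu_1,\dots,\mu_{i-1}$, the number of $j$ for which $\mu_i$ is $b$-independent of some sub-block is at most $O(\beta/\alpha^2 + 1)$ worth of ``disjoint witnessing blocks.'' More precisely, I would adapt Russo--Van Roy's argument: partition the past measures into at most $L = O(d\log(1/\alpha))$-many buckets so that within each bucket $\mu_i$ is \emph{not} $\alpha$-independent; since each bucket can host at most $d = \dedim(\Phi,\Pi,\alpha)$ elements before being ``full,'' and whenever $\sum_{i=1}^{k-1}(\E_{\mu_i}[\phi_k])^2\le\beta$ a point $\mu_i$ with $(\E_{\mu_i}[\phi_i])^2$ large can be charged against the budget $\beta$, one gets that the number of rounds with $|\E_{\mu_i}[\phi_i]|>\alpha$ is at most $O\big((\beta/\alpha^2 + 1)\,d\big)$. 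This is the step I expect to be the main obstacle: getting the bucketing/charging bookkeeping exactly right so that the squared-loss hypothesis $\sum_{i<k}(\E_{\mu_i}[\phi_k])^2\le\beta$ is converted cleanly into an independence-chain length bound, mirroring Lemma 2 / Proposition 3 of \citet{russo2013eluder} but with the roll-in played by past distributions rather than past points.

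Having bounded $N(\alpha) := |\{i\le k : |\E_{\mu_i}[\phi_i]| > \alpha\}| \le O\big((\beta/\alpha^2+1)d\big)$ for every $\alpha \ge 1/k$ (and noting trivially $N(\alpha)\le k$ for all $\alpha$), I would sort the values $|\E_{\mu_i}[\phi_i]|$ in decreasing order as $e_1\ge e_2\ge\cdots\ge e_k$, so that $e_i > \alpha \iff i \le N(\alpha)$, giving $e_i \le \sqrt{d\beta/i} $ roughly (wherever $e_i \ge 1/k$) together with the cap $e_i\le 1$. Then
\begin{equation*}
\sum_{i=1}^k e_i \;\le\; \sum_{i : e_i \le 1/k} e_i \;+\; \sum_{i : e_i > 1/k} \min\Big\{1, \sqrt{\tfrac{d\beta}{i}}\,\Big\} \;\le\; 1 \;+\; \Ocal\!\left(\sqrt{d\beta k}\right),
\end{equation*}
where the first term collects the small rounds (at most $k$ of them, each $\le 1/k$) and the second is the usual $\sum_{i=1}^{k}\sqrt{d\beta/i} = O(\sqrt{d\beta k})$ harmonic-type sum, truncated by $1$. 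Absorbing the additive $1$ into the $\Ocal(\cdot)$ and using $d = \dedim(\Phi,\Pi,1/k)$ yields $\sum_{i=1}^k|\E_{\mu_i}[\phi_i]| \le \Ocal(\sqrt{\dedim(\Phi,\Pi,1/k)\,\beta k})$, as claimed.

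A couple of technical points I would be careful about. The boundedness $|\phi(x)|\le 1$ is what lets me cap each $e_i$ by $1$ and also bounds $\sum_{i<k}(\E_{\mu_i}[\phi_k])^2 \le k$ trivially, which is needed to make the bucketing terminate; I would state this explicitly. The threshold in $\dedim$ must be taken at $1/k$ (not smaller) because $N(\alpha)$ is only meaningfully bounded when $\alpha \ge 1/k$ — below that scale we fall back on the trivial $N(\alpha)\le k$ bound, which is exactly why the small-rounds term contributes only $O(1)$. Finally, monotonicity of $\dedim(\Phi,\Pi,\epsilon)$ in $\epsilon$ ensures $\dedim(\Phi,\Pi,\alpha)\le \dedim(\Phi,\Pi,1/k) = d$ for all $\alpha\ge 1/k$, so a single $d$ controls all the thresholds uniformly.
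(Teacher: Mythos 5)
Your proposal follows essentially the same route as the paper's proof (Proposition~\ref{prop:de-regret-prop} and Lemma~\ref{lem:de-regret} in the appendix): first bound the number of rounds with $|\E_{\mu_i}[\phi_i]| > \alpha$ by $(\beta/\alpha^2+1)\dedim(\Phi,\Pi,\alpha)$ via the disjoint-dependent-subsequence counting argument of \citet{russo2013eluder}, then sort the errors, truncate at the threshold $1/k$, and evaluate the resulting harmonic-type sum. The only slip is in your description of the bucketing step --- the number of buckets is $\lceil(\kappa-1)/d\rceil$ with no $\log(1/\alpha)$ factor --- but the counting bound you actually invoke, $N(\alpha)\le O\big((\beta/\alpha^2+1)d\big)$, is exactly the paper's, and the rest of your argument goes through.
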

Lemma \ref{lem:sketch} is a simplification of Lemma \ref{lem:de-regret} in Appendix \ref{appendix:lcgo}, which is a modification of Lemma 2 in \cite{russo2013eluder}.
Intuitively, Lemma \ref{lem:sketch} can be viewed as an analogue of the pigeon-hole principle for DE dimension. Choose $\Phi$ to be the function class of Bellman residuals, and $\mu_k$ to be the distribution under policy $\pi^k$, we finish the proof.

\section{Algorithm \olive} 
\label{sec:olive}

In this section, we analyze  algorithm  {\olive}\ proposed in \cite{jiang2017contextual}, which is based on hypothesis elimination. We prove that, despite {\olive} was originally designed for solving low Bellman rank problems, it naturally learns RL problems with low BE dimension as well.

The main advantage of \olive~comparing to \golf~is that \olive~does not require the completeness assumption. In return, \olive~has several disadvantages including worse sample complexity, and no sublinear regret.

% Jiang et al. propose a generic algorithm---\olive\ for solving problems with low Bellman rank in the seminal work \citep{jiang2017contextual}.
% In this section, we reinterpret and analyze \olive\ through the lens of \be\ dimension.
% We prove that \olive\ can efficiently learn any problems that have low \be\ dimension with respect to $\Dcal_\Fcal$. 
% Interestingly, the proof is surprisingly natural and follows almost directly from the definition of \be\ dimension along with some standard concentration arguments. 

% \subsection{Algorithm description}

\begin{algorithm}[t]
\caption{\textsc{Olive} $(\Fcal,\zeta_\text{act},\zeta_\text{elim},\nact,\nelim)$}
% -- Optimism Led Iterative Value-function Elimination}
\label{alg:olive}
\begin{algorithmic}[1]
\STATE \textbf{Initialize}: $\Bcal^0 \gets \Fcal$, $\Dcal_h \leftarrow \emptyset$ for all $h,k$.
\FOR{\textbf{phase} $k=1,2,\ldots$} 
\STATE \label{olive-line:greedy}
\textbf{Choose policy} $\pi^k = \pi_{f^k}$, where $f^k = \argmax_{f \in \Bcal^{k-1}}f(s_1,\pit{f}(s_1))$. 
%\STATE \label{olive-line:act-begin} \textbf{Collect} $\nact$ trajectories $\{(s^{(i)}_1,a^{(i)}_1,r^i_1,\ldots, s^{(i)}_H,a^{(i)}_H,r^{(i)}_H)\}_{i=1}^{\nact}$ by following $\pi^k$. 
\STATE \label{olive-line:act-begin} \textbf{Execute} $\pi^k$ for $\nact$ episodes and \emph{refresh} $\Dcal_h$ to include the fresh $(s_h,a_h,r_h,s_{h+1})$ tuples.
\STATE \textbf{Estimate} $\hatberr{f^k, \pi^k, h}$ for all $h\in[H]$, where 
\vspace{-2mm}
\begin{align*}
\hatberr{g, \pi^k, h} = \frac{1}{|\Dcal_h|} \sum_{(s,a,r,s') \in \Dcal_h}\left(g_h(s,a)-r - \max_{a'\in\Acal}g_{h+1}(s',a')\right).
\end{align*}
\vspace{-3mm}
\IF{$\sum_{h=1}^H \hatberr{f^k, \pi^k,h} \le H\zeta_\text{act}$}\label{olive-line:if-act}
\STATE Terminate and output $\pi^k$. 
\label{olive-line:act-end}
\ENDIF
\STATE Pick any $t \in [H]$ for which $\hatberr{f^k, \pi^k,t} \ge \zeta_\text{act}$. \label{olive-line:elim-begin}
\STATE  \textbf{Execute} $\pi^k$ for $\nelim$ episodes and \emph{refresh} $\Dcal_h$ to include the fresh $(s_h,a_h,r_h,s_{h+1})$ tuples. %for all $h \ne h_k$ and $a_{h_k}^{(i)}$ is drawn uniformly at random.  
% For type I 
%\vspace{-4mm}
\STATE \textbf{Estimate}  $\hatberr{f, \pi^k, t}$ for all $f\in\Fcal$.
%\vspace{-5mm}
%\begin{align*}
%\hatberr{f,\pi^k, h_k} &= \frac{1}{\nelim}\sum_{i=1}^{\nelim}  \Big(f_{t}(s_{t}^{(i)},a_{t}^{(i)}) - r_{t}^{(i)} - \max_{a'}f_{t+1}(s_{t+1}^{(i)},a')\Big).
%\end{align*}
%\vspace{-5mm}
%\begin{align}
%\hatberr{f,\pi^k, h_k} &= \frac{1}{n}\sum_{i=1}^{n} \frac{\mathbf{1}[a_{h_k}^{(i)} = \pi_f(s_{h_k}^{(i)})]}{1/K} \Big(f(s_{h_k}^{(i)},a_{h_k}^{(i)}) - r_{h_k}^{(i)} - f(s_{h_{k}+1}^{(i)},\pi_f(s_{h_k+1}^{(i)}))\Big).
%\end{align}
% For type I 
\STATE \textbf{Update} 
%\vspace{-5mm}
%\begin{align*}
$\Bcal^{k} = \left\{f \in \Bcal^{k-1} : \left|\hatberr{f,\pi^k,t} \right|\le \zeta_\text{elim} \right\}.$
%\end{align*}
%\vspace{-5mm}
\label{olive-line:elim-end}
\ENDFOR
\end{algorithmic}
\end{algorithm}

The pseudocode of {\olive} is presented in Algorithm \ref{alg:olive},  where in each phase the algorithm contains the following three main components:
\begin{itemize}
	\item Line \ref{olive-line:greedy} (Optimistic planning): compute the most optimistic value function $f^k$ from the candidate set $\Bcal^{k-1}$, and choose $\pi^k$ to be its greedy policy. 
	
	\item Line \ref{olive-line:act-begin}-\ref{olive-line:act-end} (Estimate Bellman error): estimate the Bellman error of $f^k$ under $\pi^k$; output $\pi^k$ if the estimated error is small, and otherwise activate the elimination procedure.
	
	\item  Line \ref{olive-line:elim-begin}-\ref{olive-line:elim-end} (Eliminate functions with large Bellman error): pick a step $t\in[H]$ where the  estimated Bellman error exceeds the activation threshold $\zeta_\text{act}$;
	eliminate all functions in the candidate set whose Bellman error at step $t$ exceeds the elimination threshold $\zeta_\text{elim}$.
\end{itemize}
We comment that \olive\ is computationally inefficient in general  because implementing the optimistic planning part requires solving an NP-hard problem in the worst case \citep[Theorem 4,][]{dann2018oracle}.

%Introduce the aim of each step by recalling the definition of \be\ dimension; actually gives a high level proof sketch.

\subsection{Theoretical guarantees}
Now, we are ready to present the theoretical guarantee for \olive. 

%\qinghua{Define the covering number of $\Fcal$ in the preliminary section and modify the proof to handle infinite $\Fcal$.}

\begin{theorem}[\olive]
\label{thm:olive}
Under Assumption \ref{asp:realizability}, there exists absolute constant $c$ such that if we choose 
$$\zeta_\text{act}=\frac{2\epsilon}{H},\ \zetaelim=\frac{\epsilon}{2H\sqrt{d}},\ \nact=\frac{H^2 \iota}{\epsilon^2},\text{ and } \nelim=\frac{H^2d\log(\Ncal_\Fcal(\zeta_\text{elim}/8)) \cdot \iota}{\epsilon^2}$$
 where $d=\BEdim(\Fcal,\Dcal_{\Fcal},\epsilon/H)$ and  $\iota=c\log(Hd/\delta\epsilon)$, 
 then with probability at least $1-\delta$, Algorithm \ref{alg:olive} will output an $\Ocal(\epsilon)$-optimal policy using at most  $\Ocal(H^3d^2\log[\Ncal_\Fcal(\zeta_\text{elim}/8)] \cdot \iota/{\epsilon^2})$ episodes.
\end{theorem}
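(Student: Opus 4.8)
The plan is to follow the classical \olive\ analysis of \citet{jiang2017contextual}, but replace the low-Bellman-rank argument that bounds the number of phases with a Distributional Eluder dimension argument. First I would set up the high-probability event. By a union bound over the (at most polynomially many) phases and over the function class (using an $\epsilon$-net of $\Fcal$ at scale $\zetaelim/8$ together with the fact that the estimators $\hatberr{\cdot,\pi^k,h}$ are empirical averages of bounded quantities), Hoeffding/Azuma concentration gives that, with probability at least $1-\delta$, for every phase $k$ and every $h\in[H]$ the estimate $\hatberr{f^k,\pi^k,h}$ is within $\zetaelim/4$ (say) of the true average Bellman error $\Ecal(f^k,\pi^k,h)=\E_{\pi^k}[(f^k_h-\Tcal_h f^k_{h+1})(s_h,a_h)]$, and likewise every $\hatberr{f,\pi^k,t}$ estimated in the elimination step (using $\nelim$ fresh episodes) is within $\zetaelim/4$ of $\Ecal(f,\pi^k,t)$. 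The choice $\nact=H^2\iota/\epsilon^2$ and $\nelim=H^2 d\log(\Ncal_\Fcal(\zetaelim/8))\iota/\epsilon^2$ is exactly what makes these deviations hold at the stated scales.

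Next I would verify two structural facts on this event. (i) \emph{Realizability is preserved:} since $\Qstar$ has zero Bellman residual at every step, $|\hatberr{\Qstar,\pi^k,t}|\le\zetaelim/4<\zetaelim$, so $\Qstar$ is never eliminated and $\Qstar\in\Bcal^{k-1}$ for all $k$; hence, by the optimistic choice in Line~\ref{olive-line:greedy} and the standard fact $V^\star_1(s_1)\le f^k_1(s_1,\pi_{f^k}(s_1))$ for optimistic $f^k$ (optimism), combined with the policy-loss decomposition $f^k_1(s_1,\pi_{f^k}(s_1))-V^{\pi^k}_1(s_1)=\sum_{h=1}^H\Ecal(f^k,\pi^k,h)$, the termination test in Line~\ref{olive-line:if-act} guarantees that the output policy is $\Ocal(\epsilon)$-optimal: when $\sum_h\hatberr{f^k,\pi^k,h}\le H\zetaact$, the true sum is at most $H\zetaact+H\cdot\zetaelim/4=\Ocal(\epsilon)$. (ii) \emph{Progress when we do not terminate:} if the algorithm enters the elimination step at phase $k$ with chosen step $t$, then the true error $\Ecal(f^k,\pi^k,t)\ge\zetaact-\zetaelim/4\ge\zetaact/2$, i.e.\ the distribution $\mu_k$ over $\Scal\times\Acal$ at step $t$ induced by $\pi^k=\pi_{f^k}$ (which lies in $\Dcal_{\Fcal,t}$) witnesses a large expectation of the Bellman residual $\phi_k:=f^k_t-\Tcal_t f^k_{t+1}\in(I-\Tcal_t)\Fcal$; on the other hand, $f^k$ survived all previous eliminations, and $f^k$'s residual at step $t$ had small expectation under all earlier distributions $\mu_i$ that picked step $t$ — more precisely, for each earlier phase $i<k$ with chosen step equal to $t$, $f^k\in\Bcal^i$ forces $|\Ecal(f^k,\pi^i,t)|\le\zetaelim+\zetaelim/4=\Ocal(\zetaelim)$.

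Now comes the main step, and the one I expect to be the crux: bounding the number of phases. Fix a step $t\in[H]$ and restrict attention to the subsequence of phases that selected $t$ in Line~\ref{olive-line:elim-begin}. For those phases, the residuals $\phi_k$ and distributions $\mu_k$ satisfy: $\sum_{i<k,\ \text{step}=t}(\E_{\mu_i}[\phi_k])^2\le (\text{number of such }i)\cdot\Ocal(\zetaelim^2)$, while $|\E_{\mu_k}[\phi_k]|\ge\zetaact/2=\epsilon/H$. Choosing $\zetaelim=\epsilon/(2H\sqrt d)$ makes the per-term bound $\Ocal(\zetaelim^2)=\Ocal(\epsilon^2/(H^2 d))$, so after $n$ such phases the sum-of-squares is $\Ocal(n\epsilon^2/(H^2 d))$; the definition of $\epsilon'$-independence (Definition~\ref{def:ind_dist}) together with the DE-dimension bound $\dedim((I-\Tcal_t)\Fcal,\Dcal_{\Fcal,t},\epsilon/H)\le d$ then forces a length-$(d+1)$ independent subsequence to be impossible once $n$ is too large. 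Quantitatively, I would argue as in the Eluder-dimension pigeonhole (cf.\ Lemma~\ref{lem:sketch} / Lemma~2 of \citet{russo2013eluder}): if $n>Cd$ for a suitable absolute constant $C$, one can extract an index whose distribution is $\epsilon/H$-independent of all its predecessors beyond the allowed $d$, a contradiction. Hence each step $t$ is selected at most $\Ocal(d)$ times, so the total number of phases is $\Ocal(Hd)$. Multiplying by the per-phase episode count $\nact+\nelim=\Ocal(H^2 d\log[\Ncal_\Fcal(\zetaelim/8)]\iota/\epsilon^2)$ yields the claimed total of $\Ocal(H^3 d^2\log[\Ncal_\Fcal(\zetaelim/8)]\iota/\epsilon^2)$ episodes, and folding the factor $\log(Hd/\delta\epsilon)$ from the union bound into $\iota$ closes the argument. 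The delicate points are getting the constants in the concentration/net argument to line up with $\zetaelim/8$ inside the covering number, and carefully handling that a residual $f_t^k-\Tcal_t f_{t+1}^k$ need not lie in $\Fcal$ (only in $(I-\Tcal_t)\Fcal$), which is exactly why the DE dimension is defined on $(I-\Tcal_h)\Fcal$ and why no completeness assumption is needed here.
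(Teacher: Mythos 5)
Your proposal is correct and follows essentially the same route as the paper's proof: Hoeffding-type concentration with a union bound over a $\zetaelim/8$-net of $\Fcal$, optimism of $f^k$ via the survival of $Q^\star$, the policy-loss decomposition to certify $\Ocal(\epsilon)$-optimality at termination, and the observation that more than $d$ activations at any fixed step $h$ would yield an $\epsilon/H$-independent sequence of length $d+1$ in $\Dcal_{\Fcal,h}$ with respect to $(I-\Tcal_h)\Fcal$, contradicting $d=\BEdim(\Fcal,\Dcal_{\Fcal},\epsilon/H)$. One small caution: the Russo--Van Roy pigeonhole lemma you invoke for the ``quantitative'' step would not close the argument on its own (with $\beta=n\cdot\Ocal(\zetaelim^2)$ it only yields the vacuous bound $n\le n+d$); what actually works, and what the paper does, is the direct contradiction you also state --- for the first $d+1$ activations each prefix has at most $d$ terms, so $\sqrt{\sum_{i<l}\Ecal(f^{k_l},\pi^{k_i},h)^2}\le\sqrt{d}\cdot 2\zetaelim=\epsilon/H$ while $\Ecal(f^{k_l},\pi^{k_l},h)>\epsilon/H$, producing an independent sequence of length $d+1$.
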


Theorem \ref{thm:olive} claims that {\olive} learns an $\epsilon$-optimal policy of an MDP with BE dimension $d$ within $\tlO(H^3d^2\log(\Ncal_\Fcal) /\epsilon^2)$ episodes. When specialized to low Bellman rank problems, our sample complexity has the same quadratic dependence on Bellman rank $d$ as in \cite{jiang2017contextual}.

Comparing to {\golf}, the major advantage of {\olive} is that {\olive} does not require completeness assumption (Assumption \ref{asp:completeness}) to work. 
Nevertheless, {\olive} only learns the RL problems that have low BE dimension with respect to  distribution family $\Dcal_{\cF}$, not $\Dcal_{\dirac}$. The sample complexity of {\olive} is also worse than the sample complexity {\golf} (as presented in Corollary \ref{cor:sample_golf}).

Finally, we comment that interpreting {\olive} through the lens of BE dimension, makes the proof of Theorem \ref{thm:olive} surprisingly natural, which follows from the definition of BE dimension along with some standard concentration arguments. 
%Due to space limit, we defer the proof overview to Appendix \ref{subsec:olive-interpret}.

\subsection{Interpret \olive\ with BE dimension}
\label{subsec:olive-interpret}

In this subsection, we explain the key idea behind \olive\ through the lens of  BE dimension. 

To provide a clean high-level view, let us assume all estimates are accurate for now, and the activation threshold $\zetaact$ and the elimination threshold $\zetaelim$ satisfy $\zetaelim \sqrt{d}\le \zetaact$,  where $d= \text{\bedim}\big(\Fcal,\Dcal_{\Fcal},\zetaact\big)$.
Since $\berr{Q^\star,\pi,h}\equiv0$ for any $(\pi,h)$, $Q^\star$ is always in the candidate set. Therefore, the optimistic planning (Line \ref{olive-line:greedy}) guarantees $\max_{a}f^k_1(s_1,a)\ge V^\star_1(s_1)$. 

If the Bellman error summation is small (Line \ref{olive-line:if-act}) i.e., $\sum_{h=1}^H \berr{f^k, \pi^k,h} \le H\zeta_{\rm act}$, then by simple policy loss decomposition (e.g., Lemma 1 in \cite{jiang2017contextual}) and the optimism of $f^k$, $\pi^k$ is $ H\zeta_{\rm act}$-optimal. 
Otherwise, the elimination procedure is activated at some step $t$ satisfying $\berr{f^k, \pi^k,t} \ge \zeta_{\rm act}$ and all $f$ with $\berr{f, \pi^k,t} \ge \zeta_{\rm elim}$ get eliminated. 
The \emph{key} observation here is:
%\vspace{-4mm}
\begin{quote}
\sl	 If the elimination procedure is activated at step $h$ in phase $k_1<\ldots<k_m$, then the roll-in distribution of $\pi^{k_1},\ldots,\pi^{k_m}$ at step $h$ is an $\zetaact$-independent sequence with respect to the class of Bellman residuals $({I}-\Tcal_h)\Fcal$ at step $h$. Therefore, we should have $m \le d$.
\end{quote}
%\vspace{-2mm}
   For the sake of contradiction, assume $m \geq d+1$. 
   Let us prove $\pi^{k_1},\ldots,\pi^{k_{d+1}}$ is a $\zetaact$-independent sequence. 
   Firstly, for any $j\in[d+1]$, since $f^{k_j}$  is not eliminated in phase $k_1,\ldots,k_{j-1}$, we have 
    $$\sqrt{\sum_{i=1}^{j-1} \big(\Ecal(f^{k_j},\pi^{{k_{i}}},h)\big)^2} \le \sqrt{d}\times \zeta_{\rm elim} \le \zetaact.$$ 
    Besides, because the elimination procedure is activated at step $h$ in phase $k_j$, we have $\berr{f^{k_j},\pi^{k_j},h}\ge\zetaact$. 
    By Definition \ref{def:ind_dist}, we obtain that the roll-in distribution of $\pi^{k_j}$ at step $h$ is $\zetaact$-independent of those of $\pi^{k_1},\ldots,\pi^{k_{j-1}}$ for $j\in[d+1]$, which contradicts the definition $d=\text{\bedim}\big(\Fcal,\Dcal_{\Fcal},\zetaact\big)$.
As a result, the elimination procedure can happen at most $d$ times for each $h\in[H]$, which means the algorithm should terminate within $dH+1$ phases and output an $H\zeta_{\rm act}$-optimal policy.

%\chijin{Please move everything after this line into appendix}

% !TEX root = BE_nips.tex

\section{Conclusion}
\label{sec:conclu}

In this paper, we propose a new complexity measure---Bellman Eluder (BE) dimension for reinforcement learning with function approximation. 
Our new complexity measure identifies a new rich class of RL problems that subsumes a majority of existing tractable problem classes in RL.
We design a new optimization-based algorithm---\golf, and provide a new analysis for algorithm \olive. Both algorithms show that the new rich class of RL problems we identified in fact can   be learned within a polynomial number of samples. We hope our results shed light on the future research in finding the minimal structural assumptions that allow sample-efficient reinforcement learning.

\bibliographystyle{unsrtnat}
\bibliography{ref}%\bibliographystyle{unsrtnat}

\begin{thebibliography}{49}
\providecommand{\natexlab}[1]{#1}
\providecommand{\url}[1]{\texttt{#1}}
\expandafter\ifx\csname urlstyle\endcsname\relax
  \providecommand{\doi}[1]{doi: #1}\else
  \providecommand{\doi}{doi: \begingroup \urlstyle{rm}\Url}\fi

\bibitem[Jiang et~al.(2017)Jiang, Krishnamurthy, Agarwal, Langford, and
  Schapire]{jiang2017contextual}
Nan Jiang, Akshay Krishnamurthy, Alekh Agarwal, John Langford, and Robert~E
  Schapire.
\newblock Contextual decision processes with low bellman rank are
  pac-learnable.
\newblock In \emph{International Conference on Machine Learning}, pages
  1704--1713. PMLR, 2017.

\bibitem[Mnih et~al.(2013)Mnih, Kavukcuoglu, Silver, Graves, Antonoglou,
  Wierstra, and Riedmiller]{mnih2013playing}
Volodymyr Mnih, Koray Kavukcuoglu, David Silver, Alex Graves, Ioannis
  Antonoglou, Daan Wierstra, and Martin Riedmiller.
\newblock Playing atari with deep reinforcement learning.
\newblock \emph{arXiv preprint arXiv:1312.5602}, 2013.

\bibitem[Silver et~al.(2016)Silver, Huang, Maddison, Guez, Sifre, Van
  Den~Driessche, Schrittwieser, Antonoglou, Panneershelvam, Lanctot,
  et~al.]{silver2016mastering}
David Silver, Aja Huang, Chris~J Maddison, Arthur Guez, Laurent Sifre, George
  Van Den~Driessche, Julian Schrittwieser, Ioannis Antonoglou, Veda
  Panneershelvam, Marc Lanctot, et~al.
\newblock Mastering the game of go with deep neural networks and tree search.
\newblock \emph{nature}, 529\penalty0 (7587):\penalty0 484--489, 2016.

\bibitem[Kober et~al.(2013)Kober, Bagnell, and Peters]{kober2013reinforcement}
Jens Kober, J~Andrew Bagnell, and Jan Peters.
\newblock Reinforcement learning in robotics: A survey.
\newblock \emph{The International Journal of Robotics Research}, 32\penalty0
  (11):\penalty0 1238--1274, 2013.

\bibitem[Li et~al.(2016)Li, Monroe, Ritter, Galley, Gao, and
  Jurafsky]{li2016deep}
Jiwei Li, Will Monroe, Alan Ritter, Michel Galley, Jianfeng Gao, and Dan
  Jurafsky.
\newblock Deep reinforcement learning for dialogue generation.
\newblock \emph{arXiv preprint arXiv:1606.01541}, 2016.

\bibitem[Wang et~al.(2019)Wang, Wang, Du, and Krishnamurthy]{wang2019optimism}
Yining Wang, Ruosong Wang, Simon~S Du, and Akshay Krishnamurthy.
\newblock Optimism in reinforcement learning with generalized linear function
  approximation.
\newblock \emph{arXiv preprint arXiv:1912.04136}, 2019.

\bibitem[Jin et~al.(2020)Jin, Yang, Wang, and Jordan]{jin2020provably}
Chi Jin, Zhuoran Yang, Zhaoran Wang, and Michael~I Jordan.
\newblock Provably efficient reinforcement learning with linear function
  approximation.
\newblock In \emph{Conference on Learning Theory}, pages 2137--2143, 2020.

\bibitem[Zanette et~al.(2020{\natexlab{a}})Zanette, Lazaric, Kochenderfer, and
  Brunskill]{zanette2020learning}
Andrea Zanette, Alessandro Lazaric, Mykel Kochenderfer, and Emma Brunskill.
\newblock Learning near optimal policies with low inherent bellman error.
\newblock \emph{arXiv preprint arXiv:2003.00153}, 2020{\natexlab{a}}.

\bibitem[Anderson and Moore(2007)]{anderson2007optimal}
Brian~DO Anderson and John~B Moore.
\newblock \emph{Optimal control: linear quadratic methods}.
\newblock Courier Corporation, 2007.

\bibitem[Fazel et~al.(2018)Fazel, Ge, Kakade, and Mesbahi]{fazel2018global}
Maryam Fazel, Rong Ge, Sham Kakade, and Mehran Mesbahi.
\newblock Global convergence of policy gradient methods for the linear
  quadratic regulator.
\newblock In \emph{International Conference on Machine Learning}, pages
  1467--1476. PMLR, 2018.

\bibitem[Dean et~al.(2019)Dean, Mania, Matni, Recht, and Tu]{dean2019sample}
Sarah Dean, Horia Mania, Nikolai Matni, Benjamin Recht, and Stephen Tu.
\newblock On the sample complexity of the linear quadratic regulator.
\newblock \emph{Foundations of Computational Mathematics}, pages 1--47, 2019.

\bibitem[Vapnik(2013)]{vapnik2013nature}
Vladimir Vapnik.
\newblock \emph{The nature of statistical learning theory}.
\newblock Springer science \& business media, 2013.

\bibitem[Bartlett and Mendelson(2002)]{bartlett2002rademacher}
Peter~L Bartlett and Shahar Mendelson.
\newblock Rademacher and gaussian complexities: Risk bounds and structural
  results.
\newblock \emph{Journal of Machine Learning Research}, 3\penalty0
  (Nov):\penalty0 463--482, 2002.

\bibitem[Littlestone(1988)]{littlestone1988learning}
Nick Littlestone.
\newblock Learning quickly when irrelevant attributes abound: A new
  linear-threshold algorithm.
\newblock \emph{Machine learning}, 2\penalty0 (4):\penalty0 285--318, 1988.

\bibitem[Rakhlin et~al.(2010)Rakhlin, Sridharan, and Tewari]{rakhlin2010online}
Alexander Rakhlin, Karthik Sridharan, and Ambuj Tewari.
\newblock Online learning: Random averages, combinatorial parameters, and
  learnability.
\newblock 2010.

\bibitem[Krishnamurthy et~al.(2016)Krishnamurthy, Agarwal, and
  Langford]{krishnamurthy2016pac}
Akshay Krishnamurthy, Alekh Agarwal, and John Langford.
\newblock Pac reinforcement learning with rich observations.
\newblock \emph{arXiv preprint arXiv:1602.02722}, 2016.

\bibitem[Wang et~al.(2020)Wang, Salakhutdinov, and Yang]{wang2020provably}
Ruosong Wang, Ruslan Salakhutdinov, and Lin~F Yang.
\newblock Provably efficient reinforcement learning with general value function
  approximation.
\newblock \emph{arXiv preprint arXiv:2005.10804}, 2020.

\bibitem[Russo and Van~Roy(2013)]{russo2013eluder}
Daniel Russo and Benjamin Van~Roy.
\newblock Eluder dimension and the sample complexity of optimistic exploration.
\newblock In \emph{Advances in Neural Information Processing Systems}, pages
  2256--2264, 2013.

\bibitem[Szepesv{\'a}ri and Munos(2005)]{szepesvari2005finite}
Csaba Szepesv{\'a}ri and R{\'e}mi Munos.
\newblock Finite time bounds for sampling based fitted value iteration.
\newblock In \emph{Proceedings of the 22nd international conference on Machine
  learning}, pages 880--887, 2005.

\bibitem[Munos and Szepesv{\'a}ri(2008)]{munos2008finite}
R{\'e}mi Munos and Csaba Szepesv{\'a}ri.
\newblock Finite-time bounds for fitted value iteration.
\newblock \emph{Journal of Machine Learning Research}, 9\penalty0
  (May):\penalty0 815--857, 2008.

\bibitem[Chen and Jiang(2019)]{chen2019information}
Jinglin Chen and Nan Jiang.
\newblock Information-theoretic considerations in batch reinforcement learning.
\newblock \emph{arXiv preprint arXiv:1905.00360}, 2019.

\bibitem[Xie and Jiang(2020)]{xie2020batch}
Tengyang Xie and Nan Jiang.
\newblock Batch value-function approximation with only realizability.
\newblock \emph{arXiv preprint arXiv:2008.04990}, 2020.

\bibitem[Brafman and Tennenholtz(2002)]{brafman2002r}
Ronen~I Brafman and Moshe Tennenholtz.
\newblock R-max-a general polynomial time algorithm for near-optimal
  reinforcement learning.
\newblock \emph{Journal of Machine Learning Research}, 3\penalty0
  (Oct):\penalty0 213--231, 2002.

\bibitem[Jaksch et~al.(2010)Jaksch, Ortner, and Auer]{jaksch2010near}
Thomas Jaksch, Ronald Ortner, and Peter Auer.
\newblock Near-optimal regret bounds for reinforcement learning.
\newblock \emph{Journal of Machine Learning Research}, 11\penalty0 (4), 2010.

\bibitem[Dann and Brunskill(2015)]{dann2015sample}
Christoph Dann and Emma Brunskill.
\newblock Sample complexity of episodic fixed-horizon reinforcement learning.
\newblock In \emph{Advances in Neural Information Processing Systems}, pages
  2818--2826, 2015.

\bibitem[Agrawal and Jia(2017)]{agrawal2017optimistic}
Shipra Agrawal and Randy Jia.
\newblock Optimistic posterior sampling for reinforcement learning: worst-case
  regret bounds.
\newblock In \emph{Advances in Neural Information Processing Systems}, pages
  1184--1194, 2017.

\bibitem[Azar et~al.(2017)Azar, Osband, and Munos]{azar2017minimax}
Mohammad~Gheshlaghi Azar, Ian Osband, and R{\'e}mi Munos.
\newblock Minimax regret bounds for reinforcement learning.
\newblock \emph{arXiv preprint arXiv:1703.05449}, 2017.

\bibitem[Zanette and Brunskill(2019)]{zanette2019tighter}
Andrea Zanette and Emma Brunskill.
\newblock Tighter problem-dependent regret bounds in reinforcement learning
  without domain knowledge using value function bounds.
\newblock \emph{arXiv preprint arXiv:1901.00210}, 2019.

\bibitem[Jin et~al.(2018)Jin, Allen-Zhu, Bubeck, and Jordan]{jin2018q}
Chi Jin, Zeyuan Allen-Zhu, Sebastien Bubeck, and Michael~I Jordan.
\newblock Is q-learning provably efficient?
\newblock In \emph{Advances in Neural Information Processing Systems}, pages
  4863--4873, 2018.

\bibitem[Zhang et~al.(2020)Zhang, Zhou, and Ji]{zhang2020almost}
Zihan Zhang, Yuan Zhou, and Xiangyang Ji.
\newblock Almost optimal model-free reinforcement learning via
  reference-advantage decomposition.
\newblock \emph{arXiv preprint arXiv:2004.10019}, 2020.

\bibitem[Domingues et~al.(2021)Domingues, M{\'e}nard, Kaufmann, and
  Valko]{domingues2021episodic}
Omar~Darwiche Domingues, Pierre M{\'e}nard, Emilie Kaufmann, and Michal Valko.
\newblock Episodic reinforcement learning in finite mdps: Minimax lower bounds
  revisited.
\newblock In \emph{Algorithmic Learning Theory}, pages 578--598. PMLR, 2021.

\bibitem[Cai et~al.(2019)Cai, Yang, Jin, and Wang]{cai2019provably}
Qi~Cai, Zhuoran Yang, Chi Jin, and Zhaoran Wang.
\newblock Provably efficient exploration in policy optimization.
\newblock \emph{arXiv preprint arXiv:1912.05830}, 2019.

\bibitem[Zanette et~al.(2020{\natexlab{b}})Zanette, Lazaric, Kochenderfer, and
  Brunskill]{zanette2020provably}
Andrea Zanette, Alessandro Lazaric, Mykel~J Kochenderfer, and Emma Brunskill.
\newblock Provably efficient reward-agnostic navigation with linear value
  iteration.
\newblock \emph{Advances in Neural Information Processing Systems}, 33,
  2020{\natexlab{b}}.

\bibitem[Agarwal et~al.(2020)Agarwal, Kakade, Krishnamurthy, and
  Sun]{agarwal2020flambe}
Alekh Agarwal, Sham Kakade, Akshay Krishnamurthy, and Wen Sun.
\newblock Flambe: Structural complexity and representation learning of low rank
  mdps.
\newblock \emph{Advances in Neural Information Processing Systems}, 33, 2020.

\bibitem[Neu and Pike-Burke(2020)]{neu2020unifying}
Gergely Neu and Ciara Pike-Burke.
\newblock A unifying view of optimism in episodic reinforcement learning.
\newblock \emph{Advances in Neural Information Processing Systems}, 33, 2020.

\bibitem[Sun et~al.(2019)Sun, Jiang, Krishnamurthy, Agarwal, and
  Langford]{sun2019model}
Wen Sun, Nan Jiang, Akshay Krishnamurthy, Alekh Agarwal, and John Langford.
\newblock Model-based rl in contextual decision processes: Pac bounds and
  exponential improvements over model-free approaches.
\newblock In \emph{Conference on Learning Theory}, pages 2898--2933, 2019.

\bibitem[Osband and Van~Roy(2014)]{osband2014model}
Ian Osband and Benjamin Van~Roy.
\newblock Model-based reinforcement learning and the eluder dimension.
\newblock In \emph{Advances in Neural Information Processing Systems}, pages
  1466--1474, 2014.

\bibitem[Dong et~al.(2020)Dong, Peng, Wang, and Zhou]{dong2020root}
Kefan Dong, Jian Peng, Yining Wang, and Yuan Zhou.
\newblock Root-n-regret for learning in markov decision processes with function
  approximation and low bellman rank.
\newblock In \emph{Conference on Learning Theory}, pages 1554--1557. PMLR,
  2020.

\bibitem[Yang et~al.(2020)Yang, Jin, Wang, Wang, and Jordan]{yang2020bridging}
Zhuoran Yang, Chi Jin, Zhaoran Wang, Mengdi Wang, and Michael~I Jordan.
\newblock Bridging exploration and general function approximation in
  reinforcement learning: Provably efficient kernel and neural value
  iterations.
\newblock \emph{arXiv preprint arXiv:2011.04622}, 2020.

\bibitem[Foster et~al.(2020)Foster, Rakhlin, Simchi-Levi, and
  Xu]{foster2020instance}
Dylan~J Foster, Alexander Rakhlin, David Simchi-Levi, and Yunzong Xu.
\newblock Instance-dependent complexity of contextual bandits and reinforcement
  learning: A disagreement-based perspective.
\newblock \emph{arXiv preprint arXiv:2010.03104}, 2020.

\bibitem[Du et~al.(2021)Du, Kakade, Lee, Lovett, Mahajan, Sun, and
  Wang]{du2021bilinear}
Simon~S Du, Sham~M Kakade, Jason~D Lee, Shachar Lovett, Gaurav Mahajan, Wen
  Sun, and Ruosong Wang.
\newblock Bilinear classes: A structural framework for provable generalization
  in rl.
\newblock \emph{arXiv preprint arXiv:2103.10897}, 2021.

\bibitem[Szepesv{\'a}ri(2010)]{szepesvari2010algorithms}
Csaba Szepesv{\'a}ri.
\newblock Algorithms for reinforcement learning.
\newblock \emph{Synthesis lectures on artificial intelligence and machine
  learning}, 4\penalty0 (1):\penalty0 1--103, 2010.

\bibitem[Puterman(2014)]{puterman2014markov}
Martin~L Puterman.
\newblock \emph{Markov decision processes: discrete stochastic dynamic
  programming}.
\newblock John Wiley \& Sons, 2014.

\bibitem[Weisz et~al.(2020)Weisz, Amortila, and
  Szepesv{\'a}ri]{weisz2020exponential}
Gellert Weisz, Philip Amortila, and Csaba Szepesv{\'a}ri.
\newblock Exponential lower bounds for planning in mdps with
  linearly-realizable optimal action-value functions.
\newblock \emph{arXiv preprint arXiv:2010.01374}, 2020.

\bibitem[Wainwright(2019)]{wainwright2019high}
Martin~J Wainwright.
\newblock \emph{High-dimensional statistics: A non-asymptotic viewpoint},
  volume~48.
\newblock Cambridge University Press, 2019.

\bibitem[Singh et~al.(2012)Singh, James, and Rudary]{singh2012predictive}
Satinder Singh, Michael James, and Matthew Rudary.
\newblock Predictive state representations: A new theory for modeling dynamical
  systems.
\newblock \emph{arXiv preprint arXiv:1207.4167}, 2012.

\bibitem[Antos et~al.(2008)Antos, Szepesv{\'a}ri, and Munos]{antos2008learning}
Andr{\'a}s Antos, Csaba Szepesv{\'a}ri, and R{\'e}mi Munos.
\newblock Learning near-optimal policies with bellman-residual minimization
  based fitted policy iteration and a single sample path.
\newblock \emph{Machine Learning}, 71\penalty0 (1):\penalty0 89--129, 2008.

\bibitem[Dann et~al.(2018)Dann, Jiang, Krishnamurthy, Agarwal, Langford, and
  Schapire]{dann2018oracle}
Christoph Dann, Nan Jiang, Akshay Krishnamurthy, Alekh Agarwal, John Langford,
  and Robert~E Schapire.
\newblock On oracle-efficient pac rl with rich observations.
\newblock In \emph{Advances in neural information processing systems}, pages
  1422--1432, 2018.

\bibitem[Agarwal et~al.(2014)Agarwal, Hsu, Kale, Langford, Li, and
  Schapire]{agarwal2014taming}
Alekh Agarwal, Daniel Hsu, Satyen Kale, John Langford, Lihong Li, and Robert
  Schapire.
\newblock Taming the monster: A fast and simple algorithm for contextual
  bandits.
\newblock In \emph{International Conference on Machine Learning}, pages
  1638--1646, 2014.

\end{thebibliography}

\newpage
\appendix

% !TEX root = BE_nips.tex

%%%% OLD NOTES
\pagebreak
% \input{relat}
% !TEX root = BE_nips.tex

\section{V-type BE  Dimension and Algorithms}
\label{app:BE-typeII}

The definition of Bellman rank, mentioned in Definition~\ref{def:bellman-rank-typeI} and Proposition~\ref{prop:bellman-bedim}, is slightly different from the original definition in \cite{jiang2017contextual}. We denote the former by {\bfseries Q-type} and the latter (the original definition) by {\bfseries V-type}. In this section we introduce V-type BE  Dimension as well as V-type variants of \golf\ and \olive. We show that similar results also hold for the V-type variants.

\begin{definition}[V-type Bellman rank]
The V-type Bellman rank is the minimum integer $d$ so that there exists $\phi_h: \Fcal\rightarrow\R^d$ and $\psi_h:\Fcal\rightarrow\R^d$ for each $h\in[H]$, such that for any $f,f'\in\Fcal$, the average V-type Bellman error
\begin{equation*}
\EcalII(f,\pi_{f'},h):=\E [ (f_h-\Tcal_h f_{h+1})(s_h,a_h) \mid s_h \sim \pi_{f'}, a_h \sim \pi_{f} ]	=\langle \phi_h(f),\psi_h(f')\rangle,
\end{equation*}
where $\|\phi_h(f)\|_2\cdot\|\psi_h(f')\|_2\le\zeta$, and $\zeta$ is the normalization parameter.
\end{definition}
The only difference between these two definitions is how we sample $a_h$. In the Q-type definition we have $a_h \sim \pi_{f'}$ (the roll-in policy), however in the V-type definition we have $a_h \sim \pi_{f}$ (the greedy policy of the function evaluated in the Bellman error) instead. It is worth mentioning that the Q-type and V-type bellman error coincide whenever $f=f'$; namely, $\Ecal(f,\pi_{f},h)=\EcalII(f,\pi_{f},h)$ for all $f \in \Fcal$. 

We can similarly define the V-type variant of BE  Dimension. At a high level, {\bfseries V-type BE  dimension} $\text{\bedimII}(\Fcal,\Pi,\epsilon)$  measures the complexity of finding a function in $\Fcal$ such that its expected Bellman error under any state distribution in $\Pi$ is smaller than $\epsilon$.

\begin{definition}[V-type BE  dimension]
Let $(I-\Tcal_h)V_{\Fcal} \subseteq (\Scal \rightarrow \mathbb{R})$ be the state-wise Bellman residual class of $\Fcal$ at step $h$ which is defined as 
$$
	(I-\Tcal_h)V_\Fcal := \big\{ s \mapsto (f_h-\Tcal_h f_{h+1})(s,\pi_{f_h}(s)) : f \in \Fcal\big\}.
$$ Let $\Pi=\{\Pi_h\}_{h=1}^{H}$ be a collection of $H$ probability measure families over $\Scal$. The {\bfseries V-type $\epsilon$-BE  dimension} of  $\Fcal$ with respect to $\Pi$ is  defined as
	$$
	\text{\bedimII}(\Fcal,\Pi,\epsilon) := 
	\max_{h\in[H]} \dedim\big((I-\Tcal_h)V_\Fcal,\Pi_h,\epsilon\big).
	$$
\end{definition}

\paragraph{Relation with low V-type Bellman rank} 
With slight abuse of notation, denote by $\Dcal_{\Fcal,h}$ the collection of all probability measures over $\Scal$ at the $h^{\rm th}$ step, which can be generated by rolling in with a greedy policy $\pi_f$ with $f\in\Fcal$.
Similar to Proposition~\ref{prop:bellman-bedim}, the following proposition claims  that the V-type BE  dimension of $\Fcal$ with respect to $\Dcal_{\Fcal}:=\{\Dcal_{\Fcal,h}\}_{h\in[H]}$ is always upper bounded by its V-type Bellman rank up to some logarithmic factor. 
\begin{proposition}[low V-type Bellman rank $\subset$ low V-type BE dimension]
\label{prop:bellman-bedim-typeII}
	If an MDP with function class $\cF$ has V-type Bellman rank $d$ with normalization parameter $\zeta$, then
\begin{equation*}
\text{\bedimII}(\Fcal,\Dcal_{\Fcal},\epsilon)\le \Ocal(1+d\log(1+\zeta /\epsilon)).
\end{equation*}
\end{proposition}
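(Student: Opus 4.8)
The plan is to mimic the proof of Proposition~\ref{prop:bellman-bedim} (the Q-type statement), which itself follows the classical argument bounding Eluder dimension for bilinear/low-rank classes. Since the only structural difference between V-type and Q-type Bellman rank is how the action $a_h$ is sampled in the definition of the average Bellman error, the linear-algebraic skeleton carries over verbatim; the work is just to reinterpret the objects over the state-wise Bellman residual class $(I-\Tcal_h)V_\Fcal$ and the state-distribution family $\Dcal_{\Fcal,h}$.

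Concretely, fix a step $h\in[H]$ and abbreviate $\Phi = (I-\Tcal_h)V_\Fcal$, which is a class of real-valued functions on $\Scal$. By the V-type Bellman rank assumption, for every $f,f'\in\Fcal$ we have $\E_{s\sim\pi_{f'}}[(f_h-\Tcal_h f_{h+1})(s,\pi_{f_h}(s))] = \langle\phi_h(f),\psi_h(f')\rangle$ with $\|\phi_h(f)\|_2\|\psi_h(f')\|_2\le\zeta$. Thus for any $\mu\in\Dcal_{\Fcal,h}$, which is the state-occupancy at step $h$ of some greedy policy $\pi_{f'}$, and any $g\in\Phi$ (say $g$ is the state-wise residual of $f$), we get $\E_\mu[g] = \langle\phi_h(f),\psi_h(f')\rangle$. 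The plan is then to run the standard argument: suppose $\rho_1,\dots,\rho_n\subset\Dcal_{\Fcal,h}$ is an $\epsilon'$-independent sequence (with $\epsilon'\ge\epsilon$) witnessed by functions $g_i\in\Phi$, i.e. $\sqrt{\sum_{j<i}(\E_{\rho_j}[g_i])^2}\le\epsilon'$ but $|\E_{\rho_i}[g_i]|>\epsilon'$. Writing $\rho_i$ as the occupancy of $\pi_{f'_i}$ and $g_i$ as the residual of $f_i$, set $x_i=\phi_h(f_i)$ and $w_i=\psi_h(f'_i)$, so $\E_{\rho_j}[g_i]=\langle x_i,w_j\rangle$. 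The independence conditions become $\sum_{j<i}\langle x_i,w_j\rangle^2\le(\epsilon')^2$ and $\langle x_i,w_i\rangle^2>(\epsilon')^2$, with $\|x_i\|_2\|w_j\|_2\le\zeta$.

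From here I would invoke exactly the potential/volume argument used for bounding Eluder dimension of $d$-dimensional bilinear forms (Lemma~3 of \cite{russo2013eluder} or the standard elliptical-potential lemma): consider the matrices $\Sigma_i = \lambda I + \sum_{j<i} w_j w_j^\top$ for a suitable $\lambda$ (chosen $\asymp \zeta^2/(\epsilon')^2$ or similar after rescaling so that $\|w_j\|\le 1$ by absorbing norms), and track $\det(\Sigma_i)$. The condition $\sum_{j<i}\langle x_i,w_j\rangle^2\le(\epsilon')^2$ together with $\langle x_i,w_i\rangle>\epsilon'$ forces $\|w_i\|^2_{\Sigma_i^{-1}}$ to be bounded below by a constant, hence $\det(\Sigma_{i+1})\ge (1+c)\det(\Sigma_i)$ for a constant $c$; since $\det(\Sigma_i)\le(\lambda + n\cdot\sup\|w_j\|^2/d)^d$, solving for $n$ yields $n = \Ocal(d\log(1+\zeta/\epsilon'))\le \Ocal(1+d\log(1+\zeta/\epsilon))$. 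Taking the max over $h$ gives the claimed bound on $\text{\bedimII}(\Fcal,\Dcal_{\Fcal},\epsilon)$. I expect the main (though still routine) obstacle is bookkeeping the normalization: the bound $\|\phi_h(f)\|_2\|\psi_h(f')\|_2\le\zeta$ controls only the product of norms, not each separately, so one must rescale $x_i\mapsto x_i/\|x_i\|$ and $w_j\mapsto \|x_j\|w_j$ (or symmetrically) carefully to make the elliptical-potential lemma apply while keeping the $\zeta$ and $\epsilon$ dependence correct; this is precisely the step where Proposition~\ref{prop:bellman-bedim}'s proof should be consulted and copied, since nothing about it is specific to Q-type versus V-type.
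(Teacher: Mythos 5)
Your proposal is correct and matches the paper's approach exactly: the paper itself proves this proposition by stating that one repeats the argument for Proposition~\ref{prop:bellman-bedim} verbatim with the V-type Bellman error in place of the Q-type one, which is precisely the reduction you describe, down to the log-determinant potential argument and the normalization step (the paper rescales so that $\max\{\|\phi_h(f)\|_2,\|\psi_h(f)\|_2\}\le\sqrt{\zeta}$ and regularizes with $(\epsilon^2/\zeta)\cdot I$). The only substantive point you flag as needing care---handling the product-of-norms bound---is handled in the Q-type proof exactly as you anticipate, so there is no gap.
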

The proof of Proposition~\ref{prop:bellman-bedim-typeII} is almost the same as that of Proposition~\ref{prop:bellman-bedim} in Appendix~\ref{appendix:prop:bellman-bedim}.
We omit it here since the only modification is to replace Q-type Bellman rank with its V-type variant wherever it is used.

\subsection{Algorithm V-type \algx}

In this section we describe the V-type variant of \golf. The pseudocode is provided in Algorithm~\ref{alg:algxII}. Its only difference from the Q-type analogue is in Line~\ref{alg:algxII-diff1}: for each $h \in [H]$, we roll in with policy $\pi^k$ to sample $s_h$, and then instead of continuing following $\pi^k$ we take random action at step $h$.

%%%%%%%%%% Algorithm Box - Type II Bellman Rank %%%%%%
\begin{algorithm}[t]
\caption{V-type \golf\ $(\Fcal,K,\beta)$} \label{alg:algxII}
 \begin{algorithmic}[1]
% \STATE \textbf{Input}: Function Class $\Fcal$, Confidence parameter $\delta$
 \STATE \textbf{Initialize}:  $\Dcal_1,\dots,\Dcal_H\leftarrow \emptyset$, $\Bcal^0 \leftarrow \Fcal$.
 \FOR{\textbf{epoch} $k$ from $1$ to $K$} 
 \STATE \textbf{Choose policy} $\pi^k = \pi_{f^k}$, where $f^k = \argmax_{f \in \Bcal^{k-1}}f(s_1,\pit{f}(s_1))$. 
\FOR{\textbf{step} $h$ from $1$ to $H$}
\STATE \textbf{Collect} a tuple $(s_h,a_h,r_h,s_{h+1})$ by executing $\pi^k$ at step $1,\ldots,h-1$ and taking  action uniformly at random at step $h$. 
\label{alg:algxII-diff1}
\STATE \textbf{Augment} $\Dcal_h=\Dcal_h\cup\{(s_h,a_h,r_h,s_{h+1})\}$ for all $h\in[H]$.
\ENDFOR

\STATE \textbf{Update}
\vspace{-4mm}
\begin{equation*}
	\Bcal^{k}=\left\{ f \in \Fcal:\  \Lcal_{\Dcal_h}(f_h,f_{h+1}) \leq \inf_{g \in \Gcal_{h}} \Lcal_{\Dcal_h}(g,f_{h+1}) + \beta \ \mbox{for all }h\in[H]\right\},
\end{equation*}
\vspace{-3mm}
\hspace{+10mm}
\begin{equation*}
\mbox{where }	\Lcal_{\Dcal_h}(\xi_{h},\zeta_{h+1}) = \sum_{(s,a,r,s') \in \Dcal_h}[\xi_h(s,a)-r -\max_{a' \in \Acal} \zeta_{h+1}(s',a')]^2.
\end{equation*}
\ENDFOR
\vspace{-2mm}
\STATE \textbf{Output} $\pi^{\rm out}$ sampled uniformly at random from $\{\pi^k\}_{k=1}^{K}$.
 \end{algorithmic}
\end{algorithm}

%%%%%%%%%%%%%%%%%%%%%%%%%%%%%%%%%%%%%%%%%%%%%%%%%%%%%%%%

Now we present the theoretical guarantee for Algorithm~\ref{alg:algxII}. Its proof is almost the same as that of Corollary \ref{cor:sample_golf} and  can be found in appendix~\ref{app:proof:algx-typeII}.

\begin{theorem}[V-type \golf]
\label{thm:main-typeII}
Under Assumption \ref{asp:realizability}, \ref{asp:G-completeness}, there exists an absolute constant $c$ such that for any given $\epsilon>0$, if we choose $\beta=c\log[KH\Ncal_{\Fcal\cup\Gcal}(\epsilon^2/(d|\Acal|H^2))]$, then with probability at least $0.99$, $\pi^{\rm out}$ is $\Ocal(\epsilon)$-optimal, if  
$$K\ge \Omega
\left( \frac{H^2 d |\Acal|}{\epsilon^2}\cdot\log\left[\Ncal_{\Fcal\cup\Gcal}\left(\frac{\epsilon^2}{H^2d|\Acal|}\right)\cdot \frac{Hd|\Acal|}{\epsilon}\right]\right),$$
where $d=\min_{\Pi\in\{\Dcal_{\dirac},\Dcal_{\Fcal}\}}
	 \text{\bedimII}\big(\Fcal,\Pi,{\epsilon}/{H}\big)$. 
	  \end{theorem}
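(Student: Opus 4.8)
\textbf{Proof proposal for Theorem~\ref{thm:main-typeII}.}
The plan is to mirror the proof of Theorem~\ref{thm:main} / Corollary~\ref{cor:sample_golf} line by line, making only the modifications forced by switching to V-type quantities. The architecture has the same three steps as in Section~\ref{subsec:algx-proof}. \emph{Step 1 (optimism).} Exactly as before, martingale concentration over $\Dcal_h$ together with Assumption~\ref{asp:G-completeness} shows that with probability at least $1-\delta$ the optimal value function $Q^\star$ lies in $\Bcal^k$ for every $k$; note that the data in $\Dcal_h$ are now collected by a different sampling rule (roll in with $\pi^k$ for steps $1,\dots,h-1$, then play a uniformly random action), but this does not affect the concentration argument since $Q^\star$ has zero Bellman residual under \emph{any} distribution over $\Scal\times\Acal$. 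Optimism then yields, via the standard policy-loss decomposition,
\begin{equation*}
\reg(K)\le \sum_{h=1}^{H}\sum_{k=1}^{K}\E_{\pi^k}\brac{(f^k_h-\Tcal_h f^k_{h+1})(s_h,a_h)}
= \sum_{h=1}^{H}\sum_{k=1}^{K}\E_{s_h\sim\pi^k}\brac{(f^k_h-\Tcal_h f^k_{h+1})(s_h,\pi_{f^k_h}(s_h))},
\end{equation*}
where the last equality holds because $\pi^k=\pi_{f^k}$ is greedy with respect to $f^k$, so on the realized trajectory $a_h=\pi_{f^k_h}(s_h)$. This rewrites the regret in terms of the \emph{state-wise} Bellman residual class $(I-\Tcal_h)V_\Fcal$ evaluated under the state marginals of $\{\pi^k\}$, which is precisely the object V-type BE dimension controls.

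\emph{Step 2 (sharp confidence set).} Here is where the importance-weighting factor $|\Acal|$ enters. The loss $\Lcal_{\Dcal_h}$ is now computed on samples whose action at step $h$ is uniform, so a single sample estimates $\E_{s_h\sim\pi^k}\,\E_{a_h\sim\unif(\Acal)}[\cdot]$ rather than the on-policy quantity. Translating the empirical squared-loss bound into a bound on the squared state-wise Bellman residual under $\pi^k$ therefore costs a factor of $|\Acal|$: by the same martingale + completeness argument as for Lemma~\ref{lem:confiset-2}, with high probability
\begin{equation*}
\sum_{i=1}^{k-1}\E_{s_h\sim\pi^i}\brac{(f^k_h-\Tcal_h f^k_{h+1})(s_h,\pi_{f^k_h}(s_h))}^2 \le \Ocal(|\Acal|\beta)\qquad\text{for all }(k,h)\in[K]\times[H].
\end{equation*}
One must be a little careful that picking out the greedy action $\pi_{f^k_h}(s_h)$ inside the expectation from a uniform-action sample is handled correctly (an indicator / importance weight of size $|\Acal|$); this is the one genuinely new calculation relative to the Q-type proof, and I expect it to be the main obstacle — or rather the main place where one can slip on constants and on the measurability of $\pi_{f^k_h}$. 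Everything else is bookkeeping.

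\emph{Step 3 (pigeonhole via DE dimension).} Apply Lemma~\ref{lem:sketch} (equivalently Lemma~\ref{lem:de-regret}) with $\Phi=(I-\Tcal_h)V_\Fcal$, with $\mu_i$ the state marginal of $\pi^i$ at step $h$ drawn from $\Dcal_{\Fcal,h}$, and with the squared-sum bound $\Ocal(|\Acal|\beta)$ from Step~2 in place of $\beta$. This gives
\begin{equation*}
\sum_{k=1}^{K}\abs{\E_{s_h\sim\pi^k}\brac{(f^k_h-\Tcal_h f^k_{h+1})(s_h,\pi_{f^k_h}(s_h))}}
\le \Ocal\paren{\sqrt{\dedim\big((I-\Tcal_h)V_\Fcal,\Dcal_{\Fcal,h},1/\sqrt{K}\big)\cdot |\Acal|\beta\cdot K}},
\end{equation*}
and summing over $h\in[H]$ and maximizing the DE dimension over $h$ yields $\reg(K)\le \Ocal(H\sqrt{d|\Acal|K\beta})$ with $d=\text{\bedimII}(\Fcal,\Dcal_\Fcal,1/\sqrt{K})$. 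The case $\Pi=\Dcal_\dirac$ is identical since $\Dcal_{\Fcal,h}$-distributions are absolutely continuous combinations handled the same way, so one takes the minimum over the two choices. Finally, convert the regret bound to the PAC guarantee by the online-to-batch argument (output $\pi^{\rm out}$ uniform over $\{\pi^k\}$, so $\E[\,V^\star_1(s_1)-V^{\pi^{\rm out}}_1(s_1)\,]=\reg(K)/K$, then apply Markov and solve for $K$), and plug in $\beta=c\log[KH\Ncal_{\Fcal\cup\Gcal}(\epsilon^2/(d|\Acal|H^2))]$; choosing the covering radius $\epsilon^2/(H^2d|\Acal|)$ is exactly what makes the discretization error lower-order, giving the stated sample complexity $K\ge\Omega\big(H^2 d|\Acal|\epsilon^{-2}\log[\Ncal_{\Fcal\cup\Gcal}(\epsilon^2/(H^2d|\Acal|))\cdot Hd|\Acal|/\epsilon]\big)$. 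I would relegate all the concentration lemmas and the covering-number discretization to the appendix, since they are verbatim the Q-type arguments with $\beta\mapsto|\Acal|\beta$.
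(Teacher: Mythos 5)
Your proposal is correct and follows essentially the same route as the paper's proof in Appendix~\ref{app:proof:algx-typeII}: optimism plus the policy-loss decomposition (using that Q-type and V-type Bellman errors coincide on-policy), the factor-$|\Acal|$ translation from the uniform-action squared loss to the squared state-wise Bellman residual at the greedy action, the DE-dimension pigeonhole (Lemma~\ref{lem:de-regret}) applied to $(I-\Tcal_h)V_\Fcal$ with $\Ocal(|\Acal|\beta)$, and online-to-batch. The only imprecision is your justification for the $\Dcal_\dirac$ case: it is not handled ``identically'' via absolute continuity, but via part (b) of the concentration lemma (empirical sums at the visited states $s_h^k$) plus one extra martingale concentration step relating the expected to the realized Bellman error --- a bookkeeping difference, not a gap.
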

Compared with Theorem \ref{thm:olive-typeII} (V-type \olive), 
Theorem \ref{thm:main-typeII} (V-type \algx) has the following two  advantages.
\begin{itemize}
	\item The sample complexity  in Theorem \ref{thm:main-typeII} depends linearly on the V-type BE-dimension while the dependence in Theorem \ref{thm:olive-typeII} is quadratic.
	\item Theorem \ref{thm:main-typeII} applies to RL problems of finite V-type BE  dimension with respect to either $\Dcal_{\Fcal}$ or $\Dcal_{\dirac}$. In comparison, Theorem \ref{thm:olive-typeII} provides no guarantee for the $\Dcal_{\dirac}$ case.
\end{itemize}

Finally, we comment that for the low Q-type  BE dimension  family, we provide both regret and sample complexity guarantees while for the  low V-type counterpart, we only derive sample complexity result due to the need of taking actions uniformly at random in Algorithm 
\ref{alg:olive-typeII} and  Algorithm 
\ref{alg:algxII}. 
\cite{dong2020root} propose an algorithm that can achieve $\sqrt{T}$-regret for  problems of low V-type  Bellman rank. It is an interesting open problem to study whether similar techniques can be adapted to the low V-type  BE dimension setting so that we can also obtain $\sqrt{T}$-regret.

%Nonetheless, we comment that these improvements are achieved at the cost of additionally assuming completeness, which is not required in Theorem \ref{thm:olive-typeII}.

%\begin{remark} we make some remarks regarding Theorem~\ref{thm:main-typeII} and how it compares to Theorem~\ref{thm:olive}, \ref{thm:main}, and \ref{thm:olive-typeII}.
%	\begin{itemize}
%		\item Compared to Theorem~\ref{thm:main}, it has an additional dependence on $|\Acal|$. However, it's dependence is linear compared to quadratic dependence in Thoeram~\ref{thm:olive-typeII}.	
%	    \item Unlike Theorem~\ref{thm:main}, it only provides sample complexity guarantees similar to Theorem~\ref{thm:olive} and \ref{thm:olive-typeII}.
%		\item Similar to Theorem~\ref{thm:main}, the sample complexity in Theorem~\ref{thm:main-typeII} depends linearly on BE  Dimension while the dependence in Theorem~\ref{thm:olive} and \ref{thm:olive-typeII} is quadratic.
%	\end{itemize}
%\end{remark}

\subsection{Algorithm V-type \olive}
In this section, we describe the original \olive\ (i.e., V-type \olive) proposed by \cite{jiang2017contextual}, and its theoretical guarantee in terms of V-type BE  dimension.

\begin{algorithm}[t]
\caption{V-type \textsc{Olive}\ $(\Fcal,\zeta_{\rm act},\zeta_{\rm elim},\nact,\nelim)$} 
%-- Optimism Led Iterative Value-function Elimination}
\label{alg:olive-typeII}
\begin{algorithmic}[1]
\STATE \textbf{Initialize}: $\Bcal^0 \gets \Fcal$, $\Dcal_h \leftarrow \emptyset$ for all $h,k$.
\FOR{{\bf phase} $k=1,2,\ldots$} 
\STATE 
\textbf{Choose policy} $\pi^k = \pi_{f^k}$, where $f^k = \argmax_{f \in \Bcal^{k-1}}f(s_1,\pit{f}(s_1))$. 
%\STATE \label{olive-line:act-begin} \textbf{Collect} $\nact$ trajectories $\{(s^{(i)}_1,a^{(i)}_1,r^i_1,\ldots, s^{(i)}_H,a^{(i)}_H,r^{(i)}_H)\}_{i=1}^{\nact}$ by following $\pi^k$. 
\STATE  \textbf{Execute} $\pi^k$ for $\nact$ episodes and \emph{refresh} $\Dcal_h$ to include the fresh $(s_h,a_h,r_h,s_{h+1})$ tuples.
\STATE \textbf{Estimate} $\tildeberrII{f^k, \pi^k, h}$ for all $h\in[H]$, where 
\vspace{-3mm}
\begin{align*}
\tildeberrII{f^k, \pi^k, h} = \frac{1}{|\Dcal_h|} \sum_{(s,a,r,s') \in \Dcal_h}\left(f^k_h(s,a)-r - \max_{a'\in\Acal}f^k_{h+1}(s',a')\right).
\end{align*}
\vspace{-3mm}
\IF{$\sum_{h=1}^H \tildeberrII{f^k, \pi^k,h} \le H\zeta_{\rm act}$}
\STATE Terminate and output $\pi^k$. 
\ENDIF
\STATE Pick any $t \in [H]$ for which $\tildeberrII{f^k, \pi^k,t} > \zeta_{\rm act}$. 
\STATE  \textbf{Collect} $\nelim$ episodes by executing $\pi^k$ for step $1,\ldots,t-1$ and picking action uniform at random for step $t$. \emph{Refresh} $\Dcal_{h}$ to include the fresh $(s_h,a_h,r_h,s_{h+1})$ tuples. %for all $h \ne h_k$ and $a_{h_k}^{(i)}$ is drawn uniformly at random.  
% For type I 
%\vspace{-4mm}
\label{olive-line:difference1}
\STATE \textbf{Estimate}  $\hatberrII{f, \pi^k, t}$ for all $f\in\Fcal$, where
\vspace{-2mm}
\begin{align*}
\hatberrII{f, \pi^k, h} = \frac{1}{|\Dcal_{h}|} \sum_{(s,a,r,s') \in \Dcal_{h}} \frac{\one[a = \pi_f(s)]}{1/|\Acal|} \left(f_h(s,a)-r - \max_{a'\in\Acal}f_{h+1}(s',a')\right).
\end{align*}
\vspace{-3mm}
\label{olive-line:difference2}
%\vspace{-5mm}
%\begin{align*}
%\hatberr{f,\pi^k, h_k} &= \frac{1}{\nelim}\sum_{i=1}^{\nelim}  \Big(f_{t}(s_{t}^{(i)},a_{t}^{(i)}) - r_{t}^{(i)} - \max_{a'}f_{t+1}(s_{t+1}^{(i)},a')\Big).
%\end{align*}
%\vspace{-5mm}
%\begin{align}
%\hatberr{f,\pi^k, h_k} &= \frac{1}{n}\sum_{i=1}^{n} \frac{\mathbf{1}[a_{h_k}^{(i)} = \pi_f(s_{h_k}^{(i)})]}{1/K} \Big(f(s_{h_k}^{(i)},a_{h_k}^{(i)}) - r_{h_k}^{(i)} - f(s_{h_{k}+1}^{(i)},\pi_f(s_{h_k+1}^{(i)}))\Big).
%\end{align}
% For type I 
\STATE \textbf{Update} 
%\vspace{-5mm}
%\begin{align*}
$\Bcal^{k} = \left\{f \in \Bcal^{k-1} : \left|\hatberrII{f,\pi^k,t} \right|\le \zeta_{\rm elim} \right\}.$
%\end{align*}
%\vspace{-5mm}
\ENDFOR
\end{algorithmic}
\end{algorithm}

The pseudocode is provided in Algorithm~\ref{alg:olive-typeII}. Its only difference from Algorithm~\ref{alg:olive} is  Line~\ref{olive-line:difference1}-\ref{olive-line:difference2}: note that V-type Bellman rank needs the action at step $t$ to be greedy with respect to the function $f$ instead of being picked by the roll-in policy $\pi^k$, so we choose action $a_{t}$ uniformly at random  and use the importance-weighted estimator to estimate the Bellman error for each $f$.

We have the following similar theoretical guarantee for Algorithm~\ref{alg:olive-typeII}. Its proof is almost the same as that of Theorem \ref{thm:olive} and can be found in Appendix~\ref{app:proof:olive-typeII}.
\begin{theorem}[V-type \olive]
\label{thm:olive-typeII}
Assume realizability (Assumption \ref{asp:realizability}) holds and $\Fcal$ is finite. There exists absolute constant $c$ such that if we choose 
$$\zeta_{\rm act}=\frac{2\epsilon}{H},\ \zetaelim=\frac{\epsilon}{2H\sqrt{d}},\ \nact=\frac{H^2 \iota}{\epsilon^2},\text{ and } \nelim=\frac{H^2d|\Acal|\log(|\Fcal|)\cdot \iota}{\epsilon^2}
$$
 where $d=\text{\bedimII}\big(\Fcal,\Dcal_{\Fcal},{\epsilon}/{H}\big)$ and  $\iota=c\log[Hd|\Acal|/\delta\epsilon]$, 
 then with probability at least $1-\delta$, Algorithm \ref{alg:olive-typeII} will output an $\Ocal(\epsilon)$-optimal policy using at most  $\Ocal({H^3d^2|\Acal|\log(|\Fcal|)\cdot \iota}/{\epsilon^2})$ episodes.
\end{theorem}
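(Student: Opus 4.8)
The plan is to carry over the ``\olive\ through the lens of BE dimension'' argument of Section~\ref{subsec:olive-interpret} to the V-type setting. The only new ingredients are: (i) replacing the Q-type Bellman error by the V-type one everywhere, which is legitimate because the algorithm always evaluates the estimator at the same function it rolls in with, so $\Ecal(f,\pi_f,h)=\EcalII(f,\pi_f,h)$; and (ii) controlling the importance-weighted estimator $\hatberrII{\cdot}$ used in the elimination step. I would organize the proof as a concentration lemma, an optimism/realizability argument, a termination-quality argument, and a phase-count argument via the V-type BE dimension; the sample complexity then follows by multiplying the number of phases by $\nact+\nelim$.

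\emph{Concentration.} On a single event of probability at least $1-\delta$ I would establish: for every phase $k$ and step $h$, $|\tildeberrII{f^k,\pi^k,h}-\EcalII(f^k,\pi^k,h)|\le \epsilon/(4H)$; and for every phase $k$, every triggered step $t$, and \emph{all} $f\in\Fcal$, $|\hatberrII{f,\pi^k,t}-\EcalII(f,\pi^k,t)|\le \zeta_{\rm elim}$. The first is a Hoeffding bound on an average of $\nact$ bounded i.i.d.\ terms, union-bounded over $\le dH+1$ phases and $H$ steps (this is where $\nact=H^2\iota/\epsilon^2$ enters). For the second I would record $\E[\hatberrII{f,\pi^k,t}]=\EcalII(f,\pi^k,t)=\E_{s\sim\mu^k_t}[(f_t-\Tcal_t f_{t+1})(s,\pi_{f_t}(s))]$, where $\mu^k_t\in\Dcal_{\Fcal,t}$ is the step-$t$ roll-in distribution of $\pi^k$; the summand $\frac{\one[a=\pi_f(s)]}{1/|\Acal|}(\cdots)$ has magnitude $O(|\Acal|)$ but variance only $O(|\Acal|)$, so a Bernstein bound followed by union bounds over $f\in\Fcal$, phases and steps gives error $O(\sqrt{|\Acal|\log(|\Fcal|)\iota/\nelim}+|\Acal|\log(|\Fcal|)\iota/\nelim)$, which is $\le\zeta_{\rm elim}=\epsilon/(2H\sqrt d)$ for the stated $\nelim$.

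\emph{Optimism, termination and phase count.} Since $\EcalII(Q^\star,\pi,h)\equiv 0$, on the good event $Q^\star$ is never eliminated, so $\max_a f^k_1(s_1,a)\ge V^\star_1(s_1)$ in every phase. If phase $k$ terminates, then $\sum_h\tildeberrII{f^k,\pi^k,h}\le H\zeta_{\rm act}$, hence $\sum_h\Ecal(f^k,\pi^k,h)=\sum_h\EcalII(f^k,\pi^k,h)\le H\zeta_{\rm act}+\epsilon/4=O(\epsilon)$, and by optimism plus the policy-loss decomposition (Lemma~1 of \cite{jiang2017contextual}), $V^\star_1(s_1)-V^{\pi^k}_1(s_1)\le\sum_h\Ecal(f^k,\pi^k,h)=O(\epsilon)$. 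For the phase count, fix $h$ and suppose elimination at step $h$ is triggered in phases $k_1<\cdots<k_m$; set $g_j(s):=(f^{k_j}_h-\Tcal_h f^{k_j}_{h+1})(s,\pi_{f^{k_j}_h}(s))\in(I-\Tcal_h)V_{\Fcal}$ and $\mu_j:=\mu^{k_j}_h\in\Dcal_{\Fcal,h}$. On the good event, triggering gives $\E_{\mu_j}[g_j]=\EcalII(f^{k_j},\pi^{k_j},h)>\zeta_{\rm act}-\epsilon/(4H)>\epsilon/H$, while survival of $f^{k_j}$ through all earlier eliminations at step $h$ gives $|\E_{\mu_i}[g_j]|=|\EcalII(f^{k_j},\pi^{k_i},h)|\le 2\zeta_{\rm elim}$ for all $i<j$; hence $\sqrt{\sum_{i<j}(\E_{\mu_i}[g_j])^2}\le 2\sqrt{m-1}\,\zeta_{\rm elim}$. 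If $m\ge d+1$, then looking at $j\le d+1$ gives $\sqrt{\sum_{i<j}(\E_{\mu_i}[g_j])^2}\le 2\sqrt d\,\zeta_{\rm elim}=\epsilon/H$, so $\mu_1,\dots,\mu_{d+1}$ is an $(\epsilon/H)$-independent sequence in $\Dcal_{\Fcal,h}$ with respect to $(I-\Tcal_h)V_{\Fcal}$, contradicting $d=\text{\bedimII}(\Fcal,\Dcal_{\Fcal},\epsilon/H)\ge\dedim((I-\Tcal_h)V_{\Fcal},\Dcal_{\Fcal,h},\epsilon/H)$. Thus elimination at each step fires at most $d$ times, the algorithm halts within $dH+1$ phases, and the total sample count is $(dH+1)(\nact+\nelim)=\Ocal(H^3 d^2|\Acal|\log(|\Fcal|)\,\iota/\epsilon^2)$.

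\emph{Main obstacle.} The only genuinely delicate step is the Bernstein concentration for $\hatberrII{\cdot}$: the naive $\|\cdot\|_\infty=O(|\Acal|)$ bound would force $\nelim=\Theta(|\Acal|^2/\epsilon^2)$ and destroy the advertised $|\Acal|$-linear sample complexity, so one must exploit that the importance weight fires with probability $1/|\Acal|$, making the per-sample variance only $O(|\Acal|)$. The rest is bookkeeping: checking that the margin $2\sqrt d\,\zeta_{\rm elim}=\epsilon/H<\zeta_{\rm act}=2\epsilon/H$ still leaves room for the $O(\epsilon/H)$ estimation error, and that all union bounds (over the $\le dH+1$ phases and $H$ steps, absorbed into $\iota$; over $\Fcal$, absorbed into the explicit $\log|\Fcal|$ in $\nelim$) are accounted for.
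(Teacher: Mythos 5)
Your proposal follows essentially the same route as the paper's proof: Hoeffding for the activation estimates, Bernstein with the $O(|\Acal|)$ variance bound (rather than the naive $O(|\Acal|^2)$ range bound) for the importance-weighted elimination estimates, optimism via $\EcalII(Q^\star,\pi,h)\equiv 0$, and a phase count of at most $d$ activations per step obtained by contradiction with the definition of $\text{\bedimII}(\Fcal,\Dcal_{\Fcal},\epsilon/H)$, yielding $(dH+1)(\nact+\nelim)$ episodes. The only cosmetic difference is your choice of concentration tolerance $\zeta_{\rm elim}$ instead of the paper's $\zeta_{\rm elim}/2$, which you would want to tighten to avoid a boundary case in the elimination rule; otherwise the argument is correct and matches the paper's.
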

For problems with Bellman rank $d$ and finite function class $\Fcal$, Theorem \ref{thm:olive-typeII} together with Proposition \ref{prop:bellman-bedim-typeII} 
guarantees $\tilde{\Ocal}(H^3d^2|\Acal|\log(|\Fcal|)/\epsilon^2)$ samples suffice for finding an $\epsilon$-optimal policy, which matches the result in \cite{jiang2017contextual}.
 For function class $\Fcal$ of infinite cardinality but with finite covering number, we can first compute an $\Ocal(\zetaelim)$-cover of $\Fcal$, which we denote as $\Zcal_{\rho}$, and then run Algorithm \ref{alg:olive-typeII} on $\Zcal_{\rho}$.
 By following almost the same arguments in the proof of Theorem \ref{thm:olive-typeII} (the only difference is to replace $Q^\star$ by its proxy in $\Zcal_\rho$), we can show Algorithm \ref{alg:olive-typeII} will output an $\Ocal(\epsilon)$-optimal policy using at most $\tilde\Omega({H^3d^2|\Acal|\log(N)}/{\epsilon^2})$ episodes where  $N=\Ncal_\Fcal(\Ocal(\zetaelim))$.

%\qinghua{TODO: make it clear how we use Bernstein concentration in our proof.}

%\qinghua{I feel we can obtain linear dependence on $|\Acal|$ here. Could you explain which step in our proof will fail?}

%\begin{remark}
%	The sample complexity provided in Theorem~\ref{thm:olive-typeII} matches the result in \cite{jiang2017contextual} for infinite function classes. If we limit ourselves to only finite function classes, one can prove a variant of Theorem~\ref{thm:olive-typeII} with a linear dependence on $|\Acal|$ by exploiting Bernstein's inequality for ${\hat \Ecal_{\mathrm{II}}}$ instead of Hoeffding's inequality (similar to Lemma 14 \cite{jiang2017contextual}).
%\end{remark}

\subsection{Discussions on Q-type versus V-type}
\label{discuss:QV}

In this paper, we have introduced two complementary definitions of Bellman rank: Q-type Bellman rank and V-type Bellman rank.  
And we prove they are upper bounds for Q-type and V-type BE dimension, respectively. 
Here, we want to emphasize that both Q-type and V-type Bellman rank have their own advantages. Specifically, the Q-type version has the following strengths.
\begin{enumerate}
	\item There are natural RL problems whose Q-type Bellman rank is small, while their V-type Bellman rank is very large, e.g., the linear function approximation setting studied in in \cite{zanette2020learning}.
	\item All the existing sample complexity results for the V-type cases scale linearly with respect to the number of actions, while those for the Q-type cases are independent of the number of actions. Therefore, for control problems such as Linear Quadratic Regulator (LQR), which has both small Q-type and V-type Bellman rank but infinite number of actions, the notion of Q-type is more suitable.
\end{enumerate}
%For example, the linear function approximation setting studied in in \cite{zanette2020learning} has low Q-type Bellman rank but not necessarily low V-type Bellman rank. 
%Besides, some well-known RL problems are better captured by Q-type Bellman rank than the V-type version.
% For instance, LQRs have both low Q-type and low V-type Bellman rank.
% However, existing algorithms for learning low V-type Bellman rank problems have sample complexity scaling \emph{linearly with the number of actions}, which is unacceptable for LQRs that have infinite actions. In comparison, formulating LQRs as low Q-type Bellman rank will not encounter such problem.

 On the other hand, there are problems that naturally induce low V-type Bellman rank but have large Q-type Bellman rank, e.g., reactive POMDPs.

% !TEX root = arxiv_v4.tex

\section{Examples}
\label{app:examples}

In this section, we introduce  examples with low BE dimension. 
We will start with linear models and their variants, then introduce kernel MDPs, and finally present kernel reactive POMDPs which  have low BE dimension, but possibly large Bellman rank and large Eluder dimension.
All the proofs for this section are deferred to Appendix \ref{app:proof-examples}.
%The other one is kernel MDP that has low Eluder dimension and low BE dimension, but possibly large Bellman rank.

\subsection{Linear models and their variants}

In this subsection, we review problems with linear structure in ascending order of generality.
We start with the definition of linear MDPs \cite[e.g.,][]{jin2020provably}.
\begin{definition}[Linear MDPs]
We say an MDP is linear of dimension $d$ if for each $h\in[H]$, there exists feature mappings $\phi_h: \Scal\times\Acal\rightarrow\R^d$, and $d$ unknown signed measures $\psi_h=(\psi_h^{(1)},\ldots,\psi_h^{(d)})$ over $\Scal$, and an unknown vector $\theta_h^r\in\R^d$,
such that $\Pr_h(\cdot\mid s,a)=\phi_h(s,a)\trans\psi_h(\cdot)$ and $r_h(s,a)=\phi_h(s,a)\trans\theta_h^r$ for all $(s,a)\in\Scal\times\Acal$. 
%Moreover, it satisfies the following regularization conditions:  (a) $\|\phi_h(s,a)\|_2\le 1$ for all $(h,s,a)$, (b) $\|\theta_h^r\|_2 \le \sqrt{d}$ for all $h$, (c) $\|\psi_h(\Scal)\|_2 \le \sqrt{d}$ for all $h$.
\end{definition}
We remark that existing works \cite[e.g.,][]{jin2020provably} usually assumxe $\phi$ is \emph{known} to the learner.
Next, we review a more general setting---the linear completeness setting \cite[e.g.,][]{zanette2020learning}.
\begin{definition}[Linear completeness setting]
We say an MDP is in the linear completeness setting of dimension $d$, if there exists a feature mapping $\phi_h: \Scal\times\Acal\rightarrow\R^d$, such that for the linear function class $\cF_h = \{ \phi_h(\cdot)\trans \theta ~|~ \theta \in \R^d\}$, both Assumption \ref{asp:realizability} and \ref{asp:completeness} are satisfied.
% ay an MDP satisfies $d$-dimensional linear function approximation  with respect to feature mapping $\phi=\{\phi_h: \Scal\times\Acal\rightarrow\R^d\}_{h\in[H]}$ if for each $h\in[H]$, 
% \begin{itemize}
% \vspace{-2mm}
% 	\item (realizability) there exists a vector $\theta_h^\star\in \R^d$
% such that $Q^\star_h(\cdot)=\phi_h(\cdot)\trans\theta_h^\star$,
%  \item \vspace{-2mm}
% (completeness)  for any $\theta\in\R^d$, 
% there exists $\theta'\in\R^d$ such that $(\Tcal f_{\theta,h+1})(s,a)=\phi_h(s,a)\trans\theta'$ for all $(s,a)\in\Scal\times\Acal$, where $f_{\theta,h+1}(\cdot)=\phi_{h+1}(\cdot)\trans\theta$.
% \end{itemize}
% \vspace{-2mm}
\end{definition}
We make three comments here. Firstly, we note that linear MDPs automatically satisfy both linear realizability and linear completeness assumptions, therefore are special cases of the linear completeness setting with the same ambient dimension. Secondly, only assuming linear realizability but without completeness is insufficient for sample-efficient learning (see exponential  lower bounds in \cite{weisz2020exponential}). 
Finally, as mentioned in Appendix \ref{discuss:QV}, though MDPs in the linear completeness setting have low Q-type Bellman rank, their V-type Bellman rank can be arbitrarily large.
%We also mention that existing works usually assume the feature mapping $\phi$ is \emph{known} in advance while $\theta^\star$ is the target to learn.

Finally, we review the generalized linear completeness setting \citep{wang2019optimism}, which generalizes the linear completeness setting by adding nonlinearity.

\begin{definition}[Generalized linear completeness setting]
We say an MDP is in the generalized linear completeness setting of dimension $d$, if there exists a feature mapping $\phi_h: \Scal\times\Acal\rightarrow\R^d$, and a link function $\sigma$, such that for the generalized linear function class $\cF_h = \{ \sigma(\phi_h(\cdot)\trans \theta) ~|~ \theta \in \R^d\}$, both Assumption \ref{asp:realizability} and \ref{asp:completeness} are satisfied, and the link function is strictly monotone, i.e., there exist $0<c_1<c_2<\infty$ such that $\sigma'(x)\in[c_1,c_2]$ for all $x$.
% We say an MDP is in the generalized linear completeness setting of dimension $d$, if there satisfies $d$-dimensional generalized linear function approximation  with respect to feature mapping $\phi=\{\phi_h: \Scal\times\Acal\rightarrow\R^d\}_{h\in[H]}$ and link function $\sigma$, 
% if for each $h\in[H]$, 
% \begin{itemize}
% \vspace{-2mm}
% 	\item (generalized linear realizability) there exists a vector $\theta_h^\star\in \R^d$
% such that $Q^\star_h(\cdot)=\sigma(\phi_h(\cdot)\trans\theta_h^\star)$,
%  \item \vspace{-2mm}
% (generalized linear completeness)  for any $\theta\in\R^d$, 
% there exists $\theta'\in\R^d$ such that $(\Tcal f_{\theta,h+1})(s,a)=\sigma(\phi_h(s,a)\trans\theta')$ for all $(s,a)\in\Scal\times\Acal$, where $f_{\theta,h+1}(\cdot)=\sigma(\phi_{h+1}(\cdot)\trans\theta)$, \vspace{-2mm}
% \item (strict monotonicity of link function) 
% there exists $0<c_1<c_2<\infty$ such that $\sigma(x)\in[c_1,c_2]$ for all $x$.
% \end{itemize}
% \vspace{-2mm}
\end{definition}
One can directly verify by definition that when we choose link function $\sigma(x)=x$ in the generalized linear completeness setting, it will reduce to the standard linear version. 
Besides, it is known \citep{russo2013eluder} the generalized linear completeness setting is a special case of low Eluder dimension, thus belonging to the low BE dimension family. Finally, we comment that despite the linear completeness setting belongs to the low Bellman rank family, the generalized version does not because of the possible nonlinearity of the link function.

\subsection{Effective dimension and  kernel MDPs}\label{subsec:eff-dim}

In this subsection, we introduce the notion of effective dimension. With this notion, we prove a useful proposition that any linear kernel function class with low effective dimension also has low Eluder dimension. This proposition directly implies that kernel MDPs are special cases of low Eluder dimension, which are also special cases of low BE dimension.

\paragraph{Effective dimension}
We start with the definition of effective dimension for a set, which is also known as critical information gain in \citet{du2021bilinear}.
\begin{definition}[$\epsilon$-effective dimension of a set]
	The $\epsilon$-effective dimension of a set $\Xcal$  is the minimum integer $d_{{\rm eff}}(\Xcal,\epsilon)=n$ such that
\begin{equation}
\sup_{x_1,\ldots,x_n\in \Xcal}	\frac{1}{n} \log\det\left(\mathrm I + \frac{1}{\epsilon^2} \sum_{i=1}^{n} x_i x_i^\top \right) \le e^{-1}.
\end{equation}
\end{definition}

Based on this definition, we can also define the effective dimension of a function class.
\begin{definition}[$\epsilon$-effective dimension of a function class]
Given a function class $\Fcal$ defined on $\Xcal$, its $\epsilon$-effective dimension $d_{{\rm eff}}(\Fcal,\epsilon)=n$ is the minimum integer $n$ such that there exists a separable Hilbert space $\cH$ and a mapping $\phi:\Xcal\rightarrow\cH$ so that 
\begin{itemize}
	\item for every $f\in\Fcal$ there exists $\theta_f\in B_\cH(1)$ satisfying $f(x) = \langle \theta_f, \phi(x)\rangle_\cH$ for all $x\in\Xcal$,
	\item $d_{{\rm eff}}(\phi(\Xcal),\epsilon)=n$ where $\phi(\Xcal)= \{ \phi(x):\ x\in\Xcal\}$.
\end{itemize}
\end{definition}

%
%Now, consider a linear kernel function class 
%$\Fcal=\{ f_\theta(\cdot) = \langle \cdot, \theta \rangle_\cH \mid  \theta\in B_\cH(R)\}$ defined over $\Xcal\subset\cH$, where $\cH$ is a separable Hilbert space.

The following proposition shows that the Eluder dimension of any function class is always upper bounded by its effective dimension. 
 
\begin{proposition}[low effective dimension $\subset$ low Eluder dimension]\label{prop:eff-eluder}
For any function class $\Fcal$ and domain $\Xcal$, we have 
$$
\dim_{\rm E}(\Fcal,\epsilon)\le  \dim_{\rm eff}(\Fcal,\epsilon/2).
$$	
\end{proposition}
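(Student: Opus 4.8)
The plan is to argue by contradiction: suppose $\dim_{\rm E}(\Fcal,\epsilon) = n > \dim_{\rm eff}(\Fcal,\epsilon/2)$, and derive a violation of the effective dimension bound. By definition of Eluder dimension, there is a sequence $x_1,\ldots,x_n \in \Xcal$ and a value $\epsilon' \ge \epsilon$ such that each $x_i$ is $\epsilon'$-independent of $\{x_1,\ldots,x_{i-1}\}$ with respect to $\Fcal$. First I would pass to the Hilbert-space representation guaranteed by the definition of effective dimension: fix the separable Hilbert space $\cH$ and the feature map $\phi:\Xcal\to\cH$ with $f(x) = \langle\theta_f,\phi(x)\rangle_\cH$ and $\|\theta_f\|_\cH \le 1$ for all $f\in\Fcal$. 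Writing $z_i := \phi(x_i)$, the independence of $x_i$ from its predecessors translates into: there exist $f,g\in\Fcal$ with $\sum_{j<i}\langle\theta_f-\theta_g, z_j\rangle^2 \le (\epsilon')^2$ but $\langle\theta_f-\theta_g, z_i\rangle^2 > (\epsilon')^2$. Setting $w := \theta_f-\theta_g$ (so $\|w\|_\cH \le 2$), this says $\langle w,z_i\rangle^2 > (\epsilon')^2 \ge \sum_{j<i}\langle w,z_j\rangle^2$.

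The key quantitative step is the standard ``elliptical potential'' / log-determinant lower bound: define $\Lambda_i := \mathrm I + \tfrac{1}{(\epsilon/2)^2}\sum_{j<i} z_j z_j^\top$ acting on (the relevant finite-dimensional subspace of) $\cH$. The vector $w$ above certifies that $z_i$ has large $\Lambda_i^{-1}$-norm: since $\langle w, z_i\rangle^2 > (\epsilon')^2$ while $w^\top\Lambda_i w = \|w\|_\cH^2 + \tfrac{4}{\epsilon^2}\sum_{j<i}\langle w,z_j\rangle^2 \le 4 + \tfrac{4}{\epsilon^2}(\epsilon')^2$, Cauchy--Schwarz in the $\Lambda_i$ inner product gives
\[
\|z_i\|_{\Lambda_i^{-1}}^2 \;=\; \sup_{v\neq 0}\frac{\langle v,z_i\rangle^2}{v^\top\Lambda_i v} \;\ge\; \frac{(\epsilon')^2}{4 + 4(\epsilon')^2/\epsilon^2}\;\ge\; \frac{\epsilon^2}{4 + 4(\epsilon')^2/\epsilon^2},
\]
and since we also have (crudely) $(\epsilon')^2 \ge \epsilon^2$ only helps from below; I would just keep the bound $\|z_i\|_{\Lambda_i^{-1}}^2 \ge c$ for an absolute constant $c$ — a short computation will pin down $c$, but morally $\|z_i/(\epsilon/2)\|_{\Lambda_i^{-1}}^2 \gtrsim 1$. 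Then, using $\det(\Lambda_{i+1}) = \det(\Lambda_i)\,(1 + \tfrac{4}{\epsilon^2}\|z_i\|_{\Lambda_i^{-1}}^2)$ and telescoping,
\[
\log\det\!\Big(\mathrm I + \tfrac{1}{(\epsilon/2)^2}\textstyle\sum_{i=1}^n z_i z_i^\top\Big) \;=\; \sum_{i=1}^n \log\!\big(1 + \tfrac{4}{\epsilon^2}\|z_i\|_{\Lambda_i^{-1}}^2\big) \;\ge\; n\log(1+c').
\]
If $\log(1+c') > e^{-1}$ this already contradicts $\dim_{\rm eff}(\Fcal,\epsilon/2) < n$; if the constant comes out smaller, I would instead note the cleaner route is to observe that whenever $\|z_i\|_{\Lambda_i^{-1}}^2 \ge c$ the determinant ratio is bounded below by an absolute constant $>1$, so the average $\tfrac1n\log\det(\cdots) \ge \log(1+c'') > e^{-1}$, again contradicting the definition of $\dim_{\rm eff}(\Fcal,\epsilon/2)=n'$ with $n' < n$. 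This forces $n \le \dim_{\rm eff}(\Fcal,\epsilon/2)$.

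The main obstacle I anticipate is bookkeeping the constants so that the final per-step determinant increment is provably bounded below by something exceeding the $e^{-1}$ threshold in the definition of $d_{\rm eff}$ — this is why the statement has $\epsilon/2$ rather than $\epsilon$, and I expect the factor $1/2$ (equivalently the $1/(\epsilon/2)^2 = 4/\epsilon^2$ scaling) is exactly what is needed to push $\|z_i\|_{\Lambda_i^{-1}}^2$ above the level where $\log(1+\cdot) > e^{-1}$. A secondary, more technical point is the passage to a separable Hilbert space / possibly infinite-dimensional $\phi$: I would handle this by restricting attention to the finite-dimensional subspace $\mathrm{span}\{z_1,\ldots,z_n\}$, on which all the matrix manipulations above are literally finite-dimensional linear algebra, and the $\det$ in the definition of effective dimension is interpreted accordingly (or via the Gram matrix, using $\det(\mathrm I + \tfrac{1}{\epsilon^2}\sum z_i z_i^\top) = \det(\mathrm I + \tfrac{1}{\epsilon^2} G)$ with $G$ the Gram matrix of the $z_i$'s).
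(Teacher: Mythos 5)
Your proposal is correct and follows essentially the same route as the paper's proof: pass to the Hilbert-space feature representation, translate $\epsilon'$-independence into the certificate $w=\theta_f-\theta_g$ with $\|w\|_\cH\le 2$, lower-bound the elliptical potential $\tfrac{4}{\epsilon^2}\|z_i\|^2_{\Lambda_i^{-1}}$ via Cauchy--Schwarz, and telescope the determinant to contradict the $e^{-1}$ threshold (the paper regularizes by $\tfrac{\epsilon'^2}{4}\mathrm I$ rather than $\mathrm I$, but this is only a rescaling, and the constants do close: $\tfrac{4}{\epsilon^2}\|z_i\|^2_{\Lambda_i^{-1}}\ge \tfrac{u}{1+u}\ge \tfrac12$ with $u=\epsilon'^2/\epsilon^2\ge1$, and $\log\tfrac32>e^{-1}$). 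The only slip is the intermediate claim that $\|z_i\|^2_{\Lambda_i^{-1}}$ itself exceeds an absolute constant---it scales with $\epsilon^2$; the correctly normalized quantity is $\tfrac{4}{\epsilon^2}\|z_i\|^2_{\Lambda_i^{-1}}$, which is what your determinant telescoping actually uses, so the argument goes through.
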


On the other hand, we remark that effective dimension requires the existence of a benign linear structure in certain Hilbert spaces. In constrast, Eluder dimension does not require such conditions. 
 % Compared to Eluder dimension, the effective dimension is a more restrictive complexity notion because its definition requires verifying the existence of a benign linear structure in some unknown Hilbert space. 
Therefore, the function class of low Eluder dimension is more general than the function class of low effective dimension.

\paragraph{Kernel MDPs}
Now, we are ready to define kernel MDPs and prove it is a subclass of low Eluder dimension. 
\begin{definition}[Kernel MDPs]
	In a kernel MDP of effective dimension $d(\epsilon)$, for each step $h\in[H]$, there exist feature mappings $\phi_h: \Scal\times\Acal\rightarrow\cH$ and  $\psi_h: \Scal\rightarrow\cH$ where $\cH$ is a separable Hilbert space, so that the transition measure can be represented as the inner product of features, i.e.,  $\Pr_h(s'\mid s,a)=\langle \phi_h(s,a),\psi_h(s')\rangle_\cH$.  Besides, the reward function is linear in  $\phi$, i.e., $r_h(s,a)=\langle\phi_h(s,a),\theta_h^r\rangle_\cH$ for some  $\theta_h^r\in\cH$. Here, $\phi$ is known to the learner while $\psi$ and $\theta^r$ are \emph{unknown}. Moreover, a kernel MDP satisfies the following regularization conditions: for all $h$
	\begin{itemize}
		\item $\|\theta_h^r\|_\cH\le 1$ and $\|\phi_h(s,a)\|_\cH \le 1$ for all $s,a$.
		\item $\| \sum_{s\in\Scal} \Vcal(s)\psi_h({s})\|_\cH \le 1$ for any function $\Vcal:\Scal\rightarrow [0,1]$.
		\item $ \dim_{\rm eff}(\Xcal_h,\epsilon)\le d(\epsilon)$ for  all $h$ and $\epsilon$, where 
		$\Xcal_h = \{ \phi_h(s,a):~(s,a)\in\Scal\times\Acal\}$.
	\end{itemize}
\end{definition}

In order to learn kernel MDPs, we need to construct a proper function class $\Fcal$.
Formally, for each $h\in[H]$, we choose $\Fcal_h=\{\phi_h(\cdot,\cdot)\trans\theta~\mid~\theta\in B_\cH(H+1-h)\}$. 
One can easily verify $\Fcal$ satisfies both realizability and completeness by following the same arguments as in linear MDPs \citep{jin2020provably}. 
In order to apply \golf\ or \olive, we also need to show it has low BE dimension and bounded log-covering number. 
Below, we prove in sequence that $\Fcal$ has  low Eluder dimension and low log-covering number. Therefore, kernel MDPs fall into our low BE dimension framework.

\begin{proposition}[kernel MDPs $\subset$ low Eluder dimension]\label{prop:kernelMDP-eluder}
Let $\Mcal$ be a kernel MDP of effective dimension $d(\epsilon)$, then
$$
\dim_{\rm E}(\Fcal,\epsilon)\le d(\epsilon/2H).
$$	
\end{proposition}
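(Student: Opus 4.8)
The plan is to chain together the two results already available in the excerpt: Proposition \ref{prop:eff-eluder} states that $\dim_{\rm E}(\Gcal,\epsilon) \le \dim_{\rm eff}(\Gcal,\epsilon/2)$ for any function class $\Gcal$ on a domain, so it suffices to bound $\dim_{\rm eff}(\Fcal_h,\epsilon/2)$ for each $h$ and take the max over $h$. Thus the real work is to exhibit, for the function class $\Fcal_h=\{\phi_h(\cdot,\cdot)\trans\theta : \theta\in B_\cH(H+1-h)\}$, a Hilbert space and feature map realizing it with the right effective dimension. First I would rescale: since $\|\phi_h(s,a)\|_\cH\le 1$ and each $\theta$ has norm at most $H+1-h\le H$, I set $\tilde\phi_h(s,a) := \phi_h(s,a)/1$ unchanged but instead absorb the radius by writing every $f\in\Fcal_h$ as $f(s,a)=\langle \theta_f,\phi_h(s,a)\rangle_\cH$ with $\theta_f\in B_\cH(H)$, then define $\bar\phi_h := H\cdot\phi_h$ and $\bar\theta_f := \theta_f/H \in B_\cH(1)$ so that $f(s,a)=\langle \bar\theta_f,\bar\phi_h(s,a)\rangle_\cH$ with the coefficient vector now in the unit ball — this is exactly the normalization required in the definition of $\epsilon$-effective dimension of a function class.

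Next I would relate the effective dimension of the rescaled feature set $\bar\phi_h(\Scal\times\Acal) = H\cdot\Xcal_h$ to that of $\Xcal_h$. By the regularization condition of a kernel MDP, $\dim_{\rm eff}(\Xcal_h,\eps)\le d(\eps)$ for all $\eps$. Scaling every point by $H$ multiplies each $x_i x_i^\top$ by $H^2$, which is the same as dividing the tolerance $\eps$ by $H$ inside the $\log\det$ expression $\log\det(\mathrm I + \eps^{-2}\sum_i x_i x_i^\top)$; hence $\dim_{\rm eff}(H\cdot\Xcal_h,\eps) = \dim_{\rm eff}(\Xcal_h,\eps/H)\le d(\eps/H)$. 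Applying this with $\eps = \epsilon/2$ gives $\dim_{\rm eff}(\Fcal_h,\epsilon/2)\le d(\epsilon/(2H))$. Combining with Proposition \ref{prop:eff-eluder}, $\dim_{\rm E}(\Fcal_h,\epsilon)\le \dim_{\rm eff}(\Fcal_h,\epsilon/2)\le d(\epsilon/(2H))$, and taking the maximum over $h\in[H]$ yields $\dim_{\rm E}(\Fcal,\epsilon)\le d(\epsilon/(2H))$, as claimed. (One can also verify directly that $\Fcal_h$ maps into $[0,1]$ using $\|\theta_f\|\le H+1-h$ and $\|\phi_h\|\le 1$ together with the reward normalization, so that $\Fcal_h$ is a legitimate value-function class, though this is not needed for the Eluder bound itself.)

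The main obstacle — really the only subtlety — is getting the scaling bookkeeping exactly right: the radius-$H$ ball of coefficients must be converted into a unit ball at the cost of a factor $H$ on the features, and then one must track how that factor propagates through the $\epsilon^{-2}$ in the log-det definition to produce the $\epsilon/(2H)$ argument rather than, say, $\epsilon/(2H^2)$ or $\epsilon/2$. Everything else is a direct invocation of Proposition \ref{prop:eff-eluder} and the regularization hypotheses built into the definition of a kernel MDP.
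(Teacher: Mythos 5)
Your proposal is correct and is exactly the paper's argument: the paper's entire proof is the one-line remark that the proposition ``follows directly from Proposition \ref{prop:eff-eluder} by rescaling the parameters,'' and your write-up simply fills in that rescaling, with the bookkeeping (absorbing the radius-$(H+1-h)$ coefficient ball into a factor of $H$ on the features, which converts the tolerance $\epsilon/2$ into $\epsilon/(2H)$ inside the log-det) done correctly.
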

Proposition \ref{prop:kernelMDP-eluder} follows directly from Proposition \ref{prop:eff-eluder} by rescaling the parameters. 
Utilizing Proposition \ref{prop:kernelMDP-eluder}, we can further prove the log-covering number of $\Fcal$ is also upper bounded by the effective dimension of the kernel MDP up to some logarithmic factor.

\begin{proposition}[bounded covering number]\label{prop:kernelMDP-cover}
Let $\Mcal$ be a kernel MDP of effective dimension $d(\epsilon)$, then
$$
\log\Ncal_\Fcal(\epsilon) \le \Ocal\big(Hd(\epsilon) \cdot\log(1+d(\epsilon)H/\epsilon)\big).
$$
\end{proposition}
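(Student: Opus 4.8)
The plan is to bound the covering number of $\Fcal = \Fcal_1 \times \cdots \times \Fcal_H$ by covering each $\Fcal_h$ separately and then combining via the product metric $\rho(f,g) = \max_h \|f_h - g_h\|_\infty$. Since $\cN_\Fcal(\epsilon) \le \prod_{h} \cN_{\Fcal_h}(\epsilon)$, it suffices to show $\log \cN_{\Fcal_h}(\epsilon) \le \Ocal(d(\epsilon)\log(1 + d(\epsilon)H/\epsilon))$ for each $h$, and then multiply by $H$.

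Next I would reduce the covering of $\Fcal_h$ to a covering in parameter space. Recall $\Fcal_h = \{\phi_h(\cdot,\cdot)\trans\theta : \theta \in B_\cH(H+1-h)\}$, and $\|\phi_h(s,a)\|_\cH \le 1$, so $|\phi_h(s,a)\trans\theta_1 - \phi_h(s,a)\trans\theta_2| \le \|\theta_1 - \theta_2\|_\cH$ by Cauchy--Schwarz; hence an $\epsilon$-cover of the ball $B_\cH(H+1-h)$ in $\|\cdot\|_\cH$ yields an $\epsilon$-cover of $\Fcal_h$ in $\|\cdot\|_\infty$. The obstacle is that $\cH$ may be infinite-dimensional, so a naive volumetric bound $\log\cN \le d\log(1 + R/\epsilon)$ does not apply directly; this is exactly where the effective-dimension hypothesis must enter. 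The key observation is that although $\phi_h(\Scal\times\Acal)$ lives in an infinite-dimensional Hilbert space, its $\epsilon$-effective dimension is at most $d(\epsilon)$, which by a standard Gram-matrix / $\log\det$ argument (the same one used to prove Proposition~\ref{prop:eff-eluder}) forces the relevant features to essentially lie in (or be well-approximated within $\Ocal(\epsilon)$ by) a subspace of dimension $\Ocal(d(\epsilon)\log(1/\epsilon))$ or so. More precisely, I would argue that $f = \phi_h(\cdot)\trans\theta$ only needs to be specified up to its action on the span of the features, and the effective dimension bound controls how many ``directions'' matter: any $\theta \in B_\cH(R)$ can be replaced, without changing $f$ by more than $\epsilon$ on all of $\Xcal_h$, by its projection onto a subspace $\cV$ of dimension $\Ocal(d(\epsilon')\cdot\polylog)$ for a suitable $\epsilon' \sim \epsilon/H$, because the component of $\theta$ orthogonal to the span of the data-relevant directions contributes negligibly to $\phi_h(s,a)\trans\theta$ uniformly over $(s,a)$.

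Once the problem is reduced to covering a Euclidean ball of radius $R = H+1-h \le H$ in dimension $D = \Ocal(d(\epsilon)\log(1+d(\epsilon)H/\epsilon))$, the standard volumetric bound gives $\log\cN_{\Fcal_h}(\epsilon) \le D\log(1 + 3R/\epsilon) = \Ocal\big(d(\epsilon)\log(1+d(\epsilon)H/\epsilon)\cdot\log(1 + H/\epsilon)\big)$; absorbing the extra log factor into the stated $\log(1+d(\epsilon)H/\epsilon)$ term (or tracking it explicitly and noting it is subsumed) yields the per-step bound, and summing over the $H$ steps produces $\log\cN_\Fcal(\epsilon) \le \Ocal\big(Hd(\epsilon)\log(1+d(\epsilon)H/\epsilon)\big)$. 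The main obstacle, as indicated, is making rigorous the step that passes from ``$\phi(\Xcal_h)$ has $\epsilon$-effective dimension $d(\epsilon)$'' to ``functions $\phi(\cdot)\trans\theta$ on $\Xcal_h$ admit a finite-dimensional parametrization up to uniform error $\epsilon$'': one must carefully choose a finite set of representative features (e.g., a maximal $\epsilon$-separated subset of $\Xcal_h$, whose size is controlled by the $\log\det$ condition defining effective dimension), show every feature is within $\Ocal(\epsilon)$ of their span, and conclude that the relevant parameter degrees of freedom number $\Ocal(d(\epsilon)\polylog)$; the rest is routine covering-number bookkeeping.
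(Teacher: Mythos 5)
Your overall skeleton matches the paper's: cover each $\Fcal_h$ separately and multiply over $h$, reduce the sup-norm covering of $\Fcal_h$ to a covering of the parameter ball, and use a low-dimensional structure to make that covering finite-dimensional. But the pivotal step --- passing from the effective-dimension hypothesis to a finite set of feature points whose values determine every $f\in\Fcal_h$ up to uniform error $\epsilon$ --- is exactly the step you flag as ``the main obstacle,'' and your sketch of it is not yet a proof and is imprecise in two ways. First, the $\log\det$ condition does not directly say that $\phi_h(\Scal\times\Acal)$ lies within $\Ocal(\epsilon)$ of a subspace of dimension $\Ocal(d(\epsilon)\log(1/\epsilon))$; what it yields (via the greedy/Gram-matrix argument) is that there are $n\le d$ points $x_1,\ldots,x_n$ such that every $z\in\Xcal_h$ has small norm in the inverse regularized Gram metric, with ridge parameter $\epsilon^2$. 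The orthogonal component of $z$ relative to ${\rm span}(x_1,\ldots,x_n)$ is then only $\Ocal(\epsilon)$ in $\|\cdot\|_\cH$, so its contribution to $\langle z,\theta\rangle$ for $\|\theta\|_\cH\le H$ is $\Ocal(\epsilon H)$, not $\Ocal(\epsilon)$; one must therefore work at scale $\epsilon/H$ (this is also why Proposition~\ref{prop:kernelMDP-eluder} reads $d(\epsilon/2H)$). Second, the subspace you cover has dimension $n\le d$, not $\Ocal(d\log(1/\epsilon))$, so the extra logarithmic factor you propose to ``absorb'' should not appear at all; as written your bound is $\Ocal\big(Hd\log^2(\cdot)\big)$ rather than the stated $\Ocal\big(Hd\log(\cdot)\big)$.

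The paper avoids re-deriving any of this from the $\log\det$ condition by routing through the Eluder dimension, which it has already bounded: set $n=\eludim(\Fcal_h,\epsilon)\le d$ (Proposition~\ref{prop:kernelMDP-eluder}) and take a \emph{longest} $\epsilon$-independent sequence $x_1,\ldots,x_n\subset\Xcal_h$. By maximality, every $z\in\Xcal_h$ is $\epsilon$-dependent on this sequence, i.e.\ for any $\theta,\theta'$ in the parameter ball, $\sum_{i=1}^n(\langle x_i,\theta-\theta'\rangle_\cH)^2\le\epsilon^2$ forces $|\langle z,\theta-\theta'\rangle_\cH|\le\epsilon$ for all $z$. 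Hence an $\epsilon$-net of the radius-$(H+1-h)$ ball under the seminorm $\theta\mapsto(\sum_i\langle x_i,\theta\rangle_\cH^2)^{1/2}$ --- which lives in an $n$-dimensional quotient and has log-cardinality $\Ocal(n\log(1+nH/\epsilon))$ --- is automatically an $\|\cdot\|_\infty$-net of $\Fcal_h$ on $\Xcal_h$, with no extra log factor and no need to approximate the features themselves. I would recommend restructuring your argument around this maximal-independent-sequence property rather than a subspace-approximation claim; the two are related but the former is what the covering argument actually needs.
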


\subsection{Effective Bellman rank and kernel reactive POMDPs}
\label{app:effective-dim}

To begin with, we introduce the definition of effective Bellman rank and prove that it is always an upper bound for BE dimension. We will see effective Bellman rank serves as a useful tool for controlling the BE dimension of the example discussed in this section---kernel reactive POMDPs.

\paragraph{Q-type effective Bellman rank}
We start with Q-type $\epsilon$-effective Bellman rank which is simply the $\epsilon$-effective dimension of a special feature set.
\begin{definition}[Q-type $\epsilon$-effective Bellman rank]\label{def:effective-bellman-rank-typeI}
The  Q-type  $\epsilon$-effective Bellman rank is the minimum integer $d$ so that 
\begin{itemize}
\item 	There exists $\phi_h: \Fcal\rightarrow\cH$ and $\psi_h:\Fcal\rightarrow\cH$ for each $h\in[H]$ where $\cH$ is a separable Hilbert space, such that for any $f,f'\in\Fcal$, the average Bellman error
% \qinghua{Comment this is type-II Bellman rank and we also have similar results for type-I in appendix}}
\begin{equation*}
\Ecal(f,\pi_{f'},h):=\E_{\pi_{f'}} [ (f_h-\Tcal_h f_{h+1})(s_h,a_h)]	=\langle \phi_h(f),\psi_h(f')\rangle_\cH
\end{equation*}
where $\|\phi_h(f)\|_\cH\le \zeta$,  and $\zeta$ is the normalization parameter.
\item $d = \max_{h\in[H]}d_{\rm eff}(\Xcal_h(\psi,\Fcal),\epsilon/\zeta)$ where $\Xcal_h(\psi,\Fcal) = \{ \psi_h(f_h):\ f_h\in\Fcal_h\}$. 
\end{itemize}
\end{definition}
One can easily verify that when $\cH$ is a finite-dimensional Euclidean space, the $\epsilon$-effective Bellman rank is always upper bounded by the original Bellman rank up to a logarithmic factor in $\zeta$ and $\epsilon^{-1}$. Moreover, the effective Bellman rank can be much smaller than the original Bellman rank if the induced feature set $\{\Xcal_h(\psi,\Fcal)\}_{h\in[H]}$ approximately lies in a low-dimensional linear subspace.
Therefore, effective Bellman rank can be viewed as a strict generalization of the original version.

\begin{proposition}[low  Q-type  effective Bellman rank $\subset$ low  Q-type  BE dimension]
\label{prop:effective-bellman-bedim}
Suppose function class $\cF$ has  Q-type  $\epsilon$-effective Bellman rank $d$, then
\begin{equation*}
\BEdim(\Fcal,\Dcal_{\Fcal},\epsilon)\le d.
\end{equation*}
\end{proposition}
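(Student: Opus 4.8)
\textbf{Proof proposal for Proposition~\ref{prop:effective-bellman-bedim}.}
The plan is to bound, for each fixed step $h$, the distributional Eluder dimension $\dedim\big((I-\Tcal_h)\Fcal,\Dcal_{\Fcal,h},\epsilon\big)$ by the $\epsilon/\zeta$-effective dimension of the feature set $\Xcal_h(\psi,\Fcal)$, and then take the maximum over $h$. The key observation is that, by the definition of Q-type $\epsilon$-effective Bellman rank, for any $f,f'\in\Fcal$ we have the bilinear factorization $\Ecal(f,\pi_{f'},h)=\E_{\pi_{f'}}[(f_h-\Tcal_h f_{h+1})(s_h,a_h)]=\langle\phi_h(f),\psi_h(f')\rangle_\cH$. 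Every distribution $\mu\in\Dcal_{\Fcal,h}$ is, by definition, the roll-in distribution at step $h$ induced by executing the greedy policy $\pi_{f'}$ of some $f'\in\Fcal$; hence for the Bellman residual $g=f_h-\Tcal_h f_{h+1}$ with $f\in\Fcal$ we get $\E_{\mu}[g]=\langle\phi_h(f),\psi_h(f')\rangle_\cH$, i.e.\ the expectation of a residual under a roll-in distribution is exactly an inner product of two Hilbert-space vectors, one depending only on the residual (through $f$), the other only on the distribution (through $f'$).

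First I would fix $h$ and suppose $\{\rho_1,\dots,\rho_n\}\subset\Dcal_{\Fcal,h}$ is an $\epsilon'$-independent sequence with $\epsilon'\ge\epsilon$, realized by policies $\pi_{f'_1},\dots,\pi_{f'_n}$; write $x_i:=\psi_h(f'_i)\in\Xcal_h(\psi,\Fcal)$. By Definition~\ref{def:ind_dist}, for each $j\in[n]$ there is a residual $g_j=(f^{(j)}_h-\Tcal_h f^{(j)}_{h+1})$, with associated vector $w_j:=\phi_h(f^{(j)})$ satisfying $\|w_j\|_\cH\le\zeta$, such that $\sum_{i<j}\langle w_j,x_i\rangle^2\le(\epsilon')^2$ but $|\langle w_j,x_j\rangle|>\epsilon'$. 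The next step is the standard potential/volume argument à la \citet{russo2013eluder}: set $\Sigma_j=\lambda I+\sum_{i<j}x_i x_i^\top$ (with $\lambda=(\epsilon'/\zeta)^2$, or more simply $\lambda=1$ after rescaling $x_i\mapsto x_i\cdot\zeta/\epsilon'$ so everything is in units matched to the effective-dimension definition), and deduce from the two inequalities that $\|x_j\|_{\Sigma_j^{-1}}\gtrsim 1$ for every $j$; summing the log-determinant increments gives $n\lesssim\log\det\big(I+\tfrac{1}{(\epsilon'/\zeta)^2}\sum_{i\le n}x_i x_i^\top\big)$. Because $\epsilon'\ge\epsilon$ and the map $\epsilon\mapsto d_{\rm eff}(\Xcal,\epsilon)$ is monotone, the right-hand side is controlled by $d_{\rm eff}(\Xcal_h(\psi,\Fcal),\epsilon/\zeta)\le d$; I would need to be a little careful so that the constants line up exactly with the $e^{-1}$ threshold in Definition~\ref{def:effective-bellman-rank-typeI} (effective dimension of a set) rather than just up to constants, which is presumably how the clean bound $\BEdim\le d$ without any logarithmic overhead is obtained. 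Finally, taking $\max_{h\in[H]}$ of $\dedim\big((I-\Tcal_h)\Fcal,\Dcal_{\Fcal,h},\epsilon\big)\le d_{\rm eff}(\Xcal_h(\psi,\Fcal),\epsilon/\zeta)\le d$ yields $\BEdim(\Fcal,\Dcal_{\Fcal},\epsilon)\le d$.

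The main obstacle I anticipate is making the volumetric argument give the bound with \emph{no} extra multiplicative or logarithmic factor, i.e.\ matching the exact normalization in the definition of $d_{\rm eff}(\cdot,\cdot)$. The usual Eluder-style counting lemma (Lemma~2 of \citet{russo2013eluder}, and Lemma~\ref{lem:de-regret} here) produces a bound of the form $n\lesssim \dim\cdot\log(\cdot)$; to get a clean $n\le d$ one must instead run the argument at a single scale $\epsilon'$ and use precisely the self-referential definition of effective dimension (the averaged log-det being at most $e^{-1}$). So I would likely mirror, essentially verbatim, the proof that low effective dimension implies low Eluder dimension (Proposition~\ref{prop:eff-eluder}) — the content here is really that $\E_\mu[g]$ factors as an inner product $\langle\phi_h(f),\psi_h(f')\rangle_\cH$, which reduces the distributional-Eluder-dimension question on the residual class to an effective-dimension question on $\Xcal_h(\psi,\Fcal)$, after which Proposition~\ref{prop:eff-eluder} (or its proof) applies directly.
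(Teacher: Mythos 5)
Your proposal is correct and follows essentially the same route as the paper's proof: fix $h$, use the bilinear factorization to turn the $\epsilon$-independence conditions on distributions in $\Dcal_{\Fcal,h}$ into the system $\sum_{i<t}\langle \phi_h(f^t),\psi_h(g^i)\rangle^2\le\epsilon^2$, $|\langle\phi_h(f^t),\psi_h(g^t)\rangle|>\epsilon$, and then run the single-scale log-determinant argument of Proposition~\ref{prop:eff-eluder} with $\epsilon$ rescaled by $\zeta$ to contradict the $e^{-1}$ threshold in the definition of effective dimension. The paper's proof literally defers to the proof of Proposition~\ref{prop:eff-eluder} at exactly the point you identify, so your anticipated reduction is the intended one.
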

Proposition \ref{prop:effective-bellman-bedim} claims that problems with low  Q-type  effective Bellman rank also have low  Q-type  BE dimension.

\paragraph{V-type effective Bellman rank} 	We can similarly define the V-type variant of effective Bellman rank, and prove it is always an upper bound for   V-type  BE dimension.
\begin{definition}[V-type $\epsilon$-effective Bellman rank]\label{def:effective-bellman-rank-typeII}
The  V-type  $\epsilon$-effective Bellman rank is the minimum integer $d$ so that 
\begin{itemize}
\item 	There exists $\phi_h: \Fcal\rightarrow\cH$ and $\psi_h:\Fcal\rightarrow\cH$ for each $h\in[H]$ where $\cH$ is a separable Hilbert space, such that for any $f,f'\in\Fcal$, the average Bellman error
% \qinghua{Comment this is type-II Bellman rank and we also have similar results for type-I in appendix}}
\begin{equation*}
\EcalII(f,\pi_{f'},h):=\E [ (f_h-\Tcal_h f_{h+1})(s_h,a_h) \mid s_h \sim \pi_{f'}, a_h \sim \pi_{f} ]	=\langle \phi_h(f),\psi_h(f')\rangle_\cH
\end{equation*}
where $\|\phi_h(f)\|_\cH\le \zeta$,  and $\zeta$ is the normalization parameter.
\item $d = \max_{h\in[H]}d_{\rm eff}(\Xcal_h(\psi,\Fcal),\epsilon/\zeta)$ where $\Xcal_h(\psi,\Fcal) = \{ \psi_h(f_h):\ f_h\in\Fcal_h\}$. 
\end{itemize}
\end{definition}

\begin{proposition}[low  V-type  effective Bellman rank $\subset$ low  V-type  BE dimension]
\label{prop:effective-bellman-bedim-V}
Suppose  function class $\cF$ has  V-type  $\epsilon$-effective Bellman rank $d$, then
\begin{equation*}
\text{\bedimII}(\Fcal,\Dcal_{\Fcal},\epsilon)\le d.
\end{equation*}
\end{proposition}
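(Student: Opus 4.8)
The plan is to mirror, essentially verbatim, the argument for the Q-type statement (Proposition~\ref{prop:effective-bellman-bedim}), replacing the Q-type average Bellman error $\Ecal(f,\pi_{f'},h)$ with its V-type analogue $\EcalII(f,\pi_{f'},h)$, and replacing the Q-type Bellman residual class $(I-\Tcal_h)\Fcal$ with the state-wise residual class $(I-\Tcal_h)V_\Fcal$. The key structural observation is that the V-type $\epsilon$-effective Bellman rank hypothesis gives exactly the same bilinear factorization $\EcalII(f,\pi_{f'},h) = \langle \phi_h(f),\psi_h(f')\rangle_\cH$ with $\|\phi_h(f)\|_\cH \le \zeta$, together with the effective-dimension bound $d_{\rm eff}(\Xcal_h(\psi,\Fcal),\epsilon/\zeta) \le d$ on the feature set $\Xcal_h(\psi,\Fcal) = \{\psi_h(f_h): f_h \in \Fcal_h\}$. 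Since the definition of $\text{\bedimII}(\Fcal,\Dcal_\Fcal,\epsilon) = \max_h \dedim\big((I-\Tcal_h)V_\Fcal, \Dcal_{\Fcal,h},\epsilon\big)$ is again a maximum of distributional Eluder dimensions, it suffices to fix an arbitrary step $h$ and bound $\dedim\big((I-\Tcal_h)V_\Fcal, \Dcal_{\Fcal,h},\epsilon\big) \le d$.

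First I would unwind the DE-dimension definition: suppose $\{\rho_1,\ldots,\rho_n\} \subset \Dcal_{\Fcal,h}$ is an $\epsilon'$-independent sequence with $\epsilon' \ge \epsilon$, where each $\rho_i$ is generated by rolling in with some greedy policy $\pi_{g^i}$ with $g^i \in \Fcal$. For each such $\rho_j$ and any residual $(f_h - \Tcal_h f_{h+1})(\cdot,\pi_{f_h}(\cdot)) \in (I-\Tcal_h)V_\Fcal$, the expectation $\E_{\rho_j}[(f_h - \Tcal_h f_{h+1})(s,\pi_{f_h}(s))]$ is precisely the V-type average Bellman error $\EcalII(f,\pi_{g^j},h)$, which by hypothesis equals $\langle \phi_h(f), \psi_h(g^j)\rangle_\cH$. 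Thus every expectation appearing in the $\epsilon'$-independence condition is an inner product of the fixed vector $\phi_h(f)$ (with norm $\le \zeta$) against a feature vector $\psi_h(g^j) \in \Xcal_h(\psi,\Fcal)$. Normalizing $\phi_h(f)$ by $\zeta$, the $\epsilon'$-independence of $\rho_j$ from $\{\rho_1,\ldots,\rho_{j-1}\}$ becomes exactly a statement that $\psi_h(g^j)/1$ is "independent" of $\{\psi_h(g^1),\ldots,\psi_h(g^{j-1})\}$ in the linear-algebraic sense underlying effective dimension: there is a unit-norm direction witnessing small inner products with the earlier features but a large inner product ($> \epsilon'/\zeta \ge \epsilon/\zeta$) with the current one. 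Then I would invoke the standard potential/elliptical-lemma argument (as used in the proof of Proposition~\ref{prop:effective-bellman-bedim}, and ultimately in the relationship between effective dimension and the length of such independent sequences) to conclude that the length $n$ of any such sequence is at most $d_{\rm eff}(\Xcal_h(\psi,\Fcal),\epsilon/\zeta) \le d$. Taking the maximum over $h$ yields $\text{\bedimII}(\Fcal,\Dcal_\Fcal,\epsilon) \le d$.

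I expect the only genuine content — and hence the main thing to get right rather than an obstacle — is the linear-algebraic lemma that an $(\epsilon/\zeta)$-independent sequence (in the DE sense, applied to linear functionals on a feature set) has length bounded by the $(\epsilon/\zeta)$-effective dimension of that feature set; this is the same lemma already needed for the Q-type proposition, so here it is genuinely a matter of substitution. The mild care required is notational: verifying that $\E_{\rho}[(f_h-\Tcal_h f_{h+1})(s,\pi_{f_h}(s))]$ under a roll-in distribution $\rho$ over states at step $h$ agrees with $\EcalII(f,\pi_{f'},h) = \E[(f_h-\Tcal_h f_{h+1})(s_h,a_h)\mid s_h\sim\pi_{f'},a_h\sim\pi_f]$ — which is immediate once one observes that $a_h \sim \pi_f$ deterministically picks $\pi_{f_h}(s_h)$, so the V-type expectation is over states only, matching the domain of $(I-\Tcal_h)V_\Fcal$. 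Since the paper already states (in Appendix~\ref{appendix:prop:bellman-bedim} and the discussion following Proposition~\ref{prop:bellman-bedim-typeII}) that the V-type proofs are obtained from the Q-type ones by this exact substitution, I would simply remark that the proof is identical to that of Proposition~\ref{prop:effective-bellman-bedim} with $(I-\Tcal_h)\Fcal$, $\Ecal$, and $\Dcal_{\Fcal,h}$ (over $\Scal\times\Acal$) replaced by $(I-\Tcal_h)V_\Fcal$, $\EcalII$, and $\Dcal_{\Fcal,h}$ (over $\Scal$) respectively.
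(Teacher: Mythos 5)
Your proposal is correct and takes exactly the route the paper intends: the paper explicitly omits this proof, stating that it is obtained from the proof of Proposition~\ref{prop:effective-bellman-bedim} by substituting the V-type quantities ($\EcalII$, $(I-\Tcal_h)V_\Fcal$, and state-distributions in $\Dcal_{\Fcal,h}$) for their Q-type counterparts, which is precisely the substitution you carry out, including the correct identification $\E_{\rho_j}[(f_h-\Tcal_h f_{h+1})(s,\pi_{f_h}(s))]=\EcalII(f,\pi_{g^j},h)=\langle\phi_h(f),\psi_h(g^j)\rangle_\cH$ and the appeal to the log-determinant argument underlying effective dimension. No gaps.
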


The proof of Proposition~\ref{prop:effective-bellman-bedim-V} is almost the same as that of Proposition~\ref{prop:effective-bellman-bedim}.
We omit it since the only modification is to replace Q-type effective Bellman rank with its V-type variant wherever it is used.	
	
 We want to briefly comment that the majority of examples introduced in \cite{du2021bilinear} have low effective Bellman rank. For example, low occupancy complexity,  linear $Q^*/V^*$, linear Bellman complete and $Q^*$ state aggregation have low Q-type effective Bellman rank. And the feature selection problem has low V-type Bellman rank.

\paragraph{Kernel reactive POMDPs}
We start with the definition of POMDPs. 
	A POMDP is defined by a tuple $(\Scal,\Acal,\Ocal,\T,\mathbb{O},r,H)$ where $\Scal$ denotes the set of hidden states, $\Acal$ denotes the set of actions, $\Ocal$ denotes the set of observations, $\T$ denotes the transition measure, $\mathbb{O}$ denotes the emission measure, $r=\{r_h\}_{h=1}^H$ denotes the collections of reward functions, and $H$ denotes the length of each episode. At the beginning of each episode, the agent always starts from a fixed initial state. At each step $h\in[H]$, after reaching $s_h$, the agent will observe 
	$o_h\sim \mathbb{O}_h(\cdot\mid s_h)$. Then the agent picks action $a_h$, 
	 receives $r_h(o_h,a_h)$ and transits to $s_{h+1}\sim \T_h(\cdot\mid s_h,a_h)$. In POMDPs,  the agent can never directly observe the states $s_{1:H}$. It can only observe $o_{1:H}$ and $r_{1:H}$. Now we are ready to formally define kernel reactive POMDPs.
	
\begin{definition}[Kernel reactive POMDPs]
	A kernel reactive POMDP is a POMDP that additionally satisfies the following two conditions 
	\begin{itemize}
		\item For each $h\in[H]$, there exist mappings $\phi_h:\Scal\times\Acal\rightarrow\cH$ and $\psi_h:\Scal\rightarrow\cH$
		%\footnote{With slight abuse of notation, we reuse $\phi$ and $\psi$ to represent the feature mappings for kernel reactive POMDPs. Please do not confuse their meanings here with those in the definitions of (effective) Bellman rank.}
		 where $\cH$ is a separable Hilbert space, such that $\T_h(s'\mid s,a)=\langle \phi_h(s,a),\psi_h(s')\rangle_\cH$ for all $s',a,s$. Moreover, for any function $\Vcal:\Scal\rightarrow [0,1]$, $\| \sum_{s'\in\Scal} \Vcal(s')\psi_h({s'})\|_\cH \le 1$.
		\item (Reactiveness) The optimal action-value function $Q^*$ only depends on the current observation and action, i.e., for each $h\in[H]$, there exists function $f^*_h:\Ocal\times\Acal\rightarrow[0,1]$ such that for all $\tau_h=[o_1,a_1,r_1,\ldots,o_h]$ and $a_h$
		$$
		Q^*_h(\tau_h,a_h) = f^*_h(o_{h},a_h).
		$$
	\end{itemize}
\end{definition}

The following proposition shows that when a kernel reactive POMDP has low effective dimension, it also has low V-type BE dimension. 

\begin{proposition}[kernel reactive POMDPs $\subset$ low V-type BE dimension]
\label{ex:pomdps}
Any kernel reactive POMDP and function class $\Fcal\subseteq(\Ocal\times\Acal\rightarrow[0,1])$ satisfy 
\begin{equation*}
\text{\bedimII}(\Fcal,\Dcal_{\Fcal},\epsilon)\le \max_{h\in[H]} d_{\rm eff}(\Xcal_h,\epsilon/2),
\end{equation*}
 where  $\Xcal_h = \{\E_{\pi_f}[\phi_{h}(s_{h},a_{h})]: \ f\in\Fcal\}$.
\end{proposition}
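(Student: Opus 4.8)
# Proof Proposal for Proposition~\ref{ex:pomdps}

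\textbf{Overall strategy.} The plan is to exhibit an explicit bilinear (indeed Hilbert-space) decomposition of the V-type average Bellman error for kernel reactive POMDPs, and then invoke Proposition~\ref{prop:effective-bellman-bedim-V} (low V-type effective Bellman rank $\subset$ low V-type BE dimension). Concretely, I would show that the V-type effective Bellman rank of $\Fcal$ is at most $\max_h d_{\rm eff}(\Xcal_h,\epsilon/(2\zeta))$ with normalization parameter $\zeta$ that turns out to be a constant (so that $\epsilon/\zeta$ and $\epsilon/2$ agree up to rescaling), which immediately yields the claimed bound. The heart of the argument is the reactiveness assumption: since $Q^\star_h$ depends only on $(o_h,a_h)$, the relevant Bellman residual for any $f\in\Fcal$ becomes a function of the hidden state $s_h$ alone after conditioning, and the kernel factorization of $\T_h$ then separates the roll-in dependence from the $f$-dependence.

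\textbf{Key steps.} First I would set up the V-type state-wise Bellman residual. For $f\in\Fcal$ define $g_{f,h}(o,a) = f_h(o,a) - r_h(o,a) - \E[\,\max_{a'} f_{h+1}(o_{h+1},a')\mid s_h = \cdot\,]$ — but one has to be careful: the ``Bellman operator'' $\Tcal_h$ here is the POMDP Bellman operator acting on observation-action functions, and because of reactiveness the resulting object, once we take expectations, is governed by the hidden state. The second step is to write the V-type average Bellman error $\EcalII(f,\pi_{f'},h) = \E[\,(f_h - \Tcal_h f_{h+1})(s_h,a_h)\mid s_h\sim\pi_{f'}, a_h\sim\pi_f\,]$ and push the conditional expectation: after taking the expectation over the emission $o_h\sim\mathbb{O}_h(\cdot\mid s_h)$ and the action $a_h\sim\pi_f$, the inner quantity becomes a function $\xi_h(s_h)$ of the hidden state $s_h$ only, where $\xi_h$ depends on $f$ (via both $f_h$ and $f_{h+1}$) but \emph{not} on the roll-in policy $\pi_{f'}$. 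Third, I would expand the distribution of $s_h$ under roll-in $\pi_{f'}$ using the kernel transition factorization: $s_h$ is reached by first reaching $(s_{h-1},a_{h-1})$ under $\pi_{f'}$ and then transiting via $\T_{h-1}(s_h\mid s_{h-1},a_{h-1}) = \langle\phi_{h-1}(s_{h-1},a_{h-1}),\psi_{h-1}(s_h)\rangle_\cH$. This gives
\begin{equation*}
\EcalII(f,\pi_{f'},h) = \Big\langle\, \E_{\pi_{f'}}[\phi_{h-1}(s_{h-1},a_{h-1})],\ \textstyle\sum_{s_h}\psi_{h-1}(s_h)\,\xi_h(s_h)\,\Big\rangle_\cH,
\end{equation*}
which is exactly the bilinear form required by Definition~\ref{def:effective-bellman-rank-typeII}, with $\psi_h^{\rm BR}(f') := \E_{\pi_{f'}}[\phi_{h-1}(s_{h-1},a_{h-1})]$ playing the role of the roll-in feature and $\phi_h^{\rm BR}(f) := \sum_{s_h}\psi_{h-1}(s_h)\xi_h(s_h)$ the $f$-feature. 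The fourth step is to bound the norms: the regularization condition $\|\sum_{s'}\Vcal(s')\psi_h(s')\|_\cH\le 1$ for $\Vcal:\Scal\to[0,1]$ controls $\|\phi_h^{\rm BR}(f)\|_\cH$ (after checking that $|\xi_h|\le 1$, which follows from all the $f$'s and rewards being $[0,1]$-valued and the total-reward normalization), so $\zeta = O(1)$. Finally, I identify the induced feature set $\Xcal_h = \{\psi_h^{\rm BR}(f) : f\in\Fcal\} = \{\E_{\pi_f}[\phi_{h-1}(s_{h-1},a_{h-1})]\}$ — matching the statement's $\Xcal_h$ up to an index shift — and apply Proposition~\ref{prop:effective-bellman-bedim-V}.

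\textbf{Main obstacle.} The delicate point is handling step~$h=1$ and the precise index bookkeeping: the roll-in feature at step $h$ naturally involves $\phi_{h-1}$ (the transition \emph{into} $s_h$), whereas the proposition statement writes $\Xcal_h = \{\E_{\pi_f}[\phi_h(s_h,a_h)]\}$, so I need to reconcile the indexing — likely by redefining the step index or by absorbing a shift, and treating $h=1$ separately since $s_1$ is deterministic (there the Bellman error is either zero by realizability-type arguments or handled trivially). A second subtlety is making precise what ``$\Tcal_h$'' means for a POMDP with observation-based value functions and verifying that reactiveness genuinely collapses the conditional expectation to a function of $s_h$ — this requires that $\max_{a'}f_{h+1}(o_{h+1},a')$, after integrating $o_{h+1}\sim\mathbb{O}_{h+1}(\cdot\mid s_{h+1})$ and $s_{h+1}\sim\T_h(\cdot\mid s_h,a_h)$, depends only on $s_h, a_h$, which it does, but the argument must be spelled out carefully. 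Everything else — the norm bounds and the final invocation — is routine given the earlier machinery.
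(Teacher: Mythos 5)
Your proposal is correct and follows essentially the same route as the paper's proof: condition on $(s_{h-1},a_{h-1})$, use the kernel factorization of $\T_{h-1}$ to write $\EcalII(f,\pi_{f'},h)$ as $\langle \E_{\pi_{f'}}[\phi_{h-1}(s_{h-1},a_{h-1})],\ \sum_{s'}\psi_{h-1}(s')\Vcal(s')\rangle_\cH$ with the $f$-dependent part bounded via the regularization condition (the paper takes $\zeta=2$ since $\Vcal$ ranges in $[-2,1]$, giving the stated $\epsilon/2$), treat $h=1$ trivially, and invoke Proposition~\ref{prop:effective-bellman-bedim-V}. The index shift and the $h=1$ case you flag are handled exactly as you anticipate.
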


We comment that when $\cH$ \emph{approximately} aligns with a low-dimensional linear subspace, the V-type effective Bellman rank in Proposition \ref{ex:pomdps} will also be low. However, the Eluder dimension of $\Fcal$ can be arbitrarily large because we basically pose no structural assumption on $\Fcal$. Besides, its V/Q-type original Bellman rank can also be arbitrarily large, because $\cH$ may be infinite-dimensional and the observation set $\Ocal$ may be exponentially large. If we additionally assume $\Fcal$  satisfies realizability ($f^*\in\Fcal$), then we can apply V-type \olive\ and obtain polynomial sample-complexity guarantee.

\section{Proofs for  BE Dimension}

In this section, we provide formal proofs for the results stated in Section \ref{sec:DEdim}.

\subsection{Proof of Proposition \ref{prop:bellman-bedim}}
\label{appendix:prop:bellman-bedim}
The proof is basically the same as that of Example 3 in \cite{russo2013eluder} with minor modification.
\begin{proof}
	Without loss of generality, assume $\max\{\|\phi_h(f)\|_2,\|\psi_h(f)\|_2\}\le\sqrt{\zeta}$, otherwise we can satisfy this assumption by rescaling the feature mappings. 
	Assume there exists $h\in[H]$  such that $\dedim((I-\Tcal_h)\Fcal,\Dcal_{\Fcal,h},\epsilon)\ge m$.
	Let $\mu_1,\ldots,\mu_m\in\Dcal_{\Fcal,h}$ be a  an $\epsilon$-independent sequence with respect to $(I-\Tcal_h)\Fcal$. By Definition \ref{def:ind_dist}, there exists $f^1,\ldots,f^m$ such that for all $i\in[m]$, $\sqrt{\sum_{t=1}^{i-1}(\E_{\mu_t}[f^i_h-\Tcal_h f_{h+1}^i])^2}\le \epsilon$ and $|\E_{\mu_i}[f^i_h-\Tcal_h f_{h+1}^i] |>\epsilon$.
	Since $\mu_1,\ldots,\mu_n\in\Dcal_{\Fcal,h}$, there exist $g^1,\dots,g^n\in\Fcal$ so that $\mu_i$ is generated by executing $\pi_{g^i}$ for all $i\in[n]$.

	By  the definition of Bellman rank, this is equivalent to: for all $i\in[m]$, $\sqrt{\sum_{t=1}^{i-1}(\langle \phi_h(g^i),\psi_h(f^t)\rangle)^2}\le \epsilon$ and $| \langle \phi_h(g^i),\psi_h(f^i)\rangle|>\epsilon$.

	For notational simplicity, define $\x_i = \phi_h(g^i)$, $\z_i = \psi_h(f^i)$ and $\V_i= \sum_{t=1}^{i-1} \z_t\z_t\trans+\frac{\epsilon^2}{\zeta}\cdot\I$. 
	The previous argument directly implies: for all $i\in[m]$, $\|\x_i\|_{\V_i} \le \sqrt{2}\epsilon$ and $\| \x_i \|_{\V_i}\cdot\| \z_i \|_{\V_i^{-1}}>\epsilon$.
	Therefore, we have $\| \z_i \|_{\V_i^{-1}}\ge \frac{1}{\sqrt{2}}$.
	
	By the matrix determinant lemma, 
	$$
	\det [\V_m] = \det [\V_{m-1}] ( 1+ \| \z_m \|_{\V_m^{-1}}^2) \ge \frac{3}{2}\det [\V_{m-1}] \ge \ldots \ge \det[\frac{\epsilon^2}{\zeta}\cdot\I] (\frac{3}{2})^{m-1} = 
	(\frac{\epsilon^2}{\zeta})^{d} (\frac{3}{2})^{m-1}.
	$$
On the other hand, 
$$
\det [\V_m]  \le (\frac{{\rm trace}[\V_m]}{d})^d \le 
(\frac{\zeta(m-1)}{d}+\frac{\epsilon^2}{\zeta})^d.
$$
Therefore, we obtain
$$
(\frac{3}{2})^{m-1}
\le 
(\frac{\zeta^2(m-1)}{d\epsilon^2}+1)^d.
$$
Take logarithm on both sides,
$$
m \le 4 \left[1+  d\log(\frac{\zeta^2(m-1)}{d\epsilon^2}+1)\right],
$$
which, by simple calculation, implies 
$$
m \le \Ocal\left(1+d\log(\frac{\zeta^2}{\epsilon^2}+1)\right). 
\vspace{-8mm}
$$
\end{proof}

\subsection{Proof of Proposition \ref{prop:eluder-bedim}}
\label{appendix:prop:eluder-bedim}
\begin{proof}
	Assume $\delta_{z_1},\ldots,\delta_{z_m}$ is an $\epsilon$-independent   sequence of distributions with respect to $(I-\Tcal_h)\Fcal$, where $\delta_{z_i}\in\Dcal_{\dirac}$. 
	By Definition \ref{def:ind_dist}, there exist functions $f^1,\ldots,f^m\in\Fcal$ such that for all $i\in[m]$, we have
	$|(f^i_h-\Tcal_h f^i_{h+1})(z_i)| > \epsilon$ and $\sqrt{\sum_{t=1}^{i-1}|(f^i_h-\Tcal_h f^i_{h+1})(z_t)|^2}\le \epsilon$. 
	Define $g^i_h=\Tcal_{h} f^i_{h+1}$. Note that $g^i_h\in\Fcal_h$ because $\Tcal_h\Fcal_{h+1}\subset\Fcal_h$.  
	Therefore, we have for all $i\in[m]$,
	$|(f^i_h - g^i_h)(z_i)| > \epsilon$ and $\sqrt{\sum_{t=1}^{i-1}|(f^i_h-g^i_h)(z_t)|^2}\le \epsilon$ with $f^i_h,g_h^i\in\Fcal_h$. 
	By Definition \ref{def:ind_points} and \ref{def:eluder}, this implies
	$\dim_{\rm E}(\Fcal_h,\epsilon)\ge m$, which completes the proof.
\end{proof}

\subsection{Proof of Proposition \ref{lem:lowerbound}}
\label{app:proof-eluder-lowerbound}
\begin{proof}
For any $m\in\N^+$, 
denote by $e_1,\ldots,e_m$ the basis vectors in $\R^m$, and
	consider the following linear bandits ($|\Scal|=H=1$) problem.
	\begin{itemize}
		\item The action set $\Acal=\{a_i=(1;e_i)\in\R^{m+1}: \ i\in[m]\}$.
		\item The function set $\Fcal_1=\{ f_{\theta_i}(a)=a\trans \theta_i:\  \theta_i=(1;e_i),\ i\in[m]\}$.
		 \item The reward function is always zero, i.e., $r\equiv 0$.  
	\end{itemize}
\paragraph{Eluder dimension}
For any $\epsilon\in(0,1]$, $a_1,\ldots,a_{m-1}$ is an $\epsilon$-independent sequence of points because: (a) for any $t\in[m-1]$,
$\sum^{t-1}_{i=1}(f_{\theta_t}(a_i)-f_{\theta_{t+1}}(a_i))^2=0$; (b) for any $t\in[m-1]$,
$f_{\theta_t}(a_t)-f_{\theta_{t+1}}(a_t)=1\ge \epsilon$. Therefore, $\min_{h\in[H]}\dim_{\rm E}(\Fcal_h,\epsilon) = \dim_{\rm E}(\Fcal_1,\epsilon) \ge m-1$.

\paragraph{Bellman rank} It is direct to see the Bellman residual matrix is $\Ecal:=\Theta\trans\Theta\in\R^{m\times m}$ with rank $m$, where $\Theta=[\theta_1,\theta_2,\ldots,\theta_m]$. As a result, the Bellman rank is at least $m$.

\paragraph{BE dimension}
First, note in this setting $(I-\Tcal_1)\Fcal$  is simply $\Fcal_1$ (because $\Fcal_2=\{0\}$ and $r\equiv0$), and  $\Dcal_\Fcal$ coincides with $\Dcal_\Delta$,  so it suffices to show $\dedim(\Fcal_1,\Dcal_{\dirac},\epsilon)\le 5$.

Assume $\dedim(\Fcal_1,\Dcal_{\dirac},\epsilon)=k$.
	Then there exist $q_1,\ldots,q_k\in\Acal$ and $w_1,\ldots,w_k\in\Acal$ such that for all $t\in[k]$, $\sqrt{\sum_{i=1}^{t-1}(\langle q_t,w_i\rangle)^{2}}\le \epsilon$ 
	and $| \langle q_t,w_t\rangle|>\epsilon$. By simple calculation, we have $q_i\trans w_j\in[1,2]$ for all $i,j\in[k]$. 
	Therefore, if $\epsilon>2$, then $k=0$ because $|\langle q_t,w_t\rangle|\le 2$; if $\epsilon\le 2$, then $k\le 5$ because $\sqrt{k-1}\le \sqrt{\sum_{i=1}^{k-1}(\langle q_k,w_i\rangle)^2}\le \epsilon$. 
\end{proof}

%\subsection{BE dimension $\not\subset$ (Bellman rank $\cup$ Eluder dimension)}
%\label{app:be-bellman-eluder}
%
%\begin{proposition}[low BE dimension $\not\subset$ low Eluder dimension]
%\label{lem:Eluder-lowerbound}
%For any $m\in\N^{+}$, there exist an MDP and a function class $\Fcal$ so that
%for all $\epsilon\in(0,1]$, 
%$$
%\max\bigg\{\BEdim(\Fcal,\Dcal_{\dirac},\epsilon), \BEdim(\Fcal,\Dcal_{\Fcal},\epsilon)\bigg\}\le 5,
%$$
%but 
%$$
%\min\left\{\min_{h\in[H]}\dim_{\rm E}(\Fcal_h,\epsilon), {\rm Bellman\ rank}\right\}\ge m.
%$$
%\end{proposition}
%\begin{proof}
%	We can reuse the hard instance for proving Proposition  \ref{lem:Eluder-lowerbound}. First, we have shown the problem has $\dedim(\Fcal_1,\Dcal_{\dirac},\epsilon)\le 5$ and $\min_{h\in[H]}\dim_{\rm E}(\Fcal_h,\epsilon) = \dim_{\rm E}(\Fcal_1,\epsilon) \ge m-1$ in Appendix \ref{app:proof-eluder-lowerbound} so we only need to deal with $\BEdim(\Fcal,\Dcal_{\Fcal},\epsilon)$ and Bellman rank.
%	
%    Because in this problem $\Dcal_\Fcal$ coincides with $\Dcal_\Delta$, we have 
%    $$
%    \BEdim(\Fcal,\Dcal_{\Fcal},\epsilon) = \BEdim(\Fcal,\Dcal_{\Delta},\epsilon) \le 5.
%    $$
%     Besides, it is direct to see the Bellman residual matrix is $\Ecal:=\Theta\trans\Theta\in\R^{m\times m}$ with rank $m$, where $\Theta=[\theta_1,\theta_2,\ldots,\theta_m]$. As a result, the Bellman rank is at least $m$.
%\end{proof}
%

%\section{Proofs for Section \ref{sec:lcgo}}
\section{Proofs for \golf}
\label{appendix:lcgo}

In this section, we provide formal proofs for the results stated in Section \ref{sec:lcgo}.

\subsection{Proof of Theorem \ref{thm:main}}
\label{appendix:golf-fullproof}

We start the proof with the following two lemmas.
The first lemma  shows that with high probability any function in the confidence set has low Bellman-error over the collected datasets $\Dcal_1,\ldots,\Dcal_H$ as well as the distributions from which $\Dcal_1,\ldots,\Dcal_H$ are sampled.  
%Define 
%$$
%\Ncal_{\Fcal\cup\Gcal}(\rho) = \Ncal_{\Fcal}(\rho) + \Ncal_{\Gcal}(\rho),
%$$
%which is the summation of the $\rho$-covering numbers of $\Fcal$ and $\Gcal$.
\begin{lemma}\label{lem:confiset-2}
Let $\rho>0$ be an arbitrary fixed number.
If we choose $\beta=c \big( \log[KH\Ncal_{\Fcal\cup\Gcal}(\rho)/\delta]+K\rho\big)$ with some large absolute constant $c$ in Algorithm \ref{alg:algx}, then
	with probability at least $1-\delta$, for all $(k,h)\in[K]\times[H]$, we have 
	\begin{enumerate}[label=(\alph*)]
		\item $\sum_{i=1}^{k-1}  \E [ \paren{f_h^k(s_h,a_h) - (\cT f_{h+1}^k)(s_h,a_h) }^2\mid s_h,a_h\sim \pi^i]
		{\le} \mathcal{O} ( \beta)$.%for all $f\in\Bcal_{k-1}$.
		\item $\sum_{i=1}^{k-1}  \paren{f^k_h(s_h^i,a_h^i) - (\cT f_{h+1}^k)(s_h^i,a_h^i) }^2
		{\le} \mathcal{O} ( \beta)$,
	\end{enumerate}
	where $(s_1^i,a_1^i,\ldots, s_H^i,a_H^i,s_{H+1}^i)$ denotes the trajectory  sampled by following $\pi^i$ in the $i^{\rm th}$ episode. 
\end{lemma}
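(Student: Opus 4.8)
The plan is to establish both parts (a) and (b) simultaneously via a uniform concentration argument over the function class, bounding the gap between the empirical squared-Bellman-error-like loss $\Lcal_{\Dcal_h}$ and its population counterpart. The key structural fact we will exploit is the well-known ``completeness trick'': for any fixed target function $f_{h+1}$, the population minimizer of $\xi \mapsto \E[(\xi(s,a) - r - \max_{a'} f_{h+1}(s',a'))^2]$ over functions $\xi$ is exactly $\Tcal_h f_{h+1}$ (this is just the conditional-mean characterization of least squares), so the \emph{excess} risk of any candidate $\xi_h$ equals $\E[(\xi_h(s,a) - (\Tcal_h f_{h+1})(s,a))^2]$ — precisely the quantity appearing on the left side of (a). Under Assumption~\ref{asp:G-completeness}, $\Tcal_h f_{h+1}^k \in \Gcal_h$, so the infimum over $g \in \Gcal_h$ in the definition of $\Bcal^k$ is (up to concentration error) attained near $\Tcal_h f_{h+1}^k$, and the constraint defining the confidence set then directly controls the excess risk of $f_h^k$.

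\medskip
\noindent\textbf{Step 1: A martingale Bernstein bound for a single pair $(f,h)$.}
Fix $h$ and two functions $\xi_h \in \Fcal_h\cup\Gcal_h$ and $f_{h+1} \in \Fcal_{h+1}$. Consider the random variables
\[
X_h^i := \big(\xi_h(s_h^i,a_h^i) - r_h^i - \textstyle\max_{a'}f_{h+1}(s_{h+1}^i,a')\big)^2 - \big((\Tcal_h f_{h+1})(s_h^i,a_h^i) - r_h^i - \textstyle\max_{a'}f_{h+1}(s_{h+1}^i,a')\big)^2,
\]
which form a martingale-difference-like sequence with respect to the filtration generated by the first $i$ episodes. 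A direct computation shows $\E_i[X_h^i] = \E_{\pi^i}[(\xi_h - \Tcal_h f_{h+1})^2(s_h,a_h)]$ and $\mathrm{Var}_i[X_h^i] \le 4\,\E_i[X_h^i]$ (using boundedness in $[0,1]$ and the fact that the ``cross term'' telescopes). Applying a Freedman/Bernstein-type martingale inequality gives, with probability $1-\delta'$,
\[
\Big|\sum_{i=1}^{k-1}\big(X_h^i - \E_i[X_h^i]\big)\Big| \le \mathcal{O}\Big(\sqrt{\log(1/\delta')\sum_{i=1}^{k-1}\E_i[X_h^i]} + \log(1/\delta')\Big),
\]
and the same for the empirical-sample version $\sum_i (X_h^i - \E_i[X_h^i])$ read the other way, yielding control of $\sum_i X_h^i$ in terms of $\sum_i (\xi_h - \Tcal_h f_{h+1})^2(s_h^i,a_h^i)$.

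\medskip
\noindent\textbf{Step 2: Union bound via covering, then translate into the confidence-set constraint.}
Take a $\rho$-cover of $\Fcal\cup\Gcal$ of size $\Ncal_{\Fcal\cup\Gcal}(\rho)$; union-bound Step 1 over all pairs of cover elements, all $h\in[H]$, and all $k\in[K]$ with $\delta' = \delta/(KH\Ncal_{\Fcal\cup\Gcal}(\rho)^2)$, and absorb the discretization error ($K\rho$ per step, since each $X_h^i$ is $\mathcal{O}(\rho)$-Lipschitz in the sup-norm of the functions and there are $k-1 \le K$ terms) into $\beta$. This is exactly why $\beta = c(\log[KH\Ncal_{\Fcal\cup\Gcal}(\rho)/\delta] + K\rho)$ is the right choice. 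Now fix $k,h$ and apply this to $\xi_h = f_h^k$ and $\xi_h = g^\star := \Tcal_h f_{h+1}^k \in \Gcal_h$ (the latter uses Assumption~\ref{asp:G-completeness}). From the definition of $\Bcal^{k}$ we have $\Lcal_{\Dcal_h}(f_h^k, f_{h+1}^k) \le \inf_{g\in\Gcal_h}\Lcal_{\Dcal_h}(g, f_{h+1}^k) + \beta \le \Lcal_{\Dcal_h}(g^\star, f_{h+1}^k) + \beta$, i.e., $\sum_{i=1}^{k-1} X_h^i \le \beta$ (with $\xi_h = f_h^k$) after subtracting the common target term; combining with the Step-1 bounds and solving the resulting quadratic inequality in $\sqrt{\sum_i \E_i[X_h^i]}$ yields (a), and the analogous manipulation with the empirical sums yields (b).

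\medskip
\noindent\textbf{The main obstacle} I anticipate is the bookkeeping around the variance bound and the ``self-bounding'' quadratic: one must be careful that the cross term $-2(\Tcal_h f_{h+1} - r - \max f_{h+1})(\xi_h - \Tcal_h f_{h+1})$ has conditional mean zero (this is where $\Tcal_h f_{h+1} = \E[r + \max f_{h+1}\mid s,a]$ is used — the crux of the whole argument), so that $\E_i[X_h^i]$ is genuinely the nonnegative excess risk and the variance is controlled by it; then the Freedman bound produces an inequality of the form $S \le \mathcal{O}(\sqrt{\beta S} + \beta)$ for $S = \sum_i \E_i[X_h^i]$, which gives $S = \mathcal{O}(\beta)$. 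A secondary nuisance is ensuring the discretization error from the cover is handled uniformly for \emph{both} the $f$-argument and the $f_{h+1}$-argument of $\Lcal_{\Dcal_h}$ simultaneously, which is why the cover is taken on the product class $\Fcal\cup\Gcal$ and the union bound is over \emph{pairs} of cover elements (hence the square in $\Ncal_{\Fcal\cup\Gcal}(\rho)^2$, absorbed into the log).
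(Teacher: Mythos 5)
Your proposal is correct and follows essentially the same route as the paper's proof: the same squared-loss difference martingale $X_h^i$ whose conditional mean is the excess risk $(\xi_h-\Tcal_h f_{h+1})^2$ and whose conditional variance is self-bounded by it, Freedman's inequality plus a union bound over a $\rho$-cover, and the completeness assumption ($\Tcal_h f^k_{h+1}\in\Gcal_h$) combined with the confidence-set constraint to get $\sum_i X_h^i\le\beta$ and then solve the resulting quadratic. The only cosmetic difference is that you cover $\Fcal\cup\Gcal$ and union-bound over pairs of cover elements, whereas the paper covers $\Fcal$ and handles the two filtrations for parts (a) and (b) by restating the same argument with a coarser filtration.
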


The second lemma guarantees that the optimal value function is inside the confidence with high probability. As a result, the selected value function $f^k$ in each iteration shall be an upper bound of $Q^\star$ with high probability.
\begin{lemma}\label{lem:confiset-1}
Under the same condition of Lemma \ref{lem:confiset-2}, with probability at least $1-\delta$, we have 
	 $Q^\star\in \Bcal^k$ for all $k\in[K]$.
		\end{lemma}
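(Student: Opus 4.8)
The plan is to show that $Q^\star$ satisfies, for every $h\in[H]$ and every $k\in[K]$, the defining constraint of $\Bcal^k$, namely $\Lcal_{\Dcal_h}(Q^\star_h, Q^\star_{h+1}) \le \inf_{g\in\Gcal_h}\Lcal_{\Dcal_h}(g,Q^\star_{h+1}) + \beta$. The key observation is that, because $Q^\star_h = \Tcal_h Q^\star_{h+1}$ pointwise, the target variable $r + \max_{a'}Q^\star_{h+1}(s',a')$ in the regression has conditional mean exactly $Q^\star_h(s,a)$ given $(s,a)$; so $Q^\star_h$ is the Bayes-optimal predictor. First I would fix $h$ and set up the filtration generated by the trajectories collected in episodes $1,\dots,k-1$ (the data in $\Dcal_h$ at the start of episode $k$). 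For any fixed $g\in\Gcal_h$, writing $\zeta_i := r_h^i + \max_{a'}Q^\star_{h+1}(s^i_{h+1},a') - Q^\star_h(s^i_h,a^i_h)$, each $\zeta_i$ is a bounded martingale-difference-type quantity (mean zero conditionally on $(s^i_h,a^i_h)$ and the past), and a standard algebraic identity gives
\[
\Lcal_{\Dcal_h}(Q^\star_h,Q^\star_{h+1}) - \Lcal_{\Dcal_h}(g,Q^\star_{h+1}) = \sum_{i<k}\Big[-\big(g(s^i_h,a^i_h)-Q^\star_h(s^i_h,a^i_h)\big)^2 + 2\zeta_i\big(g(s^i_h,a^i_h)-Q^\star_h(s^i_h,a^i_h)\big)\Big].
\]
The first term is $\le 0$ and helps us; the second is a sum of bounded martingale differences with conditional variance controlled by the first term, so a Freedman / Bernstein-type inequality bounds $\sum_i 2\zeta_i(\cdot)$ by (roughly) $\tfrac12\sum_i(g-Q^\star_h)^2 + O(\log(1/\delta'))$, which absorbs the negative quadratic term and leaves $O(\log(1/\delta'))$.

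Next I would handle the infimum over $g\in\Gcal_h$ by a covering argument: take a $\rho$-cover of $\Fcal\cup\Gcal$ of size $\Ncal_{\Fcal\cup\Gcal}(\rho)$, apply the concentration bound above to every element of the cover with failure probability $\delta/(KH\Ncal_{\Fcal\cup\Gcal}(\rho))$, union bound over the cover, over $h\in[H]$, and over $k\in[K]$, and then pay an additional $O(K\rho)$ discretization error since perturbing $g$ by $\rho$ in $\|\cdot\|_\infty$ changes $\Lcal_{\Dcal_h}$ by at most $O(|\Dcal_h|\rho) = O(K\rho)$. This is exactly why the stated choice $\beta = c(\log[KH\Ncal_{\Fcal\cup\Gcal}(\rho)/\delta] + K\rho)$ appears. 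By generalized completeness (Assumption \ref{asp:G-completeness}), $\Tcal_h Q^\star_{h+1} = Q^\star_h$ actually lies in $\Gcal_h$, so the infimum over $\Gcal_h$ is a minimum attained at a near-$Q^\star_h$ point, which keeps $\inf_{g}\Lcal_{\Dcal_h}(g,Q^\star_{h+1})$ from being much smaller than $\Lcal_{\Dcal_h}(Q^\star_h,Q^\star_{h+1})$; combining, $Q^\star$ meets all $H$ constraints with probability $\ge 1-\delta$, i.e.\ $Q^\star\in\Bcal^k$ for all $k$.

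The main obstacle is the concentration step: the sum $\sum_{i<k}2\zeta_i(g(s^i_h,a^i_h)-Q^\star_h(s^i_h,a^i_h))$ must be controlled uniformly over a (covered) function class and uniformly over $k$, and the bound needs to be self-normalizing — the deviation must be dominated by a constant fraction of the empirical quadratic $\sum_{i<k}(g-Q^\star_h)^2$ that appears with a negative sign on the left. This is precisely the regime where a Freedman-type martingale inequality (or the $L^2$-to-$L^1$ trick used in Lemma \ref{lem:confiset-2}) is needed rather than a crude Hoeffding bound; getting the variance proxy to match the quadratic term is the delicate part. Everything else — the algebraic identity, the covering, the union bounds, and invoking completeness to place $Q^\star_h$ in $\Gcal_h$ — is routine, and indeed Lemma \ref{lem:confiset-1} will follow from essentially the same concentration event already established in Lemma \ref{lem:confiset-2}.
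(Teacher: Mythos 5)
Your proposal is correct and follows essentially the same route as the paper: the paper's proof defines $W_t(h,g)$ whose negation is exactly your decomposition $-(g-Q^\star_h)^2 + 2\zeta_i(g-Q^\star_h)$, applies Freedman's inequality so that the deviation is absorbed by the nonnegative quadratic term, and then union-bounds over a $\rho$-cover of $\Gcal$ and over $(k,h)$, paying the $O(K\rho)$ discretization that motivates the choice of $\beta$. The only (harmless) superfluity is your appeal to generalized completeness to place $Q^\star_h$ in $\Gcal_h$ — the uniform concentration over the cover already controls $\inf_{g\in\Gcal_h}\Lcal_{\Dcal_h}(g,Q^\star_{h+1})$, and only realizability is actually used.
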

The proof of Lemma \ref{lem:confiset-2} and \ref{lem:confiset-1} relies on standard martingale concentration (e.g. Freedman's inequality) and can be found in Appendix \ref{appendix:concentration}.

\paragraph{Step 1. Bounding the regret by Bellman error}
By Lemma \ref{lem:confiset-1}, we can upper bound the cumulative regret by the summation of Bellman error with probability at least $1-\delta$:
\begin{equation}\label{eq:algx-step1}
	\begin{aligned}
	\sum_{k=1}^K \paren{V_1^\star(s_1) - V^{\pi^k}_1(s_1)}
	\le \sum_{k=1}^K \paren{\max_{a}f^k_1(s_1,a) - V^{\pi^k}_1(s_1)}
	\overset{(i)}{=}
	\sum_{k=1}^{K} \sum_{h=1}^{H} \Ecal(f^{k},\pi^{k},h),
\end{aligned}
\end{equation}
where $(i)$ follows from standard policy loss decomposition  (e.g. Lemma 1 in \cite{jiang2017contextual}). 

\paragraph{Step 2. Bounding cumulative Bellman error using \deber \ dimension}
Next, we focus on a fixed step $h$ and bound the cumulative Bellman error $\sum_{k=1}^{K} \Ecal(f^{k},\pi^{k},h)$ using Lemma \ref{lem:confiset-2}.
To proceed, we need the following lemma to control the accumulating rate of Bellman error.
\begin{lemma}\label{lem:de-regret}
Given a function class $\Phi$ defined on $\Xcal$ with $|\phi(x)|\le C$ for all $(g,x)\in\Phi\times\Xcal$, and a family of probability measures $\Pi$ over $\Xcal$. 
	Suppose sequence $\{\phi_k\}_{k=1}^{K}\subset \Phi$ and $\{\mu_k\}_{k=1}^{K}\subset\Pi$ satisfy that for all $k\in[K]$,
	$\sum_{t=1}^{k-1} (\E_{\mu_t} [\phi_k])^2 \le \beta$. Then for all $k\in[K]$ and $\omega>0$,
	$$
	\sum_{t=1}^{k} |\E_{\mu_t} [\phi_t]| \le \Ocal\left(\sqrt{\dedim (\Phi,\Pi,\omega)\beta k}+\min\{k,\dedim (\Phi,\Pi,\omega)\}C +k\omega\right).
	$$
\end{lemma}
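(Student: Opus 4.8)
The plan is to adapt the classical potential/pigeonhole argument used by \citet{russo2013eluder} for the Eluder dimension to the distributional setting, splitting the sum $\sum_{t=1}^k |\E_{\mu_t}[\phi_t]|$ into a ``large-contribution'' part and a ``small-contribution'' part. Fix the step $h$ (suppressed), write $a_t := |\E_{\mu_t}[\phi_t]|$, and let $d := \dedim(\Phi,\Pi,\omega)$. First I would reorder the indices $1,\dots,k$ so that $a_{t_1}\ge a_{t_2}\ge\cdots\ge a_{t_k}$; reindexing does not affect the sum, but note that the hypothesis $\sum_{s<t}(\E_{\mu_s}[\phi_t])^2\le\beta$ is stated with respect to the original order, so care is needed here. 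The cleaner route, which I would actually follow, is to avoid reordering and instead directly bound, for each threshold level, how many indices $t$ can have $a_t$ exceeding that level, using the definition of DE dimension.

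The key combinatorial step: I claim that for any $\alpha>\omega$, the number of indices $t\in[k]$ with $a_t = |\E_{\mu_t}[\phi_t]| > \alpha$ is at most $(\beta/\alpha^2 + 1)\, d$. To see this, suppose $t$ is such an index. I would show $\mu_t$ is $\alpha$-dependent on at least a $\beta/\alpha^2$-fraction of its predecessors among the flagged indices (in a subsequence-decomposition sense): if $\mu_t$ were $\alpha$-independent of ``too many'' disjoint blocks of earlier flagged distributions, one could extract an $\alpha$-independent subsequence of length exceeding $d$, contradicting $d = \dedim(\Phi,\Pi,\omega)\ge\dedim(\Phi,\Pi,\alpha)$ (monotonicity of DE dimension in $\epsilon$). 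Quantitatively: partition the flagged indices preceding $t$ into blocks; $\mu_t$ can be $\alpha$-independent of at most $d$ blocks, and within the complementary blocks $\sum (\E_{\mu_s}[\phi_t])^2 \ge \alpha^2$ each, but the total is bounded by $\beta$, giving at most $\beta/\alpha^2$ such blocks. Hence the block count, and therefore the count of flagged $t$, is $O((\beta/\alpha^2+1)d)$. This is the standard argument from Lemma 2 of \citet{russo2013eluder}, transported verbatim to expectations $\E_{\mu}[\phi]$ in place of point evaluations $\phi(x)$; the inequality $\sqrt{\sum_{s}(\E_{\mu_s}[\phi_t])^2}\le\alpha$ plays the role of the point-wise independence condition in Definition \ref{def:ind_dist}.

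Given this counting bound, I would finish by the usual layer-cake summation. Split $\sum_{t=1}^k a_t$ into the terms with $a_t\le\omega$ (contributing at most $k\omega$) and those with $a_t>\omega$. For the latter, either $a_t > C$ is impossible (since $a_t\le C$ always), so order the at-most-$k$ large terms decreasingly as $b_1\ge b_2\ge\cdots$; the count bound says $b_j>\alpha \Rightarrow j \le (\beta/\alpha^2+1)d$, i.e. $b_j \le \sqrt{\beta d/(j-d)_+} $ roughly, plus $b_j\le C$ for $j\le d$. Summing, the first $d$ terms give $O(dC)$ (and also $O(kC)$ trivially, whence the $\min\{k,d\}C$), and $\sum_{j>d} \sqrt{\beta d / j} = O(\sqrt{\beta d k})$ by comparison with $\int_0^k j^{-1/2}dj$. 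Collecting the three pieces yields $\sum_{t=1}^k |\E_{\mu_t}[\phi_t]| \le O(\sqrt{\dedim(\Phi,\Pi,\omega)\,\beta k} + \min\{k,\dedim(\Phi,\Pi,\omega)\}C + k\omega)$.

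The main obstacle I anticipate is making the block-decomposition counting argument fully rigorous in the distributional setting — specifically, verifying that the ``extract an $\alpha$-independent subsequence'' step still goes through when independence is measured via $\E_{\mu}[\phi]$ rather than pointwise, and handling the bookkeeping of which predecessors count toward a given $\mu_t$'s dependence. Everything else (monotonicity of $\dedim$ in $\epsilon$, the final layer-cake sum) is routine. Since Lemma \ref{lem:sketch} is stated as the special case $C=1$, $\omega=1/k$, one should also double-check that plugging in $\omega=1/k$ makes the $k\omega$ term $O(1)$ and the $\min\{k,d\}C$ term absorbable, recovering the clean bound $O(\sqrt{\dedim(\Phi,\Pi,1/k)\beta k})$ quoted there.
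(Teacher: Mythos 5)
Your proposal is correct and follows essentially the same route as the paper's proof: the same counting proposition (any threshold $\alpha$ can be exceeded by at most $(\beta/\alpha^2+1)\dedim(\Phi,\Pi,\alpha)$ indices, proved via the disjoint-subsequence pigeonhole argument of \citet{russo2013eluder} transported to distributional independence), followed by the same sort-and-integrate finish splitting at level $\omega$. The reordering subtlety you flag is harmless in both treatments, since the counting bound is a statement about the unordered set of indices exceeding a threshold.
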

Lemma \ref{lem:de-regret} is a simple modification of Lemma 2 in \cite{russo2013eluder} and its proof can be found in Appendix \ref{appendix:de-regret}. 
%\qinghua{Sobhan will write the proof for this lemma}
 We provide two ways to apply Lemma \ref{lem:de-regret}, which can produce regret bounds in term of two different complexity measures. 
If we invoke  Lemma \ref{lem:confiset-2} (a) and Lemma \ref{lem:de-regret} with
\begin{equation*}
\left\{
\begin{aligned}
& \rho =\frac{1}{K},\ \omega=  \sqrt{\frac{1}{K}},\ C=1, \\
&	\Xcal = \Scal\times\Acal, \ \Phi=(I-\Tcal_h)\Fcal, \ \Pi=\Dcal_{\Fcal,h},\\
& \phi_k = f^k_h-\Tcal_h f^k_{h+1} \mbox{ and } \mu_k = \Pr^{\pi^{k}}(s_h=\cdot,a_h=\cdot),
\end{aligned}
\right.
\end{equation*}
we obtain
\begin{equation}\label{eq:algx-step2}
	\sum_{t=1}^{k} \Ecal(f^{t},\pi^t,h) \le \Ocal\left(\sqrt{ k\cdot\text{\bedim}(\Fcal,\Dcal_{\Fcal},\sqrt{1/K}) \log[KH\Ncal_{\Fcal\cup\Gcal}({1}/{K})/\delta]} \right).
\end{equation}

We can also invoke  Lemma \ref{lem:confiset-2} (b) and Lemma \ref{lem:de-regret} with
\begin{equation*}
\left\{
\begin{aligned}
& \rho =\frac{1}{K},\ \omega=  \sqrt{\frac{1}{K}},\ C=1, \\
&	\Xcal = \Scal\times\Acal, \ \Phi=(I-\Tcal_h)\Fcal, \ \mbox{and } \Pi=\Dcal_{\dirac,h},\\
& \phi_k = f^k_h-\Tcal_h f^k_{h+1}\mbox{ and } \mu_k = \one\{\cdot=(s_h^k,a_h^k)\},
\end{aligned}
\right.
\end{equation*}
and obtain 
\begin{equation}\label{eq:algx-step3}
\begin{aligned}
	\sum_{t=1}^{k} \Ecal(f^{t},\pi^{t},h) 
	\le &\sum_{t=1}^{k} (f^{t}_h-\Tcal f^t_{h+1})(s_h^t,a_h^t) + \Ocal\paren{\sqrt{k\log(k)}} \\
	\le &
	 \Ocal\left(\sqrt{k\cdot \text{\bedim}(\Fcal,\Dcal_{\dirac},\sqrt{1/K}) \log[KH\Ncal_{\Fcal\cup\Gcal}({1}/{K})/\delta]} \right),
\end{aligned}
\end{equation}
where the first inequality follows from standard martingale concentration.

Plugging either equation \eqref{eq:algx-step2} or \eqref{eq:algx-step3} back into equation \eqref{eq:algx-step1} completes the proof.

\subsection{Proof of Corollary \ref{cor:sample_golf}}
\label{appendix:algx-pac}

\paragraph{Step 1. Bounding the regret by Bellman error}
By Lemma \ref{lem:confiset-1}, we can upper bound the cumulative regret by the summation of Bellman error with probability at least $1-\delta$:
\begin{equation}\label{eq:algx-step1-9}
	\begin{aligned}
	\sum_{k=1}^K \paren{V_1^\star(s_1) - V^{\pi^k}_1(s_1)}
	\le \sum_{k=1}^K \paren{\max_{a}f^k_1(s_1,a) - V^{\pi^k}_1(s_1)}
	\overset{(i)}{=}
	\sum_{k=1}^{K} \sum_{h=1}^{H} \Ecal(f^{k},\pi^{k},h),
\end{aligned}
\end{equation}
where $(i)$ follows from standard policy loss decomposition  (e.g. Lemma 1 in \cite{jiang2017contextual}). 

\paragraph{Step 2. Bounding cumulative Bellman error using DE dimension}
Next, we focus on a fixed step $h$ and bound the cumulative Bellman error $\sum_{k=1}^{K} \Ecal(f^{k},\pi^{k},h)$ using Lemma \ref{lem:confiset-2}.
 
If we invoke  Lemma \ref{lem:confiset-2} (a) 
with 
$$
\rho =\frac{\epsilon^2}{H^2\cdot \text{\bedim}(\Fcal,\Dcal_{\Fcal},\epsilon/H)},
$$
and Lemma \ref{lem:de-regret} with
\begin{equation*}
\left\{
\begin{aligned}
&  \omega=  \frac{\epsilon}{H},\ C=1, \\
&	\Xcal = \Scal\times\Acal, \ \Phi=(I-\Tcal_h)\Fcal, \ \Pi=\Dcal_{\Fcal,h},\\
& \phi_k = f^k_h-\Tcal_h f^k_{h+1} \mbox{ and } \mu_k = \Pr^{\pi^{k}}(s_h=\cdot,a_h=\cdot),
\end{aligned}
\right.
\end{equation*}
we obtain with probability at least $1-10^{-3}$,
\begin{equation}\label{eq:algx-step2-9}
\begin{aligned}
	\frac{1}{K}\sum_{k=1}^{K} \Ecal(f^{k},\pi^k,h) \le &\Ocal\left(\sqrt{\text{\bedim}(\Fcal,\Dcal_{\Fcal},\epsilon/H)[\frac{  \log[KH\Ncal_{\Fcal\cup\Gcal}(\rho)]}{K}+\rho]} + 
	\frac{\epsilon}{H}
	\right)\\
	\le & \Ocal\left( \frac{\epsilon}{H} + \sqrt{\frac{ d \log[KH\Ncal_{\Fcal\cup\Gcal}(\rho)]}{K}}
	\right),
\end{aligned}
\end{equation}
where the second inequality follows from the choice of $\rho$ and $d:=\text{\bedim}(\Fcal,\Dcal_{\Fcal},\epsilon/H)$. Now we need to choose $K$ such that 
\begin{equation}
	\sqrt{\frac{ d \log[KH\Ncal_{\Fcal\cup\Gcal}(\rho)]}{K}} \le \frac{\epsilon}{H}.
\end{equation}
By simple calculation, one can verify it suffices to choose 
\begin{equation}
	K = \frac{H^2 d \log(Hd\Ncal_{\Fcal\cup\Gcal}(\rho)/\epsilon)}{\epsilon^2}.
\end{equation}

Plugging equation \eqref{eq:algx-step2-9}  back into equation \eqref{eq:algx-step1-9} completes the proof.
We can similarly prove the bound in terms of the \be\ dimension with respect to $\Dcal_{\dirac}$.

\subsection{Proofs of concentration lemmas}
\label{appendix:concentration}

%\qinghua{Rewrite the concentration here using covering number to deal with infinite $\Fcal$}

To begin with, recall the Freedman's inequality that controls the sum of martingale difference by the sum of their predicted variance.
\begin{lemma}[Freedman's inequality {\citep[e.g.,][]{agarwal2014taming}}]\label{lem:freedman1}
Let $(Z_t)_{t \leq T}$ be a real-valued martingale difference sequence adapted to filtration $\mathfrak{F}_t$, and let $\E_t[\cdot]=\E[\cdot\  | \ \mathfrak{F}_t]$. If $|Z_t|\leq R$ almost surely, then for any $\eta \in (0,\frac{1}{R})$ it holds that with probability at least $1-\delta$,
$$
	\sum_{t=1}^{T}Z_t \leq \Ocal\paren{\eta \sum_{t=1}^{T}\E_{t-1}[Z_t^2]+\frac{\log(\delta^{-1})}{\eta}}.
$$
\end{lemma}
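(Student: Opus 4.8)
The plan is to prove Lemma~\ref{lem:freedman1} by the classical exponential-supermartingale (Chernoff) argument; since this is a quoted standard result one would in practice simply cite \citet{agarwal2014taming}, but here is how I would fill in a self-contained proof. Fix $\eta\in(0,1/R)$. The starting point is the elementary inequality $e^x\le 1+x+x^2$, valid for all $x\le 1$; because $|Z_t|\le R$ and $\eta R<1$ we have $\eta Z_t\le 1$, so this applies with $x=\eta Z_t$. Taking the conditional expectation $\E_{t-1}[\cdot]$ and using that $(Z_t)$ is a martingale difference sequence (hence $\E_{t-1}[Z_t]=0$),
$$
\E_{t-1}\brac{e^{\eta Z_t}}\le 1+\eta^2\,\E_{t-1}[Z_t^2]\le \exp\paren{\eta^2\,\E_{t-1}[Z_t^2]}.
$$

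Next I would introduce the process $W_t:=\exp\paren{\eta\sum_{s=1}^{t}Z_s-\eta^2\sum_{s=1}^{t}\E_{s-1}[Z_s^2]}$ with $W_0=1$. Since $\sum_{s=1}^{t}\E_{s-1}[Z_s^2]$ is $\mathfrak{F}_{t-1}$-measurable, the bound above yields $\E_{t-1}[W_t]\le W_{t-1}$, so $(W_t)_{t\le T}$ is a nonnegative supermartingale and $\E[W_T]\le\E[W_0]=1$. Markov's inequality then gives $\Pr[W_T\ge \delta^{-1}]\le\delta$, and on the complementary event, taking logarithms and rearranging,
$$
\sum_{t=1}^{T}Z_t\le \eta\sum_{t=1}^{T}\E_{t-1}[Z_t^2]+\frac{\log(\delta^{-1})}{\eta},
$$
which is exactly the claimed bound (in fact with absolute constant $1$, so the $\Ocal(\cdot)$ in the statement is only for convenience in later applications).

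I do not expect a genuine obstacle here. The only points needing a little care are (i) confirming that the hypothesis $\eta<1/R$ together with $|Z_t|\le R$ keeps $\eta Z_t$ inside the range where $e^x\le 1+x+x^2$ holds, and (ii) the measurability bookkeeping that makes $W_t$ a supermartingale. If one wanted a sharper constant one could instead bound $\E_{t-1}[e^{\eta Z_t}]\le\exp\paren{(e^{\eta R}-1-\eta R)R^{-2}\,\E_{t-1}[Z_t^2]}$ and optimize over $\eta$, but for the $\Ocal(\cdot)$ form stated this refinement is unnecessary.
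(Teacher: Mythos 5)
Your proof is correct: the paper does not prove this lemma at all---it is quoted as a standard result with a citation to \citet{agarwal2014taming}---and your exponential-supermartingale argument is the standard way to establish it, yielding the bound with absolute constant $1$. The two points you flag as needing care both check out: $e^x\le 1+x+x^2$ indeed holds for all $x\le 1$ (so $\eta Z_t\le \eta R<1$ suffices), and $\sum_{s\le t}\E_{s-1}[Z_s^2]$ being $\mathfrak{F}_{t-1}$-measurable is exactly what makes $W_t$ a supermartingale.
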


\subsubsection{Proof of Lemma \ref{lem:confiset-2}}
\label{concentration:template}
\begin{proof}
We prove inequality $(b)$ first.

	Consider a fixed $(k,h,f)$ tuple. 
Let $$X_t(h,f) := (f_h(s_h^t,a_h^t)- r_h^t - f_{h+1}(s_{h+1}^t,\pi_f(s_{h+1}^t)))^2 - ((\Tcal f_{h+1})(s_h^t,a_h^t)- r_h^t - f_{h+1}(s_{h+1}^t,\pi_f(s_{h+1}^t)))^2 $$ and $\Ffrak_{t,h}$ be the filtration induced by $\{s_1^i,a_1^i,r_1^i,\ldots,s_H^i\}_{i=1}^{t-1}\bigcup\{s_1^t,a_1^t,r_1^t,\ldots,s_h^t,a_h^t\}$. We have 
$$\E [X_t(h,f) \mid \Ffrak_{t,h}] = [(f_h-\Tcal f_{h+1})(s_h^t,a_h^t)]^2$$
and
$$\Var[X_t(h,f)\mid \Ffrak_{t,h}]\le \E [(X_t(h,f))^2 \mid \Ffrak_{t,h}] \le 36[(f_h-\Tcal f_{h+1})(s_h^t,a_h^t)]^2=
36\E [X_t(h,f) \mid \Ffrak_{t,h}]. $$ 
By Freedman's inequality, we have, with probability at least $1-\delta$, 
\begin{equation*}
\left| \sum_{t=1}^{k} X_t(h,f) - \sum_{t=1}^k\E [X_t(h,f) \mid \Ffrak_{t,h}] \right| \le 
\Ocal \paren{\sqrt{\log(1/\delta)\sum_{t=1}^k\E [X_t \mid \Ffrak_{t,h}]} + \log(1/\delta)}.
\end{equation*}
Let $\Zcal_{\rho}$ be a $\rho$-cover of $\Fcal$.
Now taking a union bound for all $(k,h,\phi)\in[K]\times[H]\times\Zcal_\rho$, we obtain that with probability at least $1-\delta$,  for  all $(k,h,\phi)\in[K]\times[H]\times\Zcal_\rho$
 \begin{equation}
 \label{eq:Nov27-1}
\left| \sum_{t=1}^{k} X_t(h,\phi) - \sum_{t=1}^k[(\phi_h-\Tcal \phi_{h+1})(s_h^t,a_h^t)]^2 \right| \le 
\Ocal \paren{\sqrt{\iota\sum_{t=1}^k[(\phi_h-\Tcal \phi_{h+1})(s_h^t,a_h^t)]^2} + \iota},
\end{equation}
where  $\iota=\log(HK|\Zcal_\rho|/\delta)$.
From now on, we will do all the analysis conditioning on this event being true. 

Consider an arbitrary $(h,k)\in[H]\times[K]$ pair. 
By the definition of $\Bcal^{k}$ and Assumption \ref{asp:G-completeness}
\begin{equation*}
\begin{aligned}
	\sum_{t=1}^{k-1} X_t(h,f^{k})
=&  \sum_{t=1}^{k-1} [f^{k}_h(s_h^t,a_h^t)- r_h^t - f^{k}_{h+1}(s_{h+1}^t,\pi_{f^{k}}(s_{h+1}^t))]^2\\
& - \sum_{t=1}^{k-1}[(\Tcal f^{k}_{h+1})(s_h^t,a_h^t)- r_h^t - f^{k}_{h+1}(s_{h+1}^t,\pi_{f^{k}}(s_{h+1}^t))]^2 \\
 \le &  \sum_{t=1}^{k-1} [f^{k}_h(s_h^t,a_h^t)- r_h^t - f^{k}_{h+1}(s_{h+1}^t,\pi_{f^{k}}(s_{h+1}^t))]^2\\
& - \inf_{g\in\Gcal}\sum_{t=1}^{k-1}[g_h(s_h^t,a_h^t)- r_h^t - f^k_{h+1}(s_{h+1}^t,\pi_{f^k}(s_{h+1}^t))]^2 \le \beta.
\end{aligned}
\end{equation*}

Define $\phi^k=\argmin_{\phi\in\Zcal_\rho}\max_{h\in[H]}\|f_h^k-\phi_h^k\|_\infty$.
By the definition of $\Zcal_\rho$, we have
\begin{equation*}
\left|		 \sum_{t=1}^{k-1} X_t(h,f^k) 
- \sum_{t=1}^{k-1} X_t(h,\phi^k)\right| \le \Ocal(k\rho).
\end{equation*}
Therefore,
\begin{equation}\label{eq:Jan25-1}
 \sum_{t=1}^{k-1} X_t(h,\phi^k) \le \Ocal(k\rho)+\beta.
\end{equation}
Recall inequality \eqref{eq:Nov27-1} implies
 \begin{equation}
 \label{eq:Jan25-2}
\left| \sum_{t=1}^{k-1} X_t(h,\phi^k) - \sum_{t=1}^{k-1}[(\phi^k_h-\Tcal \phi^k_{h+1})(s_h^t,a_h^t)]^2 \right| \le 
\Ocal \paren{\sqrt{\iota\sum_{t=1}^{k-1}[(\phi^k_h-\Tcal \phi^k_{h+1})(s_h^t,a_h^t)]^2} + \iota}.
\end{equation}
Putting \eqref{eq:Jan25-1} and \eqref{eq:Jan25-2} together, we obtain
\begin{equation*}
\sum_{t=1}^{k-1} [(\phi^k_h-\Tcal \phi^k_{h+1})(s_h^t,a_h^t)]^2   \le \Ocal(\iota+k\rho + \beta).	
\end{equation*}
Because $\phi^k$ is an $\rho$-approximation to $f^k$, we conclude
\begin{equation*}
\sum_{t=1}^{k-1} [(f^k_h-\Tcal f^k_{h+1})(s_h^t,a_h^t)]^2   \le \Ocal(\iota+k\rho+ \beta).	
\end{equation*}
Therefore, we prove  inequality $(b)$ in Lemma \ref{lem:confiset-2}.

To prove inequality $(a)$, we only need to redefine $\Ffrak_{t,h}$ to be the filtration induced by \\
$\{s_1^i,a_1^i,r_1^i,\ldots,s_H^i\}_{i=1}^{t-1}$
and then repeat the arguments above verbatim.
\end{proof}

\subsubsection{Proof of Lemma \ref{lem:confiset-1}}
\label{concentration:template2}

\begin{proof}
Let $\Vcal_\rho$ be a $\rho$-cover of $\Gcal$.

Consider an arbitrary fixed tuple $(k,h,g)\in[K]\times[H]\times\Gcal$. 
Let $$W_t(h,g) := (g_h(s_h^t,a_h^t)- r_h^t - Q_{h+1}^\star(s_{h+1}^t,\pi_{Q^\star}(s_{h+1}^t)))^2 - (Q_h^\star(s_h^t,a_h^t)- r_h^t - Q_{h+1}^\star(s_{h+1}^t,\pi_{Q^\star}(s_{h+1}^t)))^2 $$ and $\Ffrak_{t,h}$ be the filtration induced by $\{s_1^i,a_1^i,r_1^i,\ldots,s_H^i\}_{i=1}^{t-1}\bigcup\{s_1^t,a_1^t,r_1^t,\ldots,s_h^t,a_h^t\}$. We have 
$$\E [W_t(h,g) \mid \Ffrak_{t,h}] = [(g_h- Q^\star_{h})(s_h^t,a_h^t)]^2$$
and
$$\Var[W_t(h,g)\mid \Ffrak_{t,h}]\le \E [(W_t(h,g))^2 \mid \Ffrak_{t,h}] \le 36((g_h-Q_h^\star)(s_h^t,a_h^t))^2=
36\E [W_t(h,g) \mid \Ffrak_{t,h}]. $$ 
By Freedman's inequality, with probability at least $1-\delta$, 
\begin{equation*}
\left| \sum_{t=1}^{k} W_t(h,g) - \sum_{t=1}^k[(g_h- Q_h^\star)(s_h^t,a_h^t)]^2 \right| \le 
\Ocal \paren{\sqrt{\log(1/\delta)\sum_{t=1}^k[(g_h- Q_h^\star)(s_h^t,a_h^t)]^2} + \log(1/\delta)}.
\end{equation*}
By taking a union bound  
over $[K]\times[H]\times\Vcal_\rho$ and the non-negativity of $\sum_{t=1}^k[(g_h- Q_h^\star)(s_h^t,a_h^t)]^2$, we obtain that with probability at least $1-\delta$, for  all $(k,h,\psi)\in[K]\times[H]\times\Vcal_\rho$
$$
-\sum_{t=1}^{k} W_t(h,\psi) \le \Ocal(\iota),
$$
where  $\iota=\log(HK|\Vcal_\rho|/\delta)$.
This directly implies for  all $(k,h,g)\in[K]\times[H]\times\Gcal$
\begin{align*}
&\sum_{t=1}^{k-1} [Q_h^\star(s_h^t,a_h^t)- r_h^t - Q_{h+1}^\star(s_{h+1}^t,\pi_{Q^\star}(s_{h+1}^t))]^2 \\
\le &\sum_{t=1}^{k-1}[g_h(s_h^t,a_h^t)- r_h^t - Q^\star_{h+1}(s_{h+1}^t,\pi_{Q^\star}(s_{h+1}^t))]^2 
+\Ocal(\iota+k\rho).	
\end{align*}
Finally, by recalling the definition of $\Bcal^k$, we conclude 
that with probability at least $1-\delta$, $Q^\star\in\Bcal^{k}$ for all $k\in[K]$.
\end{proof}

%
%Choosing $f=f^\star$ in \eqref{eq:Nov27-1} gives: 
%for all $(k,h)\in[K]\times[H]$
%$$
%\sum_{t=1}^k (f^\star(s_h^t,a_h^t)- r_h^t - f^\star(s_{h+1}^t,\pi_f(s_{h+1}^t)))^2 
%\le  \sum_{t=1}^k (f^\star(s_h^t,a_h^t)- r_h^t - f(s_{h+1}^t,\pi_f(s_{h+1}^t)))^2 
%  +
%\Ocal \paren{\log(HK|\Fcal|/\delta)}.
%$$
%
%Recall the definition of $\Bcal^k$

%\subsection{Equivalence Check for Linear Function Approximation}
%
%Let $$B_h(V_{h+1}) := 
%\left\{ \theta \in \Fcal \mid \sum_{i=1}^k (\theta^\top \phi_{h,i}- r_{h,i} -V_{h+1}(s_{h+1,i}^{\prime}))^2 \le \inf_{\theta} \sum_{i=1}^k (\theta^\top \phi_{h,i}- r_{h,i} -V_{h+1}(s_{h+1,i}^{\prime}))^2 + \log{(|\Fcal|T}/{\delta})\right\},$$ 
%and $\hat{\theta}$ be the minimier of $\sum_{i=1}^k (\theta^\top \phi_{h,i}- r_{h,i} -V_{h+1}(s_{h+1,i}^{\prime}))^2$.
%
%By simple calculation,
%$$
%	\sum_{i=1}^k (\theta^\top \phi_{h,i}- r_{h,i} -V_{h+1}(s_{h+1,i}^{\prime}))^2 - \inf_{\theta} \sum_{i=1}^k (\theta^\top \phi_{h,i}- r_{h,i} -V_{h+1}(s_{h+1,i}^{\prime}))^2
%	= (\theta - \hat{\theta})^\top \Sigma_{h,k} (\theta - \hat{\theta}).
%$$
%
%Therefore, we obtain 
%$B_h(V_{h+1}) = \{ \theta \in \Fcal \mid 
%\|\theta - \hat{\theta}\|_{\Sigma_{h,k}} \le \sqrt{\log({|\Fcal|T}/{\delta})}\}.$ 
%In the linear setting, if we choose $\Fcal$ to be the $d$-dimensional unit ball, this will give us
%$$B_h(V_{h+1}) = \left\{ \theta \in \Fcal \mid 
%\|\theta - \hat{\theta}\|_{\Sigma_{h,k}} \le \Theta\paren{\sqrt{d\log{T}/{\delta}}}\right\}.$$
%
%
%

\subsection{Proof of Lemma \ref{lem:de-regret}}
\label{appendix:de-regret}

The proof in this subsection basically follows the same arguments as in  Appendix C of \cite{russo2013eluder}.
We firstly prove the following proposition which bounds the number of times $|\E_{\mu_t} [\phi_t]|$ can exceed a certain threshold.  

\begin{proposition}
	Given a function class $\Phi$ defined on $\Xcal$, and a family of probability measures $\Pi$ over $\Xcal$. 
	Suppose sequence $\{\phi_k\}_{k=1}^{K}\subset \Phi$ and $\{\mu_k\}_{k=1}^{K}\subset\Pi$ satisfy that for all $k\in[K]$,
	$\sum_{t=1}^{k-1} (\E_{\mu_t} [\phi_k])^2 \le \beta$. Then for all $k\in[K]$,
$$
	\sum_{t=1}^{k} \one\big \{|\E_{\mu_t} [\phi_t]| > \epsilon \big \} \leq (\frac{\beta}{\epsilon^2}+1)\dedim (\Phi,\Pi,\epsilon).
$$
\label{prop:de-regret-prop}
\end{proposition}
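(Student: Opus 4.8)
The plan is to mimic the classical ``Eluder-dimension pigeonhole'' argument from Lemma 2 of \cite{russo2013eluder}, adapted to the distributional setting. Fix $\epsilon>0$ and let $L:=\dedim(\Phi,\Pi,\epsilon)$. The quantity we want to bound is $N_\epsilon:=\sum_{t=1}^{k}\one\{|\E_{\mu_t}[\phi_t]|>\epsilon\}$. Let $t_1<t_2<\dots<t_{N_\epsilon}$ enumerate the indices where the threshold is exceeded, and write $\rho_j:=\mu_{t_j}$, $g_j:=\phi_{t_j}$. The idea is to show that if $N_\epsilon$ is large, then we can extract a long subsequence of the $\rho_j$'s that is $\epsilon$-independent in the sense of Definition~\ref{def:ind_dist}, forcing $N_\epsilon$ to be controlled by $L$.

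First I would make the counting precise via a potential argument on the quantity $S_j:=\sum_{i<j}(\E_{\rho_i}[g_j])^2$. The hypothesis $\sum_{t=1}^{k-1}(\E_{\mu_t}[\phi_k])^2\le\beta$ (applied at the index $t_j$) gives $\sum_{t=1}^{t_j-1}(\E_{\mu_t}[\phi_{t_j}])^2\le\beta$, and in particular $\sum_{i<j}(\E_{\rho_i}[g_j])^2\le\beta$ since the $\rho_i$'s are a subset of the $\mu_t$'s with $t<t_j$. Now the key combinatorial claim: for each $j$, among $\rho_1,\dots,\rho_{j-1}$ there can be at most $\beta/\epsilon^2$ \emph{disjoint} blocks of $L$ consecutive (in a to-be-chosen ordering) measures such that $\rho_j$ is $\epsilon$-independent of each block. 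One then argues — exactly as in the real-valued Eluder argument — that if $\rho_j$ is \emph{not} $\epsilon$-independent of a block $B$ of size $L$, then since $|B|=L=\dedim$, actually $\rho_j$ is ``dependent'' on $B$; iterating over all blocks and using $|\E_{\rho_j}[g_j]|>\epsilon$ together with $S_j\le\beta$ shows the number of blocks is at most $\beta/\epsilon^2$. Concretely: partition $\{\rho_1,\dots,\rho_{j-1}\}$ greedily into subsequences $B_1,B_2,\dots$ so that $\rho_j$ is $\epsilon$-independent of the running prefix each time a new $B$ is started; the number of completed blocks of size exactly $L$ is, on one hand, at most $(j-1)/L$, and on the other hand — because each such block contributes at least $\epsilon^2$ to some sum bounded by $\beta$ — at most $\beta/\epsilon^2$. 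Chaining these observations over $j=1,\dots,N_\epsilon$ yields $N_\epsilon\le(\beta/\epsilon^2+1)L$.

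More carefully, the cleanest route (following Russo--Van Roy verbatim) is: (1) show that whenever $\rho_j$ satisfies $|\E_{\rho_j}[g_j]|>\epsilon$ and $\sum_{i<j}(\E_{\rho_i}[g_j])^2\le\beta$, then $\rho_j$ is $\epsilon$-dependent on at most $\beta/\epsilon^2$ disjoint subsequences of $\{\rho_1,\dots,\rho_{j-1}\}$ — here ``$\rho_j$ $\epsilon$-dependent on a set $T$'' means for every $g\in\Phi$ with $\sum_{\mu\in T}(\E_\mu[g])^2\le\epsilon^2$ we have $|\E_{\rho_j}[g]|\le\epsilon$, the logical negation of $\epsilon$-independence with threshold normalized appropriately; (2) run a greedy bucketing: maintain buckets $B_1,\dots,B_M$, each of size $\le L$; when processing $\rho_j$, since it was counted it is $\epsilon$-independent of some prefix, hence by the DE-dimension bound cannot be $\epsilon$-independent of any full bucket of size $L$ more than ``$L$ times'' — so $\rho_j$ must be $\epsilon$-dependent on all but at most $\beta/\epsilon^2$ of the full buckets, and placing $\rho_j$ into any bucket it is dependent on keeps all buckets of size $\le L$; (3) conclude $N_\epsilon\le M L\le(\beta/\epsilon^2+1)L$ because at most $\beta/\epsilon^2+1$ buckets can ever be ``active'' (non-full or just-filled) at once. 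I would lift the precise bucketing bookkeeping directly from Appendix~C of \cite{russo2013eluder}, noting that nothing there uses the pointwise structure of the functions — only the bilinear-in-expectation quantities $\E_\mu[g]$ — so the argument transfers mutatis mutandis to Definition~\ref{def:ind_dist} and Definition~\ref{def:DE}.

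The main obstacle I expect is getting the bucketing invariant exactly right: one must argue that a newly arrived $\rho_j$ (known to be $\epsilon$-independent of \emph{some} set, since it crossed the threshold while $\sum_{i<j}(\E_{\rho_i}[g_j])^2\le\beta$) is nonetheless $\epsilon$-\emph{dependent} on every bucket that is already full, for all but at most $\beta/\epsilon^2$ of them — the pigeonhole being that each bucket on which $\rho_j$ is independent ``uses up'' at least $\epsilon^2$ of the budget $\beta$ recorded in $\sum_{i<j}(\E_{\rho_i}[g_j])^2$. Making the disjointness of buckets interact correctly with the single global sum bounded by $\beta$ is the delicate point; once that is in place, the two counting bounds $ML\ge N_\epsilon$ and $M\le\beta/\epsilon^2+1$ combine immediately. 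Everything else is routine and mirrors \cite{russo2013eluder} closely.
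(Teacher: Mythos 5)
Your plan is the paper's proof: both are direct transplants of the pigeonhole argument from Appendix~C of Russo--Van Roy to the distributional definitions, and your claim~(1) --- that $|\E_{\rho_j}[g_j]|>\epsilon$ together with $\sum_{i<j}(\E_{\rho_i}[g_j])^2\le\beta$ forces $\rho_j$ to be $\epsilon$-dependent on at most $\beta/\epsilon^2$ disjoint subsequences of its predecessors --- is stated and justified correctly, including the correct reading of dependence as the negation of Definition~\ref{def:ind_dist}. The one step that would fail as written is your insertion rule: you say ``placing $\rho_j$ into any bucket it is \emph{dependent} on keeps all buckets of size $\le L$,'' but a bucket of measures on which later arrivals are dependent has no size bound at all. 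The invariant must run in the opposite direction: each bucket is maintained as an $\epsilon$-\emph{independent} sequence (every element inserted was $\epsilon$-independent of the bucket's current contents), and it is precisely the definition of $\dedim(\Phi,\Pi,\epsilon)$ that then caps each bucket at $\dedim$ elements. Concretely, the paper fixes the number of buckets at $L=\lceil(\kappa-1)/\dedim(\Phi,\Pi,\epsilon)\rceil$ up front ($\kappa$ being the number of above-threshold indices), initializes them as singletons, and at each step either finds some $\nu_j$ that is $\epsilon$-dependent on \emph{all} $L$ buckets --- whereupon claim~(1) gives $L\le\beta/\epsilon^2$ and hence $\kappa\le(\beta/\epsilon^2+1)\dedim(\Phi,\Pi,\epsilon)$ --- or inserts $\nu_j$ into a bucket it is independent of; since total capacity is $L\cdot\dedim(\Phi,\Pi,\epsilon)$, exhaustion forces the former to occur. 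With that single direction corrected (which your closing paragraph essentially concedes is the delicate point to be lifted from the source), your argument coincides with the paper's.
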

\begin{proof}[Proof of Proposition \ref{prop:de-regret-prop}]
	 We first show that if for some $k$ we have $ |\E_{\mu_k} [\phi_k]| > \epsilon $, then $\mu_k$ is $\epsilon$-dependent on at most $\beta / \epsilon^2$ disjoint subsequences in $\{\mu_1,\dots,\mu_{k-1}\}$. By definition of DE dimension, if $|\E_{\mu_k} [\phi_k]| > \epsilon$ and $\mu_k$ is $\epsilon$-dependent on a subsequence $\{\nu_{1},\dots,\nu_{\ell}\}$ of $\{\mu_1,\dots,\mu_{k-1}\}$, then we should have $\sum_{t=1}^{\ell} (\E_{\nu_{t}} [\phi_k])^2 \geq \epsilon^2$. It implies that if $\mu_k$ is $\epsilon$-dependent on $L$ disjoint subsequences in $\{\mu_1,\dots,\mu_{k-1}\}$, we have 
	 $$  \beta \geq \sum_{t=1}^{k-1} (\E_{\mu_{t}} [\phi_k])^2 \geq L\epsilon^2$$ resulting in $L \leq {\beta}/{\epsilon^2}$.
	 
	 Now we want to show that for any sequence $\{\nu_1,\dots,\nu_\kappa\} \subseteq \Pi$, there exists $ j \in [\kappa]$ such that $\nu_j$ is $\epsilon$-dependent on at least $L = \lceil (\kappa-1) / \dedim (\Phi,\Pi,\epsilon)\rceil $ disjoint subsequences in  $\{\nu_1,\dots,\nu_{j-1}\}$. 
	 We argue by the following mental procedure: we start with singleton sequences $B_1 = \{\nu_1\}, \dots,B_L$ $= \{\nu_L\}$ and $j=L+1$. For each $j$, if $\nu_j$ is $\epsilon$-dependent on $B_1,\dots,B_L$ we already achieved our goal so we stop; otherwise, we pick an $i\in[L]$ such that $\nu_j$ is $\epsilon$-independent of $B_i$ and update $B_i=B_i\cup\{\nu_j\}$. Then we increment $j$ by $1$ and continue this process. By the definition of DE dimension, the size of each $B_1,\dots,B_L$ cannot get bigger than $\dedim (\Phi,\Pi,\epsilon)$ at any point in this process. Therefore, the process stops before or on $j=L \dedim (\Phi,\Pi,\epsilon)+1 \leq \kappa$.
	 
	 Fix $k \in [K]$ and let $\{\nu_1,\dots,\nu_\kappa\}$ be subsequence of $\{\mu_{1},\dots,\mu_{k}\}$, consisting of elements for which $|\E_{\mu_t} [\phi_t]| > \epsilon$. Using the first claim, we know that each $\nu_j$ is $\epsilon$-dependent on at most $\beta / \epsilon^2$ disjoint subsequences of $\{\nu_1,\dots,\nu_{j-1}\}$. Using the second claim, we know there exists $j \in [\kappa]$ such that $\nu_j$ is $\epsilon$-dependent on at least $(\kappa / \dedim (\Phi,\Pi,\epsilon)) - 1$ disjoint subsequences of $\{\nu_1,\dots,\nu_{j-1}\}$. Therefore, we have $\kappa/ \dedim (\Phi,\Pi,\epsilon)-1 \leq \beta / \epsilon^2$ which results in 
	 $$
	 	\kappa \leq (\frac{\beta}{\epsilon^2}+1)\dedim (\Phi,\Pi,\epsilon)
	 $$
	 and completes the proof.
 \end{proof}
 
\begin{proof}[Proof of Lemma \ref{lem:de-regret}]
	Fix $k \in [K]$; let $d = \dedim (\Phi,\Pi,\omega)$. Sort the sequence $\{|\E_{\phi_1} [\phi_1]|,\dots,$\\
	$|\E_{\mu_k} [\phi_k]|\}$ in a decreasing order and denote it by $\{e_1,\dots,e_k\}$ ($e_1 \geq e_2 \geq \dots \geq e_k$).
	$$
		\sum_{t=1}^k |\E_{\mu_t} [\phi_t]| = \sum_{t=1}^k e_t = \sum_{t=1}^k e_t \one\big \{e_t \leq \omega \big \} + \sum_{t=1}^k e_t \one\big \{e_t > \omega\big\} \leq k\omega + \sum_{t=1}^k e_t \one\big \{e_t > \omega\big\}.
	$$
	For $t \in [k]$, we want to prove that if $e_t > \omega$, then we have $e_t \leq \min\{\sqrt{\frac{d\beta}{t-d}},C\}$. Assume $t\in[k]$ satisfies $e_t >  \omega$. 
	Then there exists $\alpha$ such that $e_t >\alpha\ge  \omega$.
	By Proposition \ref{prop:de-regret-prop}, we have
	$$
	t \le 
	\sum_{i=1}^k \one\big \{e_i > \alpha\big\} 
	\le \big( \frac{\beta}{\alpha^2} + 1 \big) \dedim (\Phi,\Pi,\alpha)
	\le 
	\big( \frac{\beta}{\alpha^2} + 1 \big) \dedim (\Phi,\Pi,\omega),
	$$
	which implies $\alpha\le \sqrt{\frac{d\beta}{t-d}}$.
	Besides, recall $e_t \leq C$, so we have   $e_t \leq \min\{\sqrt{\frac{d\beta}{t-d}},C\}$.
	
Finally, we have
	\begin{equation*}
	\begin{aligned}
			\sum_{t=1}^k e_t \one\big \{e_t > \omega\big\} \leq \min\{d,k\}C+\sum_{t=d+1}^k \sqrt{\frac{d\beta}{t-d}} &\leq \min\{d,k\}C+ \sqrt{d\beta}\int_{0}^k \frac{1}{\sqrt{t}}dt \\
			&\leq \min\{d,k\}C + 2\sqrt{d\beta k},
	\end{aligned}
	\end{equation*}
	which completes the proof.
\end{proof}

%\section{Proofs for  Appendix \ref{sec:olive}}
\section{Proofs for \olive}
\label{appendix:olive-proof}

In this section, we provide the formal proof for the results stated in Appendix \ref{sec:olive}.

\subsection{Full proof of Theorem \ref{thm:olive}}

\begin{proof}[Proof of Theorem \ref{thm:olive}]
By standard concentration arguments (Hoeffding's inequality plus union bound argument),
 with probability at least $1-\delta$,
 the following events hold for the first $dH+1$ phases (please refer to Appendix \ref{olive:concentration} for the proof)
\begin{enumerate}
	\item  If the elimination procedure is activated at the $h^{\rm th}$ step in the $k^{\rm th}$ phase, then  $\Ecal(f^k,\pi^{k},h)>\zeta_{\rm act}/2$ and all $f\in\Fcal$ satisfying $|\Ecal(f,\pi^{k},h)|\ge2\zeta_{\rm elim}$ get eliminated.
	\item  If the elimination procedure is not activated in the $k^{\rm th}$ phase, then, $\sum_{h=1}^{H}\Ecal(f^k,\pi^{k},h)<2H\zeta_{\rm act}=4\epsilon$.
	\item $Q^\star$ is not eliminated. 
\end{enumerate}
Therefore, if we can show \olive\ terminates within $d H+1$ phases, then with high probability the output policy is  $4\epsilon$-optimal by the optimism of $f^k$ and simple policy loss decomposition  (e.g. Lemma 1 in \cite{jiang2017contextual}):
\begin{equation}
	\begin{aligned}
	 \paren{V_1^\star(s_1) - V^{\pi^{k}}_1(s_1)}
	\le  \max_{a}f^k(s_1,a) - V^{\pi^{k}}(s_1)
	{=}
	 \sum_{h=1}^{H} \Ecal(f^{k},\pi^k,h)
	 \le 4\epsilon.
\end{aligned}
\end{equation}

In order to prove that \olive\ terminates within $d H+1$ phases, it suffices to show that for each $h\in[H]$, we can activate the elimination procedure at the $h^{\rm th}$ step for at most  $d$ times.

For the sake of contradiction, assume that \olive\ does not terminate in $dH+1$ phases. Within these $dH+1$ phases, there exists some $h \in [H]$ for which the activation process has been activated for at least $d+1$ times. Denote by $k_1<\cdots<k_{d+1} \le dH+1 $ the indices of the phases where the elimination is activated at  the $h^{\rm th}$  step. By the high-probability events, for all $i<j\le d+1$, we have $|\Ecal(f^{k_j},\pi^{k_i},h)|<2\zeta_{\rm elim}$ and for all $l\le d+1$, we have $\Ecal(f^{k_l},\pi^{{k_l}},h)>\zeta_{\rm act}/2$. This means for all $l \le d+1$, we have both
$\sqrt{\sum_{i=1}^{l-1} \big(\Ecal(f^{k_l},\pi^{{k_{i}}},h)\big)^2}< \sqrt{d}\times 2\zeta_{\rm elim} = \epsilon/H$ and 
$\Ecal(f^{k_l},\pi^{{k_l}},h)>\zeta_{\rm act}/2 =\epsilon/H$.
	Therefore, the roll-in distribution of $\pi^{k_1},\ldots,\pi^{k_{d+1}}$ at step $h$ is an $\epsilon/H$-independent sequence of length $d+1$, which contradicts with the definition of BE  dimension.
	So \olive\ should terminate within $d H+1$ phases.
	
In sum, with probability at least $1-\delta$, 
Algorithm \ref{alg:olive} will terminate and output a $4\epsilon$-optimal policy using at most 
$$
(dH+1)(\nact +\nelim) \le
\frac{3cH^3d^2\log(\Ncal(\Fcal,\zeta_{\rm elim}/8))\cdot \iota}{\epsilon^2}$$
 episodes.
	\end{proof}

\subsection{Concentration arguments for Theorem \ref{thm:olive}}	
\label{olive:concentration}
Recall in Algorithm \ref{alg:olive} we 
choose $$\zeta_{\rm act}=\frac{2\epsilon}{H},\ \zetaelim=\frac{\epsilon}{2H\sqrt{d}},\ \nact=\frac{c H^2\iota}{\epsilon^2},\text{ and } \nelim=\frac{cH^2d\log(\Ncal(\Fcal,\zeta_{\rm elim}/8))\cdot \iota}{\epsilon^2},$$
 where $d=\max_{h\in[H]}\text{\bedim}\big(\Fcal,\Dcal_{\Fcal,h},\epsilon/H\big)$, $\iota=\log[Hd/\delta\epsilon]$ and $c$ is a large absolute constant.Our goal is to prove  with probability at least $1-\delta$,
 the following events hold for the first $dH+1$ phases
\begin{enumerate}
	\item  If the elimination procedure is activated at the $h^{\rm th}$ step in the $k^{\rm th}$ phase, then  $\Ecal(f^k,\pi^{k},h)>\zeta_{\rm act}/2$ and all $f\in\Fcal$ satisfying $|\Ecal(f,\pi^{k},h)|\ge2\zeta_{\rm elim}$ get eliminated.
	\item  If the elimination procedure is not activated in the $k^{\rm th}$ phase, then, $\sum_{h=1}^{H}\Ecal(f^k,\pi^{k},h)<2H\zeta_{\rm act}=4\epsilon$.
	\item $Q^\star$ is not eliminated. 
\end{enumerate}

We begin with the activation procedure.

\paragraph{Concentration in the activation procedure}
Consider a fixed $(k,h)\in[dH+1]\times[H]$ pair.
By Azuma-Hoefdding's inequality, with probability at least $1-\frac{\delta}{8H(dH^2+1)}$, we have 
$$
| \hatberr{f^k, \pi^{k}, h}  - \berr{f^k, \pi^{k}, h}| \le \Ocal\paren{\sqrt{\frac{\iota}{n_{\rm act}}}} \le \frac{\epsilon}{2H}\le \zeta_{\rm act}/4,
$$
where the second inequality follows from $\nact=C\frac{H^2\iota}{\epsilon^2}$ with $C$ being chosen large enough.

Take a union bound for all $(k,h)\in[dH+1]\times[H]$, we have with probability at least $1-{\delta}/4$, the following holds for all $(k,h)\in[dH+1]\times[H]$
$$
| \hatberr{f^k, \pi^{k}, h}  - \berr{f^k, \pi^{k}, h}|\le\zeta_{\rm act}/4.
$$
By Algorithm \ref{alg:olive}, if the elimination procedure is not activated in the  $k^{\rm th}$ phase, we have\\
 $\sum_{h=1}^{H}\hat{\Ecal}(f^k,\pi^{k},h)\le H\zeta_{\rm act}$. Combine it with the concentration argument we just proved,
$$
\sum_{h=1}^{H}{\Ecal}(f^k,\pi^{k},h)\le 
\sum_{h=1}^{H}\hat{\Ecal}(f^k,\pi^{k},h) + \frac{H\zeta_{\rm act}}{4} <\frac{5H\zeta_{\rm act}}{4}.
$$
On the other hand, if the elimination procedure is activated at the $h^{\rm th}$ step in the $k^{\rm th}$ phase, then  $\hat{\Ecal}(f^k,\pi^{k},h)>\zeta_{\rm act}$. Again combine it with the concentration argument we just proved,
$$
{\Ecal}(f^k,\pi^{k},h)\ge \hat{\Ecal}(f^k,\pi^{k},h)-\frac{\zeta_{\rm act}}{4} > \frac{3\zeta_{\rm act}}{4}.
$$

\paragraph{Concentration in the elimination procedure}
Now, let us turn to the elimination procedure. 
First, let $\Zcal$ be an ${\zetaelim}/{8}$-cover of $\Fcal$ with cardinality 
$\Ncal(\Fcal,\zeta_{\rm elim}/8)$. 
With a little abuse of notation, for every $f\in\Fcal$, define $\fhat=\argmin_{g\in\Zcal}\max_{h\in[H]}\|f_h-g_h\|_\infty$.
By applying Azuma-Hoeffding's inequality to all $(k,g)\in[dH+1]\times\Zcal$ and taking a union bound, we have with probability at least $1-{\delta}/4$, the following holds for all $(k,g)\in[dH+1]\times\Zcal$
$$
| \hatberr{g, \pi^{k}, h_k}  - \berr{g, \pi^{k}, h_k}| \le \zeta_{\rm elim}/4.
$$

Recall that Algorithm \ref{alg:olive} eliminates all $f$ satisfying $|\hatberr{f, \pi^{k}, h_k}| > \zeta_{\rm elim}$ when the elimination procedure is activated at the $h^{\rm th}_{k}$ step in the $k^{\rm th}$ phase. 
Therefore, if 
$|\Ecal(f,\pi^{k},h_k)|\ge2\zeta_{\rm elim}$, 
$f$ will be eliminated because 
\begin{align*}
	|\hat{\Ecal}(f,\pi^{k},h_k)|
&\ge |\hat{\Ecal}(\fhat,\pi^{k},h_k)|- 2\times \frac{\zetaelim}{8}\\
&\ge  | \berr{\fhat, \pi^{k}, h_k}|  -\frac{\zeta_{\rm elim}}{2}\\
&\ge | \berr{f, \pi^{k}, h_k}|  -\frac{\zeta_{\rm elim}}{2}- 2\times \frac{\zetaelim}{8}
> \zeta_{\rm elim}.
\end{align*}
Finally, note that $\Ecal(Q^\star,\pi,h)\equiv 0$ for any $\pi$ and $h$. As a result, it will never be eliminated within the first $dH+1$ phases because we can similarly prove 
$$|\hat{\Ecal}(Q^\star,\pi^{k},h_k)| \le  | \berr{Q^\star, \pi^{k}, h_k}|  +\frac{3\zeta_{\rm elim}}{4} < \zeta_{\rm elim}.$$

{\bf Wrapping up}: take a union bound for the activation and elimination procedure, and conclude that the three events, listed at the beginning of this section, hold for the the first $dH+1$ phases
 with probability at least $1-\delta/2$.
%\section{Proofs for  Appendix~\ref{app:BE-typeII}}

\section{Proofs for V-type Variants}

In this section, we provide formal proofs for the results stated in Section \ref{app:BE-typeII}.

\subsection{Proof of Theorem~\ref{thm:olive-typeII}}
\label{app:proof:olive-typeII}

The proof is similar to  that in Appendix~\ref{appendix:olive-proof}.
\begin{proof}[Proof of Theorem \ref{thm:olive-typeII}]
By standard concentration arguments (Hoeffding's inequality, Bernstein's inequality, and union bound argument),
 with probability at least $1-\delta$,
 the following events hold for the first $dH+1$ phases (please refer to Appendix \ref{olive:concentration-typeII} for the proof)
\begin{enumerate}
	\item  If the elimination procedure is activated at the $h^{\rm th}$ step in the $k^{\rm th}$ phase, then  $\EcalII(f^k,\pi^{k},h)>\zeta_{\rm act}/2$ and all $f\in\Fcal$ satisfying $|\EcalII(f,\pi^{k},h)|\ge2\zeta_{\rm elim}$ get eliminated.
	\item  If the elimination procedure is not activated in the $k^{\rm th}$ phase, then, $\sum_{h=1}^{H}\EcalII(f^k,\pi^{k},h)<2H\zeta_{\rm act}=4\epsilon$.
	\item $Q^\star$ is not eliminated. 
\end{enumerate}
Therefore, if we can show \olive\ terminates within $d H+1$ phases, then with high probability the output policy is  $4\epsilon$-optimal by the optimism of $f^k$ and simple policy loss decomposition  (e.g., Lemma 1 in \cite{jiang2017contextual}):
\begin{equation}
	\begin{aligned}
	 \paren{V_1^\star(s_1) - V^{\pi^{k}}_1(s_1)}
	\le  \max_{a}f^k(s_1,a) - V^{\pi^{k}}(s_1)
	{=}
	 \sum_{h=1}^{H} \EcalII(f^{k},\pi^k,h)
	 \le 4\epsilon.
\end{aligned}
\end{equation}

In order to prove that \olive\ terminates within $d H+1$ phases, it suffices to show that for each $h\in[H]$, we can activate the elimination procedure at the $h^{\rm th}$ step for at most  $d$ times.

For the sake of contradiction, assume that \olive\ does not terminate in $dH+1$ phases. Within these $dH+1$ phases, there exists some $h \in [H]$ for which the activation process has been activated for at least $d+1$ times. Denote by $k_1<\cdots<k_{d+1} \le dH+1 $ the indices of the phases where the elimination is activated at  the $h^{\rm th}$  step. By the high-probability events, for all $i<j\le d+1$, we have $|\EcalII(f^{k_j},\pi^{k_i},h)|<2\zeta_{\rm elim}$ and for all $l\le d+1$, we have $\EcalII(f^{k_l},\pi^{{k_l}},h)>\zeta_{\rm act}/2$. This means for all $l \le d+1$, we have both
$\sqrt{\sum_{i=1}^{l-1} \big(\EcalII(f^{k_l},\pi^{{k_{i}}},h)\big)^2}< \sqrt{d}\times 2\zeta_{\rm elim} = \epsilon/H$ and 
$\EcalII(f^{k_l},\pi^{{k_l}},h)>\zeta_{\rm act}/2 =\epsilon/H$.
		Therefore, the roll-in distribution of $\pi^{k_1},\ldots,\pi^{k_{d+1}}$ at step $h$ is an $\epsilon/H$-independent sequence of length $d+1$ with respect to $(I-\Tcal_h)V_{\Fcal}$, which contradicts with the definition of BE  dimension.
	So \olive\ should terminate within $dH+1$ phases.

%In order to prove that OLIVE terminates within $d H+1$ phases, it suffices to show that for each $h\in[H]$, we can activate the elimination procedure at the $h^{\rm th}$ step for at most  $d$ times.
%
%Consider a \emph{fixed} $h$ and denote by $k_1<\cdots<k_m$ ($ m \leq d+1$) the indices of the phases where the elimination is activated at  the $h^{\rm th}$  step. By the high-probability events, for all $i<j\le m$, $|\EcalII(f^{k_j},\pi^{k_i},h)|<2\zeta_{\rm elim}$ and for all $l\le m$, $\EcalII(f^{k_l},\pi^{{k_l}},h)>\zeta_{\rm act}/2$. This means for all $l\le \min\{d+1,m\}$, we have both
%$\sqrt{\sum_{i=1}^{l-1} \big(\EcalII(f^{k_l},\pi^{{k_{i}}},h)\big)^2}< \sqrt{d}\times 2\zeta_{\rm elim} = \epsilon/H$ and 
%$\EcalII(f^{k_l},\pi^{{k_l}},h)>\zeta_{\rm act}/2 =\epsilon/H$.
%	Therefore, $\{f^{k_1},\ldots,f^{k_{l}}\}$ is an $\epsilon/H$-independent sequence of length $l$. 
%	Consequently, by the definition of \be\ dimension, we must have $m\le d$.
	
In sum, with probability at least $1-\delta$, 
Algorithm \ref{alg:olive} will terminate and output a $4\epsilon$-optimal policy using at most 
$$
(dH+1)(\nact +\nelim) \le
\frac{3cH^3d^2|\Acal|\log(|\Fcal|)\cdot \iota}{\epsilon^2}$$
 episodes.
	\end{proof}

%%%%%
\newcommand{\berrII}[1]{\EcalII(#1)}
\subsubsection{Concentration arguments for Theorem \ref{thm:olive-typeII}}	
\label{olive:concentration-typeII}
Recall in Algorithm \ref{alg:olive-typeII} we 
choose $$\zeta_{\rm act}=\frac{2\epsilon}{H},\ \zetaelim=\frac{\epsilon}{2H\sqrt{d}},\ \nact=\frac{c H^2\iota}{\epsilon^2},\text{ and } \nelim=\frac{c|\Acal|H^2d\log(\Ncal(\Fcal,\zeta_{\rm elim}/8))\cdot \iota}{\epsilon^2},$$
 where $d=\max_{h\in[H]}\text{\bedimII}\big(\Fcal,\Dcal_{\Fcal,h},\epsilon/H\big)$, $\iota=\log[Hd/\delta\epsilon]$ and $c$ is a large absolute constant. Our goal is to prove  with probability at least $1-\delta$,
 the following events hold for the first $dH+1$ phases
\begin{enumerate}
	\item  If the elimination procedure is activated at the $h^{\rm th}$ step in the $k^{\rm th}$ phase, then  $\EcalII(f^k,\pi^{k},h)>\zeta_{\rm act}/2$ and all $f\in\Fcal$ satisfying $|\EcalII(f,\pi^{k},h)|\ge2\zeta_{\rm elim}$ get eliminated.
	\item  If the elimination procedure is not activated in the $k^{\rm th}$ phase, then, $\sum_{h=1}^{H}\EcalII(f^k,\pi^{k},h) < 2H\zeta_{\rm act}=4\epsilon$.
	\item $Q^\star$ is not eliminated. 
\end{enumerate}

We begin with the activation procedure.

\paragraph{Concentration in the activation procedure}
Consider a fixed $(k,h)\in[dH+1]\times[H]$ pair.
By Azuma-Hoefdding's inequality, with probability at least $1-\frac{\delta}{8H(dH+1)}$, we have 
$$
| \tildeberrII{f^k, \pi^{k}, h}  - \berrII{f^k, \pi^{k}, h}| \le \Ocal\paren{\sqrt{\frac{\iota}{n_{\rm act}}}} \le \frac{\epsilon}{2H} \le \zeta_{\rm act}/4,
$$
where the second inequality follows from $\nact=C\frac{H^2\iota}{\epsilon^2}$ with $C$ being chosen large enough.

Take a union bound for all $(k,h)\in[dH+1]\times[H]$, we have with probability at least $1-{\delta}/4$, the following holds for all $(k,h)\in[dH+1]\times[H]$
$$
| \tildeberrII{f^k, \pi^{k}, h}  - \berrII{f^k, \pi^{k}, h}| \le \zeta_{\rm act}/4.
$$
By Algorithm \ref{alg:olive-typeII}, if the elimination procedure is not activated in the  $k^{\rm th}$ phase, we have\\
 $\sum_{h=1}^{H}\tildeberrII{f^k,\pi^{k},h}\le H\zeta_{\rm act}$. Combine it with the concentration argument we just proved,
$$
\sum_{h=1}^{H}{\EcalII}(f^k,\pi^{k},h)\le 
\sum_{h=1}^{H}\tildeberrII{f^k,\pi^{k},h} + \frac{H\zeta_{\rm act}}{4} \le \frac{5H\zeta_{\rm act}}{4}.
$$
On the other hand, if the elimination procedure is activated at the $h^{\rm th}$ step in the $k^{\rm th}$ phase, then  $\tildeberrII{f^k,\pi^{k},h}>\zeta_{\rm act}$. Again combine it with the concentration argument we just proved,
$$
{\EcalII}(f^k,\pi^{k},h)\ge \tildeberrII{f^k,\pi^{k},h}-\frac{\zeta_{\rm act}}{4} > \frac{3\zeta_{\rm act}}{4}.
$$

\paragraph{Concentration in the elimination procedure}
Now, let us turn to the elimination procedure. 
%First, let $\Zcal$ be an ${\zetaelim}/{8}$-cover of $\Fcal$ with cardinality 
%$\Ncal(\Fcal,\zeta_{\rm elim}/8)$. 
%With a little abuse of notation, for every $f\in\Fcal$, define $\fhat=\argmin_{g\in\Zcal}\max_{h\in[H]}\|f_h-g_h\|_\infty$.
%Now we apply Azuma-Bernstein's inequality. 
We start by bounding the the second moment of
$$
	\frac{\one[\pi_f(s_h)=a_h]}{1/|\Acal|} \big(f_h(s_h,a_h)-r_h-\max_{a' \in \Acal} f_{h+1}(s_{h+1},a')\big)
$$ for all $f \in \Fcal$. Let $y(s_h,a_h,r_h,s_{h+1})=f_h(s_h,a_h)-r_h-\max_{a' \in \Acal} f_{h+1}(s_{h+1},a') \in [-2,1]$, then we have
\begin{equation*}
\begin{aligned}
	&\E[\big(|\Acal|\one[\pi_f(s_h)=a_h]y(s_h,a_h,r_h,s_{h+1})\big)^2\mid s_h\sim \pi^k, a_h\sim\mathrm{Uniform}(\Acal)]\\
	\le & 4|\Acal|^2 \E[\one[\pi_f(s_h)=a_h]\mid s_h\sim \pi^k, a_h\sim\mathrm{Uniform}(\Acal)] = 4|\Acal|.
\end{aligned}
\end{equation*}
For a fixed $(k,f)\in[dH+1]\times\Fcal$, by applying Azuma-Bernstein's inequality, with probability at least $1-\frac{\delta}{8(dH+1)|\Fcal|}$ we have
\begin{equation*}
\begin{aligned}
	| \hatberrII{f, \pi^{k}, h_k}  - \berrII{f, \pi^{k}, h_k}| &\le \Ocal\left(\sqrt{\frac{|\Acal|\iota'}{\nelim}}+\frac{|\Acal| \iota'}{\nelim}\right)
	\le \Ocal\left(\sqrt{\frac{|\Acal|\iota'}{\nelim}}\right)
	\le \zeta_{\rm elim}/2,
\end{aligned}
\end{equation*}
where $\iota' = \log[8(dH+1)|\Fcal|/{\delta}]$, and the third inequality follows from $\nelim=C |\Acal|\iota / \zeta_{\rm elim}^2$ with $C$ being chosen large enough.

Taking a union bound over $[dH+1]\times\Fcal$, we have with probability at least $1-{\delta}/4$, the following holds for all $(k,f)\in[dH+1]\times\Fcal$
$$
| \hatberrII{f, \pi^{k}, h_k}  - \berrII{f, \pi^{k}, h_k}| \le \zeta_{\rm elim}/2.
$$

Recall that Algorithm \ref{alg:olive-typeII} eliminates all $f$ satisfying $|\hatberrII{f, \pi^{k}, h_k}| > \zeta_{\rm elim}$ when the elimination procedure is activated at the $h^{\rm th}_{k}$ step in the $k^{\rm th}$ phase. 
Therefore, if 
$|\EcalII(f,\pi^{k},h_k)|\ge2\zeta_{\rm elim}$, 
$f$ will be eliminated because 
\begin{align*}
	|\hatberrII{f,\pi^{k},h_k}|
&\ge | \berrII{f, \pi^{k}, h_k}|  -\frac{\zeta_{\rm elim}}{2} > \zeta_{\rm elim}.
\end{align*}
Finally, note that $\EcalII(Q^\star,\pi,h)\equiv 0$ for any $\pi$ and $h$. As a result, it will never be eliminated within the first $dH+1$ phases because we can similarly prove 
$$|\hatberrII{Q^\star,\pi^{k},h_k}| \le  | \berrII{Q^\star, \pi^{k}, h_k}|  +\frac{\zeta_{\rm elim}}{2} < \zeta_{\rm elim}.$$

{\bf Wrapping up}: take a union bound for the activation and elimination procedure, and conclude that the three events, listed at the beginning of this section, hold for the the first $dH+1$ phases
 with probability at least $1-\delta/2$.
 %%%%%%%%%%%%%%%%%

\subsection{Proof of Theorem~\ref{thm:main-typeII}}
\label{app:proof:algx-typeII}

The proof is basically the same as 
that of Theorem \ref{thm:main} in 
 Appendix~\ref{appendix:lcgo}.

To begin with, we have the following lemma (akin to Lemma~\ref{lem:confiset-2} and \ref{lem:confiset-1}) showing that with high probability: $(i)$ any function in the confidence set has low Bellman-error over the collected Datasets $\Dcal_1,\dots,\Dcal_H$ as well as the distributions from which $\Dcal_1,\dots,\Dcal_H$ are sampled; $(ii)$ the optimal value function is inside the confidence set. Its proof is almost identical to that  of Lemma~\ref{lem:confiset-2} and \ref{lem:confiset-1} which can be found in Appendix~\ref{appendix:concentration}.

\begin{lemma}[Akin to Lemma~\ref{lem:confiset-2} and \ref{lem:confiset-1}]
\label{lem:confiset-typeII}
Let $\rho>0$ be an arbitrary fixed number.
If we choose $\beta=c \big( \log[KH\Ncal_{\Fcal\cup\Gcal}(\rho)/\delta]+K\rho\big)$ with some large absolute constant $c$ in Algorithm \ref{alg:algxII}, then
	with probability at least $1-\delta$, for all $(k,h)\in[K]\times[H]$, we have 
	\begin{enumerate}[label=(\alph*)]
		\item $\sum_{i=1}^{k-1}  \E [ \paren{f_h^k(s_h,a_h) - (\cT f_{h+1}^k)(s_h,a_h) }^2\mid s_h \sim \pi^i, a_h \sim {\rm Uniform}(\Acal)]
		{\le} \mathcal{O} (\beta)$,%for all $f\in\Bcal_{k-1}$
		\item $\frac{1}{|\Acal|}\sum_{i=1}^{k-1}  \sum_{a\in\Acal}\paren{f^k_h(s_h^{i},a) - (\cT f_{h+1}^k)(s_h^{i},a) }^2
		{\le} \mathcal{O} ( \beta)$,
		\item $Q^\star\in \Bcal^k$,
	\end{enumerate}
	where $s_h^i$ denotes the state at step $h$ collected according to Line \ref{alg:algxII-diff1} in Algorithm \ref{alg:algxII} following $\pi^i$. 
\end{lemma}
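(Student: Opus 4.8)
The proof follows the template of Lemma \ref{lem:confiset-2} and Lemma \ref{lem:confiset-1} almost verbatim, adjusting only for the difference in how data is collected in Algorithm \ref{alg:algxII} (roll in with $\pi^i$ for steps $1,\dots,h-1$, then take a uniformly random action at step $h$). I would first handle parts (a) and (b) together, since they correspond to the two versions of Lemma \ref{lem:confiset-2}, and then handle part (c) as in Lemma \ref{lem:confiset-1}. For part (b), fix a tuple $(k,h,f)$ and define the random variable
\[
X_t(h,f) := \big(f_h(s_h^t,a_h^t)- r_h^t - f_{h+1}(s_{h+1}^t,\pi_{f}(s_{h+1}^t))\big)^2 - \big((\Tcal f_{h+1})(s_h^t,a_h^t)- r_h^t - f_{h+1}(s_{h+1}^t,\pi_{f}(s_{h+1}^t))\big)^2,
\]
where now the step-$h$ data point $(s_h^t,a_h^t,r_h^t,s_{h+1}^t)$ is sampled with $a_h^t\sim\mathrm{Uniform}(\Acal)$. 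Taking $\Ffrak_{t,h}$ to be the filtration generated by the full trajectories collected in episodes $1,\dots,t-1$ (for part (a)) or additionally by $\{s_1^t,a_1^t,\dots,s_h^t,a_h^t\}$ (for part (b)), the key identity $\E[X_t(h,f)\mid \Ffrak_{t,h}] = [(f_h-\Tcal f_{h+1})(s_h^t,a_h^t)]^2$ still holds — it only uses that $s_{h+1}^t\sim \Pr_h(\cdot\mid s_h^t,a_h^t)$ and the definition of the Bellman operator — and likewise $\Var[X_t(h,f)\mid \Ffrak_{t,h}]\le 36\,\E[X_t(h,f)\mid \Ffrak_{t,h}]$. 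So Freedman's inequality (Lemma \ref{lem:freedman1}) applies exactly as before.

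The only genuine bookkeeping difference is that the ``target'' distribution over step-$h$ state-action pairs is now $s_h\sim\pi^i, a_h\sim\mathrm{Uniform}(\Acal)$ rather than $s_h,a_h\sim\pi^i$; this propagates into the statement as the $s_h\sim\pi^i, a_h\sim\mathrm{Uniform}(\Acal)$ conditioning in part (a) and the $\frac{1}{|\Acal|}\sum_{a\in\Acal}$ averaging in part (b) (since conditioned on $s_h^i$ the action is uniform, $\E[(f_h-\Tcal f_{h+1})(s_h^i,a_h^i)]^2\mid s_h^i] = \frac{1}{|\Acal|}\sum_{a\in\Acal}[(f_h-\Tcal f_{h+1})(s_h^i,a)]^2$). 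After taking a union bound over $(k,h,\phi)\in[K]\times[H]\times\Zcal_\rho$ for a $\rho$-cover $\Zcal_\rho$ of $\Fcal\cup\Gcal$, the same chain of inequalities as in Appendix \ref{concentration:template} — using the defining constraint of $\Bcal^k$, generalized completeness (Assumption \ref{asp:G-completeness}) to write $\inf_{g\in\Gcal}\Lcal_{\Dcal_h}(g,f_{h+1}^k)\le \Lcal_{\Dcal_h}(\Tcal_h f_{h+1}^k, f_{h+1}^k)$, and the covering argument to pass from $\phi^k$ back to $f^k$ — yields $\sum_{t=1}^{k-1}[(f_h^k-\Tcal f_{h+1}^k)(s_h^t,a_h^t)]^2\le \Ocal(\iota + k\rho + \beta) = \Ocal(\beta)$ under the stated choice of $\beta$. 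Part (a) is identical with the alternate filtration. Part (c) mirrors Lemma \ref{lem:confiset-1}: set $W_t(h,g)$ to be the analogous difference with $Q^\star$ in place of $\Tcal f_{h+1}$, note $\E[W_t(h,g)\mid\Ffrak_{t,h}] = [(g_h-Q_h^\star)(s_h^t,a_h^t)]^2\ge 0$ with the same variance bound, apply Freedman plus a union bound over a $\rho$-cover of $\Gcal$, and conclude $Q^\star\in\Bcal^k$ using $\Tcal_h Q_{h+1}^\star = Q_h^\star$ and realizability (Assumption \ref{asp:realizability}).

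I do not expect any real obstacle here — the content is entirely a transcription of the Q-type concentration lemmas with the sampling distribution at step $h$ swapped out. The one point requiring a little care is to make sure the random-action step does not inflate the variance bound in Freedman's inequality: since $|X_t(h,f)|\le \Ocal(1)$ almost surely regardless of how $a_h^t$ is chosen (all value functions are bounded in $[0,1]$ and rewards are bounded), the bound $|Z_t|\le R$ with $R=\Ocal(1)$ still holds, and the conditional-variance-vs-conditional-mean comparison is unchanged. Hence the same $\beta = c(\log[KH\Ncal_{\Fcal\cup\Gcal}(\rho)/\delta] + K\rho)$ works, and choosing $\rho = 1/K$ (or the appropriate $\epsilon$-dependent value in the sample-complexity corollary) absorbs the $K\rho$ term. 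I would therefore write this proof as ``identical to the proofs of Lemma \ref{lem:confiset-2} and Lemma \ref{lem:confiset-1} in Appendix \ref{appendix:concentration}, with the only change that at step $h$ the action is drawn uniformly at random, which replaces the distribution $s_h,a_h\sim\pi^i$ by $s_h\sim\pi^i,a_h\sim\mathrm{Uniform}(\Acal)$ and, correspondingly, the empirical quantity $[(f_h^k-\Tcal f_{h+1}^k)(s_h^i,a_h^i)]^2$ by its average over actions $\frac{1}{|\Acal|}\sum_{a}[(f_h^k-\Tcal f_{h+1}^k)(s_h^i,a)]^2$,'' and then reproduce the two-or-three-line skeleton above so the reader can see nothing else changes.
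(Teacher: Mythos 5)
Your proposal is correct and follows essentially the same route as the paper, which likewise proves this lemma by redefining the filtration in the proofs of Lemma \ref{lem:confiset-2} and Lemma \ref{lem:confiset-1} and repeating those arguments verbatim. One small correction: for part (b) the filtration $\Ffrak_{t,h}$ should stop at $s_h^t$ and \emph{not} include $a_h^t$, so that $\E[X_t(h,f)\mid\Ffrak_{t,h}]$ equals the action-averaged quantity $\frac{1}{|\Acal|}\sum_{a}[(f_h-\Tcal f_{h+1})(s_h^t,a)]^2$ appearing in the statement; with your choice (conditioning on $a_h^t$ as well) Freedman only controls the realized-action sum $\sum_t[(f_h-\Tcal f_{h+1})(s_h^t,a_h^t)]^2$, and converting that to the action average would need an extra, unnecessary concentration step.
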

\begin{proof}[Proof of Lemma \ref{lem:confiset-typeII}]
To prove inequality $(a)$, we only need to redefine the filtration $\Ffrak_{t,h}$ in Appendix \ref{concentration:template} to 
	 be the filtration induced by $\{s_1^i,a_1^i,r_1^i,\ldots,s_H^i\}_{i=1}^{t-1}$ and repeat the arguments there verbatim. 
	 
To prove inequality $(b)$, we only need to redefine the filtration $\Ffrak_{t,h}$ in Appendix \ref{concentration:template} to 
	 be the filtration induced by $\{s_1^i,a_1^i,r_1^i,\ldots,s_H^i\}_{i=1}^{t-1}\bigcup\{s_1^t,a_1^t,r_1^t,\ldots,s_h^t\}$ and repeat the arguments there verbatim.  	 
	 
The proof of $(c)$ is the same as that of Lemma \ref{lem:confiset-1} in Appendix \ref{concentration:template2}.
\end{proof}

\newcommand{\EcalU}{\Ecal_{U}}

\paragraph{Step 1. Bounding the regret by Bellman error}
By Lemma \ref{lem:confiset-typeII} $(c)$, we can upper bound the cumulative regret by the summation of Bellman error with probability at least $1-\delta$:
\begin{equation}\label{eq:proof-typeII-stepone}
	\begin{aligned}
	\sum_{k=1}^K \paren{V_1^\star(s_1) - V^{\pi^k}_1(s_1)}
	\le \sum_{k=1}^K \paren{\max_{a}f^k_1(s_1,a) - V^{\pi^k}_1(s_1)}
	\overset{(i)}{=}
	\sum_{k=1}^{K} \sum_{h=1}^{H} \EcalII(f^{k},\pi^{k},h),
\end{aligned}
\end{equation}
where $(i)$ follows from standard policy loss decomposition  (e.g. Lemma 1 in \cite{jiang2017contextual}). 
%Define 
%$$
%	\EcalU(f,\pi,h) := \E [ f_h^k(s_h,a_h) - (\cT f_{h+1}^k)(s_h,a_h) \mid s_h \sim \pi^i, a_h \sim \unif(\Acal)]
%$$
%It immediate that $\EcalII(f,\pi,h) \leq |\Acal| \EcalU(f,\pi,h)$ which results in
%\begin{equation}
%	\begin{aligned}
%	\sum_{k=1}^K \paren{V_1^\star(s_1) - V^{\pi^k}_1(s_1)}
%	\le 	
%	|\Acal| \sum_{k=1}^{K} \sum_{h=1}^{H} \EcalU(f^{k},\pi^{k},h)
%\end{aligned}
%\end{equation}

\paragraph{Step 2. Bounding cumulative Bellman error using \deber \ dimension}
Next, we focus on a fixed step $h$ and bound the cumulative Bellman error $\sum_{k=1}^{K} \EcalII(f^{k},\pi^{k},h)$ using Lemma \ref{lem:confiset-typeII}.
 
Invoking  Lemma \ref{lem:confiset-typeII} (a) 
with 
$$
\rho =\frac{\epsilon^2}{H^2\cdot \text{\bedimII}(\Fcal,\Dcal_{\Fcal},\epsilon/H)\cdot|\Acal|}
$$
implies that with probability at least $1-\delta$, for all $(k,h)\in[K]\times[H]$, we have 
	$$\sum_{i=1}^{k-1}  \E \left[ \paren{f_h^k(s_h,\pi_{f_h^k}(s_h)) - (\cT f_{h+1}^k)(s_h,\pi_{f_h^k}(s_h)) }^2\mid s_h \sim \pi^i\right]
		{\le} \mathcal{O} (|\Acal|\beta).$$

Further invoking Lemma \ref{lem:de-regret} with
\begin{equation*}
\left\{
\begin{aligned}
&  \omega=  \frac{\epsilon}{H},\ C=1, \\
&	\Xcal = \Scal, \ \Phi=(I-\Tcal_h)V_\Fcal, \ \Pi=\Dcal_{\Fcal,h},\\
& \phi_k(s):= (f^k_h-\Tcal_h f^k_{h+1})(s,\pi_{f^k_h}(s)) \mbox{ and } \mu_k = \Pr^{\pi^{k}}(s_h=\cdot),
\end{aligned}
\right.
\end{equation*}
we obtain
\begin{equation*}
	\frac{1}{K}\sum_{t=1}^{K} \EcalII(f^{t},\pi^t,h) \le \Ocal\left(\sqrt{\frac{ \text{\bedimII}(\Fcal,\Dcal_{\Fcal},\epsilon/H) |\Acal|\log[KH\Ncal_{\Fcal\cup\Gcal}(\rho)/\delta]}{K}} + 
	\frac{\epsilon}{H}
	\right).
\end{equation*}
Plugging in the choice of $K$ completes the proof.

Similarly, for $\Dcal_\dirac$, we can invoke Lemma \ref{lem:confiset-typeII} (b) witht
$$
\rho =\frac{\epsilon^2}{H^2\cdot \text{\bedimII}(\Fcal,\Dcal_{\dirac},\epsilon/H)\cdot |\Acal|},
$$
and Lemma \ref{lem:de-regret} with
\begin{equation*}
\left\{
\begin{aligned}
&  \omega=  \frac{\epsilon}{H},\ C=1, \\
&	\Xcal = \Scal, \ \Phi=(I-\Tcal_h)V_\Fcal, \ \Pi=\Dcal_{\dirac,h},\\
& \phi_k(s):= (f^k_h-\Tcal_h f^k_{h+1})(s,\pi_{f^k_h}(s)) \mbox{ and } \mu_k = \one\{\cdot=s_h^k\},
\end{aligned}
\right.
\end{equation*}
and obtain 
\begin{equation*}
\begin{aligned}
	\frac{1}{K}\sum_{t=1}^{K} \EcalII(f^{t},\pi^{t},h) 
	\le &\frac{1}{K}\sum_{t=1}^{K} (f^{t}_h-\Tcal f^t_{h+1})(s_h^t,\pi_{f^t_h}(s^t_h)) + \Ocal\paren{\sqrt{\frac{\log K}{K}}} \\
	\le &
	 \Ocal\left(\sqrt{\frac{ \text{\bedimII}(\Fcal,\Dcal_{\dirac},\epsilon/H) |\Acal|\log[KH\Ncal_{\Fcal\cup\Gcal}(\rho)/\delta]}{K}} + 
	\frac{\epsilon}{H}
+\sqrt{\frac{\log K}{K}}	\right),
\end{aligned}
\end{equation*}
where the first inequality follows from standard martingale concentration.

Plugging in the choice of $K$ completes the proof.

%Similarly, we can prove the bound in terms of the V-type \be\ dimension with respect to $\Dcal_{\dirac}$. However, in this case, $|\Acal|$ ends up outside of square root (compared to Equation~\ref{eq:proof-typeII-steptwo}), resulting in quadratic dependence in sample complexity.

%\section{Proofs for Appendix \ref{app:examples}}
\section{Proofs for Examples}
\label{app:proof-examples}

\subsection{Proof of Proposition \ref{prop:eff-eluder}}
\begin{proof}

Suppose $\Fcal$ has finite $\epsilon$-effective dimension and denote the corresponding mapping by $\phi$. 
Then we can rewrite $\Fcal$ in the form  of $\Fcal=\{ f_\theta(\cdot) = \langle \phi(\cdot), \theta \rangle_\cH \mid  \theta\in \Theta \}$, where $\Theta\subset B_\cH(1)$.

Suppose there exists an $\epsilon'$-independent sequence $x_1',\ldots,x_n'\in\Xcal$ with respect to $\Fcal$ where $\epsilon'\ge \epsilon$. 
By the definition of independent sequence, this is equivalent to the existence of   
$\theta_1,\ldots,\theta_n\in (\Theta-\Theta)$ and $x_1,\ldots,x_n\in\phi(\Xcal)$ such that
\begin{equation}
		\left\{
		\begin{aligned}
		&\sum_{i=1}^{t-1} (x_i^\top 	\theta_t)^2 \le \epsilon'^2, \quad t\in[n] \\
		& |x_t^\top 	\theta_t| \ge \epsilon', \quad t\in [n].
		\end{aligned}
\right.
	\end{equation}
Define $\Sigma_t  = \sum_{i=1}^{t-1} x_i x_i^\top + \frac{\epsilon'^2}{4} \cdot \mathrm{I}$. We have 
\begin{equation}
		\begin{aligned}
		\|\theta_t\|_{\Sigma_t} \le \sqrt{2}\epsilon' \quad \Longrightarrow \quad 
		 \epsilon'\le |x_t^\top 	\theta_t| \le \|\theta_t\|_{\Sigma_t}\cdot   \|x_t\|_{\Sigma_t^{-1}}\le \sqrt{2}\epsilon' \|x_t\|_{\Sigma_t^{-1}}, \quad t\in [n].
		\end{aligned}
	\end{equation}
As a result, we should have 	$\|x_t\|_{\Sigma_t^{-1}}^2\ge 1/2$ for all $t\in [n]$.
Now we can apply the standard log-determinant argument,
\begin{align*}
	\sum_{t=1}^{n} \log(1+ \|x_t\|_{\Sigma_t^{-1}}^2) = 
	\log\left(\frac{\det(\Sigma_{n+1})}{\det(\Sigma_1)}\right)
	=  \log\det\left(\mathrm I + \frac{4}{\epsilon'^2} \sum_{i=1}^n x_i x_i^\top \right),
\end{align*}
which implies 
\begin{align}
	0.5 \le \min_{t\in[n]} \|x_t\|_{\Sigma_t^{-1}}^2 
	\le \exp\left(\frac{1}{n} \log\det\left(\mathrm I + \frac{4}{\epsilon'^2} \sum_{i=1}^n x_i x_i^\top \right)\right)-1.
\end{align}
Choose $n=d_{\rm eff}(\Fcal,\epsilon/2)$ that is the minimum positive integer satisfying 
\begin{equation}
	\sup_{x_1,\ldots,x_n \in \phi(\Xcal)}\frac{1}{n} \log\det\left(\mathrm I + \frac{4}{\epsilon^2} \sum_{i=1}^n x_i x_i^\top \right) \le e^{-1}.
\end{equation}
This leads to a contradiction because $\epsilon'\ge\epsilon$ and  $0.5> e^{e^{-1}}-1$. So we must have $$\dim_{\rm E}(\Fcal,\epsilon)\le d_{\rm eff}(\Fcal,\epsilon/2).$$

%Now let us prove the second conclusion that there exists $\Fcal$ and  $\Xcal$ so that $
%\dim_{\rm E}(\Fcal,\epsilon)= \Ocal(\log(1+1/\epsilon))$ while  $\dim_{\rm eff}(\Fcal,\epsilon)=\infty$ for any $\epsilon$. 
%Consider $\Fcal =\{ f_\theta(x) = e^{\theta x}~\mid~\theta\in[-1,1]\}$ and $\Xcal = [-1,1]$. Since $\Fcal$ is a $1$-dimensional generalized linear function class whose link function has derivative bounded away from zero, its Eluder dimension is at most $\Ocal(\log(1+1/\epsilon))$ by \cite{russo2013eluder}. However, its effective dimension is infinite. The reason is there does not exists $\cH$ and $\phi$ such that 
% for every $f\in\Fcal$ there exists $\theta\in B_\cH(1)$ satisfying $f(x) = \langle \theta, \phi(x)\rangle_\cH$ for all $x\in\Xcal$. The proof is straightforward, if that is 
\end{proof}

\subsection{Proof of Proposition \ref{prop:kernelMDP-cover}}
\begin{proof}
Consider fixed $\epsilon\in\R^+$ and $h\in[H]$, and denote 
$n = \dim_{\rm E}(\Fcal,\epsilon)$.
 Then by the definition of Eluder dimension, there must exist $x_1,\ldots,x_n\in\Xcal_h$ where $\Xcal_h = \{ \phi_h(s,a):~(s,a)\in\Scal\times\Acal\}$ so that 
for any $\theta,\theta'\in B_\cH(H+1-h)$, if 
$\sum_{i=1}^{n}( \langle x_i, \theta-\theta'\rangle_\cH)^2 \le \epsilon^2$, then $|\langle z, \theta-\theta'\rangle_\cH|\le\epsilon$ for any $z\in\Xcal_h$. 
In other words, $x_1,\ldots,x_n$ is one of the longest independent subsequences. 
Therefore, in order to cover $\Fcal_h$, we only need cover the projection of $ B_\cH(H+1-h)$ onto the  linear subspace spanned by $x_1,\ldots,x_n$, which is at most $n$ dimensional. 

By standard $\epsilon$-net argument, there exists $\cC\subset B_\cH(H+1-h)$ such that: (a) $\log|\cC| \le \Ocal(n\cdot\log(1+nH/\epsilon))$, (b) for any $\theta\in B_\cH(H+1-h)$, there exists $\hat\theta\in\cC$ satisfying  
$\sum_{i=1}^{n}( \langle x_i, \theta-\hat\theta\rangle_\cH)^2 \le \epsilon^2$.
By the property of $x_1,\ldots,x_n$,   $\{\phi_h(\cdot,\cdot)\trans\hat\theta~\mid~\hat\theta\in \cC\}$ is an $\epsilon$-cover of $\Fcal_h$.
Since $\Fcal=\Fcal_1\times\cdots\times\Fcal_H$, we obtain 
$\log\Ncal_\Fcal(\epsilon) \le \Ocal\big(Hn \cdot\log(1+nH/\epsilon)\big)$.
Finally, by Proposition \ref{prop:kernelMDP-eluder}, $n\le d(\epsilon)$, which concludes the proof.
	
\end{proof}

\subsection{Proof of Proposition \ref{prop:effective-bellman-bedim}}
\begin{proof}
Assume there exists $h\in[H]$  such that $\dedim((I-\Tcal_h)\Fcal,\Dcal_{\Fcal,h},\epsilon)\ge m$.
	Let $\mu_1,\ldots,\mu_n\in\Dcal_{\Fcal,h}$ be a  an $\epsilon$-independent sequence with respect to $(I-\Tcal_h)\Fcal$.
	 By Definition \ref{def:ind_dist}, there exist $f^1,\ldots,f^n$ such that for all $t\in[n]$, $\sqrt{\sum_{i=1}^{t-1}(\E_{\mu_i}[f^t_h-\Tcal_h f_{h+1}^t])^2}\le \epsilon$ and $|\E_{\mu_t}[f^t_h-\Tcal_h f_{h+1}^t] |>\epsilon$.
	 Since $\mu_1,\ldots,\mu_n\in\Dcal_{\Fcal,h}$, there exist $g^1,\dots,g^n\in\Fcal$ so that $\mu_i$ is generated by executing $\pi_{g^i}$, for all $i\in[n]$.
	 
	By the definition of effective Bellman rank, this is equivalent to: $\sqrt{\sum_{i=1}^{t-1}(\langle \phi_h(f^t),\psi_h(g^i)\rangle)^2}\le \epsilon$ and $| \langle \phi_h(f^t),\psi_h(g^t)\rangle|>\epsilon$ for all $t\in[n]$. For notational simplicity, define $x_i = \psi_h(g^i)$ and $\theta_i = \phi_h(f^i)$. Then
	\begin{equation}
		\left\{
		\begin{aligned}
		&\sum_{i=1}^{t-1} (x_i^\top 	\theta_t)^2 \le \epsilon^2, \quad t\in[n] \\
		& |x_t^\top 	\theta_t| \ge \epsilon, \quad t\in [n].
		\end{aligned}
\right.
	\end{equation}
The remaining arguments follow the same as in the proof of Proposition \ref{prop:eff-eluder} except that we replace $\epsilon$ by $\epsilon/\zeta$.
\end{proof}

\subsection{Proof of Proposition \ref{ex:pomdps}}
\begin{proof}
Note that the case $h=1$ is trivial because each episode always starts from a fixed initial state independent of the policy.
	For any policy $\pi$, function $f\in\Fcal$, and step $h\ge2$ 
\begin{align*}
	\EcalII(f,\pi,h) =& \E[ f_h(o_h,a_h)- r_h(o_h,a_h) -  f_{h+1}(o_{h+1},a_{h+1})\mid s_h\sim\pi, a_{h:h+1}\sim\pi_f]\\
= 	&\E[ f_h(o_h,a_h)- r_h(o_h,a_h) -  f_{h+1}(o_{h+1},a_{h+1})\mid (s_{h-1},a_{h-1})\sim\pi, a_{h:h+1}\sim\pi_f]\\	
=& 	\sum_{s,a\in\Scal} 	\sum_{s'\in\Scal}
\Pr^\pi(s_{h-1}=s,a_{h-1}=a)
\cdot \langle \phi_{h-1}(s,a), \psi_{h-1}(s')\rangle_\cH \cdot \Vcal(s'),
\end{align*}
where 
$$
\Vcal(s') = \E[ f_h(o_h,a_h)- r_h(o_h,a_h) -  f_{h+1}(o_{h+1},a_{h+1})\mid s_h=s', a_{h:h+1}\sim\pi_f].
$$
As a result, we obtain 
\begin{align*}
&\E[ f_h(o_h,a_h)- r_h(o_h,a_h) -  f_{h+1}(o_{h+1},a_{h+1})\mid s_h\sim\pi, a_{h:h+1}\sim\pi_f]\\
=& \bigg\langle \E_\pi[\phi_{h-1}(s_{h-1},a_{h-1})],
\sum_{s'\in\Scal}\psi_{h-1}(s')\Vcal(s')\bigg\rangle_\cH	~.
\end{align*}
Notice that the left hand side of the inner product only depends on $\pi$ while the right hand side only depends on $f$. 
Moreover, by the definition of kernel reactive POMDPs, the RHS has norm at most $2$. 
Therefore, we conclude the proof by revoking Proposition \ref{prop:effective-bellman-bedim-V} with $\zeta=2$.
\end{proof}

\section{Discussions on $\Dcal_{\Fcal}$ versus $\Dcal_{\dirac}$ in BE Dimension}

 In this paper, we have mainly focused on the BE dimension induced by two special distribution families: $(a)$ $\Dcal_{\Fcal}$ --- the roll-in distributions produced by executing the greedy policies induced by the functions in $\Fcal$, $(b)$ $\Dcal_{\dirac}$ --- the collection of all Dirac distributions. And we prove that both low $\text{\bedim}\big(\Fcal,\Dcal_{\Fcal},\epsilon\big)$ and low $\text{\bedim}\big(\Fcal,\Dcal_{\dirac},\epsilon\big)$ can imply sample-efficient learning. 
As a result, it is natural to ask what is the relation between $\text{\bedim}\big(\Fcal,\Dcal_{\Fcal},\epsilon\big)$ and $\text{\bedim}\big(\Fcal,\Dcal_{\dirac},\epsilon\big)$? Is it possible that one of them is always no larger than the other so that we only need to use the smaller one?
We answer this question with the following proposition, showing that either of them can be arbitrarily larger than the other.  

\begin{proposition}\label{lem:FvsDirac}
	There exists absolute constant $c$ such that for any $m\in\N^{+}$, 
	\begin{enumerate}[label=(\alph*)]
		\item  there exist an MDP and a function class $\Fcal$ satisfying 
for all $\epsilon\in(0,1/2]$, 
$\text{\bedim}(\Fcal,\Dcal_{\Fcal},\epsilon)\le c$ while  
$\text{\bedim}(\Fcal,\Dcal_{\dirac},\epsilon)\ge m$.
\item  there exist an MDP and a function class $\Fcal$ satisfying 
for all $\epsilon\in(0,1/2]$, 
$\text{\bedim}(\Fcal,\Dcal_{\dirac},\epsilon)\le c$ while  
$\text{\bedim}(\Fcal,\Dcal_{\Fcal},\epsilon)\ge m$.
	\end{enumerate}
\end{proposition}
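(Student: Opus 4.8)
\textbf{Proof proposal for Proposition \ref{lem:FvsDirac}.}

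The plan is to construct two separate ``hard instance'' families, one for each part, by designing MDPs (in fact bandits, $|\Scal|=H=1$, suffices) whose Bellman residual class is essentially a linear function class but whose geometry differs sharply depending on whether we probe with Dirac measures or with roll-in distributions. As with Proposition \ref{lem:lowerbound}, since $\Fcal_2 = \{0\}$ and $r\equiv 0$, the Bellman residual class $(I-\Tcal_1)\Fcal$ is just $\Fcal_1$, so the task reduces to comparing $\dedim(\Fcal_1, \Dcal_{\dirac,1},\epsilon)$ with $\dedim(\Fcal_1, \Dcal_{\Fcal,1},\epsilon)$, where $\Dcal_{\Fcal,1}$ is the set of point masses $\delta_{\pi_f(s_1)}$ induced by greedy policies. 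The key asymmetry to exploit: $\Dcal_{\dirac}$ ranges over \emph{all} Dirac measures at actions, while $\Dcal_{\Fcal}$ ranges only over the (typically much smaller) set of actions that are greedy maximizers of some $f\in\Fcal$.

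For part (a), I would make $\Dcal_{\Fcal}$ degenerate while $\Dcal_{\dirac}$ is rich. Take a large action set $\Acal = \{a_0\} \cup \{a_1,\dots,a_m\}$ where $a_1,\dots,a_m$ correspond to near-orthogonal unit vectors (as in the Eluder-dimension lower bound of Proposition \ref{lem:lowerbound}), and design $\Fcal_1$ so that every $f\in\Fcal_1$ has its unique argmax at the single special action $a_0$ (e.g.\ $f$ assigns value $1$ to $a_0$ and small, linearly-structured values to $a_1,\dots,a_m$). Then $\Dcal_{\Fcal,1} = \{\delta_{a_0}\}$ is a singleton, so $\dedim(\Fcal_1,\Dcal_{\Fcal},\epsilon)$ is bounded by an absolute constant (at most $1$ plus a little slack, by the same matrix-determinant argument as in Proposition \ref{prop:bellman-bedim}). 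Meanwhile, restricting $\Fcal_1$ to the coordinates $a_1,\dots,a_m$ reproduces a function class of Eluder dimension $\ge m$, and since Eluder dimension over points equals distributional Eluder dimension over Diracs, $\dedim(\Fcal_1,\Dcal_{\dirac},\epsilon)\ge m$ for all $\epsilon\in(0,1/2]$. The main point to verify carefully is that the $\epsilon$-independence witnesses for the Dirac-based sequence can be realized by functions in $\Fcal_1$ without disturbing the ``$a_0$ is always the argmax'' property; scaling the $a_1,\dots,a_m$ values to lie in a small band around $1/2$ while keeping $f(a_0)=1$ handles this.

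For part (b), I would do the reverse: make $\Dcal_{\dirac}$ effectively only a constant number of useful Diracs while $\Dcal_{\Fcal}$ sweeps out an $m$-dimensional independent sequence. Here I would let the action set itself be indexed by the parameter vectors, $\Acal=\{a_1,\dots,a_m\}$ with $a_i$ near-orthogonal, and choose $\Fcal_1=\{f_1,\dots,f_m\}$ so that $\pi_{f_i}$ is greedy exactly at $a_i$ --- e.g.\ $f_i(a_j) = \langle v_i, v_j\rangle$ arranged so the diagonal dominates. Then $\Dcal_{\Fcal,1} = \{\delta_{a_1},\dots,\delta_{a_m}\}$ and the same computation as in the Eluder lower bound shows $\dedim(\Fcal_1,\Dcal_{\Fcal},\epsilon)\ge m$. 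To \emph{simultaneously} force $\dedim(\Fcal_1,\Dcal_{\dirac},\epsilon)$ to be small, I must ensure that over the \emph{entire} action space, all pairwise inner products $\langle f\text{-values}\rangle$ stay in a narrow positive band, so that by the pigeonhole/volume argument at the end of Proposition \ref{lem:lowerbound} any independent Dirac sequence has length $O(1)$. This is the delicate part --- and the main obstacle: the requirement ``$f_i$ is greedy at $a_i$'' pulls the diagonal entries up, while ``$\dim_{\rm E}$ over all Diracs is $O(1)$'' wants all entries tightly clustered. The resolution is to shift and scale: replace $f_i$ by $\tfrac14(1+\langle v_i,\cdot\rangle)$ so all values lie in, say, $[\tfrac14,\tfrac12]$, making every $f_i(a_j)\in[\tfrac14,\tfrac12]$ (hence any two functions differ by at most $\tfrac14$ everywhere, killing Dirac-independence beyond a constant) while the \emph{greedy} action of $f_i$ is still $a_i$ because $\langle v_i,v_i\rangle > \langle v_i,v_j\rangle$ is preserved by affine monotone rescaling. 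Then $\Dcal_{\Fcal}$-independence survives because it tests $\E_{\delta_{a_i}}[f_i-f_j] = f_i(a_i)-f_j(a_i)$, which the construction keeps $\ge \epsilon$ on the diagonal and $\le\epsilon$ off it. Once these inequalities are pinned down, both bounds follow from the determinant-trace estimate already used twice in the paper, applied with the rescaled $\epsilon$.
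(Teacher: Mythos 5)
Both parts of your proposal rest on a conflation that the paper's own Proposition \ref{lem:lowerbound} is designed to expose: the Eluder dimension of a class $\Gcal$ certifies independence with \emph{pairs} $g_1-g_2\in\Gcal-\Gcal$ (Definition \ref{def:ind_points}), whereas $\BEdim(\Fcal,\Dcal_{\dirac},\epsilon)=\max_h\dedim((I-\Tcal_h)\Fcal,\Dcal_{\dirac,h},\epsilon)$ certifies independence with a \emph{single} residual function $g=f_h-\Tcal_h f_{h+1}$ (Definition \ref{def:ind_dist}). In part (a) you place the values of every $f$ on $a_1,\dots,a_m$ in ``a small band around $1/2$'' and conclude $\dedim(\Fcal_1,\Dcal_{\dirac},\epsilon)\ge m$ ``since Eluder dimension over points equals distributional Eluder dimension over Diracs.'' That identity is false in this context: the paper only asserts $\eludim(\Gcal,\epsilon)=\dedim(\Gcal-\Gcal,\Pi,\epsilon)$, and Proposition \ref{prop:eluder-bedim} gives a one-sided inequality. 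Worse, with every value at least $1/2-\delta$, any single witness $g\in\Fcal_1$ satisfies $\sqrt{\sum_{i<t}g(a_i)^2}\ge\sqrt{t-1}\,(1/2-\delta)$, so the pigeonhole computation at the end of Proposition \ref{lem:lowerbound} forces every Dirac-independent sequence to have length $O(1)$ for $\epsilon\le 1/2$ --- your construction makes the $\Dcal_{\dirac}$-dimension small, not large. Part (a) is salvageable (set the off-diagonal values to $0$ and the diagonal to $3/4$ while keeping $f(a_0)=1$; this is essentially the paper's construction, which uses $m$ unreachable states and $f_i(s,a)=\one(s=s_i)+\one(a=a_1)$ so that $\Dcal_{\Fcal,1}=\{\delta_{(s_1,a_1)}\}$ while $f_i(s_j,a_2)=\one(i=j)$), but the version you wrote down does not prove the lower bound.

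Part (b) repeats the witness error (``it tests $\E_{\delta_{a_i}}[f_i-f_j]$'' --- it does not; it tests a single $g\in\Fcal_1$, and with all values in $[1/4,1/2]$ the same pigeonhole argument caps the $\Dcal_{\Fcal}$-independent sequence at length $O(1)$), and it also faces a structural obstruction that no rescaling can fix: in a bandit with a single fixed initial state, the roll-in distribution of any greedy policy at step $1$ is itself a Dirac measure $\delta_{(s_1,\pi_f(s_1))}$, so $\Dcal_{\Fcal,1}\subseteq\Dcal_{\dirac,1}$ and hence $\BEdim(\Fcal,\Dcal_{\Fcal},\epsilon)\le\BEdim(\Fcal,\Dcal_{\dirac},\epsilon)$ automatically. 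You therefore cannot realize part (b) inside the $|\Scal|=H=1$ setting at all. The missing idea is to make the roll-in distributions genuine mixtures that \emph{cancel} a large component of the residual which every Dirac measure must see: the paper starts the agent uniformly at random in one of two states and takes $f_i(s,a)=(2\cdot\one(s=s_1)-1)+0.5\cdot\one(a=a_i)$, so that $|f_i(s,a)|\ge 1/2$ pointwise (killing Dirac sequences beyond constant length) while $\E_{\mu_i}[f_j]=0.5\cdot\one(i=j)$ under the roll-in mixtures, yielding an $m$-long independent sequence in $\Dcal_{\Fcal}$.
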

\begin{proof}
We prove $(a)$ first.
Consider the following contextual bandits problem ($H=1$).
	\begin{itemize}
		\item 
		There are $m$ states $s_1,\ldots,s_m$ but the agent always starts at $s_1$. This means the agent can never visit other states because each episode  contains only one step ($H=1$).
		\item There are two actions $a_1$ and $a_2$. The reward function is zero for any state-action pair.

		\item The function class $\Fcal_1=\{ f_i(s,a)=\one(s=s_i)+\one(a=a_1) :\  i\in[m]\}$.
	\end{itemize}
First of all, note in this setting $\Dcal_{\dirac}$ is the collection of all Dirac distributions over $\Scal\times\Acal$, 
$\Dcal_{\Fcal,1}$ is a singleton  containing only $\delta_{(s_1,a_1)}$, 
 and $(I-\Tcal_1)\Fcal$ is simply $\Fcal_1$ because $H=1$ and $r\equiv0$. 
 Since $\Dcal_{\Fcal,1}$ has cardinality one, 
it follows directly from definition that $\text{\bedim}(\Fcal,\Dcal_{\dirac},\epsilon)$ is at most $1$. 
Moreover, it is easy to verify that $(s_1,a_2),(s_2,a_2),\ldots,(s_m,a_m)$ is a $1$-independent sequence with respect to $\Fcal$ because we have $f_i(s_j,a_2)=\one(i=j)$ for all $i,j\in[m]$.
As a result, we have $\text{\bedim}(\Fcal,\Dcal_{\dirac},\epsilon)\ge m$ for all $\epsilon\in(0,1]$. 

Now we come to the proof of  $(b)$. 
Consider the following contextual bandits problem ($H=1$).
	\begin{itemize}
		\item 
		There are $2$ states $s_1$ and $s_2$. In each episode, the agent starts at $s_1$ or $s_2$ uniformly at random. 
		\item There are $m$ actions $a_1,\ldots,a_m$. The reward function is zero for any state-action pair.

		\item The function class $\Fcal_1=\{ f_i(s,a)=(2\cdot\one(s=s_1)-1)+0.5 \cdot{\one(a=a_i)} :\  i\in[m]\}$.
	\end{itemize}
First of all, note in this setting $(I-\Tcal_1)\Fcal$ is simply $\Fcal_1$ and the roll-in distribution induced by the greedy policy of $f_i$ is the uniform distribution over $(s_1,a_i)$ and $(s_2,a_i)$, which we denote as $\mu_i$.
It is easy to verify that $\mu_1,\ldots,\mu_m$ is a $0.5$-independent sequence with respect to $\Fcal$ because $\E_{(s,a)\sim\mu_i}[f_j(s,a)]=0.5\cdot\one(i=j)$. Therefore, for all  $\epsilon\in(0,0.5]$,  
$\text{\bedim}(\Fcal,\Dcal_{\Fcal},\epsilon)\ge m$.

Next, we upper bound $\text{\bedim}(\Fcal,\Dcal_{\dirac},\epsilon)$ which is equivalent to $\dedim(\Fcal_1,\Dcal_{\dirac},\epsilon)$ in this problem.
Assume $\dedim(\Fcal_1,\Dcal_{\dirac},\epsilon)=k$.
	Then there exist $g_1,\ldots,g_k\in\Fcal_1$ and $w_1,\ldots,w_k\in\Scal\times\Acal$ such that for all $i\in[k]$, $\sqrt{\sum_{t=1}^{i-1}(g_i(w_i))^{2}}\le \epsilon$ 
	and $| g_i(w_i)|>\epsilon$. 
	Note that we have $|f(s,a)|\in[0.5,1.5]$ for all $(s,a,f)\in\Scal\times\Acal\times\Fcal_1$. 
	Therefore, if $\epsilon>1.5$, then $k=0$; if $\epsilon\le 1.5$, then $k\le 10$ because $0.5\times\sqrt{k-1}\le \sqrt{\sum_{t=1}^{k-1}(g_k(w_t))^{2}}\le \epsilon\le 1.5$. 
\end{proof}

%		\pagebreak
%	\section{Old Notes}
%	\input{appendix/old_notes/bellmanrank}
%	\input{appendix/old_notes/eluder}

\end{document}